\documentclass[sn-mathphys,Numbered]{sn-jnl}


\usepackage{graphicx}%
\usepackage{multirow}%
\usepackage{amsmath,amssymb,amsfonts}%
\usepackage{amsthm}%
\usepackage{mathrsfs}%
\usepackage[title]{appendix}%
\usepackage{xcolor}%
\usepackage{textcomp}%
\usepackage{manyfoot}%
\usepackage{booktabs}%
\usepackage{listings}%



\usepackage{figlatex}  

\usepackage{enumitem}
\usepackage[utf8]{inputenc} 
\usepackage[T1]{fontenc}    
\usepackage{hyperref}       
\usepackage{url}            
\usepackage{amsfonts}       
\usepackage{nicefrac}       
\usepackage{microtype}      
\usepackage{xcolor}         


\usepackage{textcomp} 
\usepackage[english]{babel} 
\usepackage[autolanguage]{numprint} 
\usepackage{hyperref} 
\usepackage{verbatim} 
\usepackage{amsmath,amsthm} 
\usepackage{dsfont}
\usepackage[linesnumbered,ruled]{algorithm2e} 
\usepackage{xspace}

\usepackage{tikz}
\usepackage{caption}
\usepackage{float}

\theoremstyle{definition} 
\newtheorem{Def}{Definition}[section] 
\theoremstyle{plain} 
\newtheorem{Pro}[Def]{Proposition} 
\newtheorem{Lem}[Def]{Lemma} 
\newtheorem{The}[Def]{Theorem} 
\newtheorem{Cor}[Def]{Corollary} 
\theoremstyle{remark} 

\newcommand{\N}{\mathbb{N}}

\newcommand{\E}{{\mathbb{E}}}
\newcommand{\Pb}{{\mathbb{P}}}
\newcommand{\1}{{\mathds{1}}}
\newcommand{\prt}[1]{\left(#1\right)}
\newcommand{\brac}[1]{\left[#1\right]}
\newcommand{\croc}[1]{\left\{#1\right\}}

\newcommand{\bs}{{s}}

\newcommand{\Reg}{\mbox{Reg}}
\newcommand{\bx}{\mathbf{x}}
\newcommand{\tm}{\tau_{\rm mix}}
\newcommand{\mrate}{\rho} 
\newcommand{\sm}{\nu} 
\newcommand{\gain}{g}
\newcommand{\rcost}{\gamma_{{\rm reject}}} 
\newcommand{\hcost}{\gamma_{{\rm hold}}} 
\newcommand{\module}{m} 

\newtheorem{theorem}{Theorem}
\newtheorem{proposition}[theorem]{Proposition}%

\raggedbottom

\begin{document}

\title{Learning Optimal Admission Control in Partially Observable Queueing Networks}


\author[1]{\fnm{Jonatha} \sur{Anselmi}}\email{jonatha.anselmi@inria.fr}
\equalcont{These authors contributed equally to this work.}

\author[1]{\fnm{Bruno} \sur{Gaujal}}\email{bruno.gaujal@inria.fr}
\equalcont{These authors contributed equally to this work.}

\author*[1]{\fnm{Louis-S\'ebastien} \sur{Rebuffi}}\email{louis-sebastien.rebuffi@univ-grenoble-alpes.fr}

\affil*[1]{\orgname{Univ.\ Grenoble Alpes, Inria, CNRS, Grenoble INP, LIG,} \city{Grenoble}, \postcode{38000}, \country{France}}




\abstract{We present an efficient  reinforcement learning algorithm that learns the optimal admission control policy in a partially observable queueing network.
Specifically, only the arrival and departure times from the network are observable, and optimality  refers to the average holding/rejection cost in infinite horizon.

While reinforcement learning in Partially Observable Markov Decision Processes (POMDP) is prohibitively expensive in general, we show that our algorithm has a regret that only depends \emph{sub-linearly} on the maximal number of jobs in the network, $S$. In particular, in contrast with existing regret analyses, our regret bound does not depend on the \emph{diameter} of the underlying Markov Decision Process~(MDP), which in most  queueing systems is at least exponential in~$S$.

The novelty of our approach is to leverage Norton's equivalent theorem for closed product-form queueing networks and an efficient reinforcement learning algorithm for  MDPs with the structure of birth-and-death processes.}

\keywords{Queueing Networks, Reinforcement Learning, Admission Control}



\maketitle

\section{Introduction}

Research in reinforcement learning in Markov Decision Processes (MDPs) has been thriving since the  work of Walkins \cite{Walkins_1989} on Q-learning, the celebrated model-free learning algorithm.
%
%
Since then, several extensions of Q-learning~\cite{Jin-2019}, Bayesian~\cite{ouyang-2017,TSDE_2017} or model-based~\cite{jaksch-2010} approaches
have been proposed to improve learning efficiency \cite{Fruit-2018,Tossou-2019} up to the point where the \emph{regret} of the most recent algorithms is ``asymptotically optimal'' in the sense that it matches
a universal lower bound:
The regret of the best learning algorithm with respect to an optimal policy is $\tilde{O}( \sqrt{DSAT} )$\footnote{The $\tilde{O}$ notation is a variant of big-O that ignores logarithmic factors.}, where $S$ is the number of states, $A$ the number of actions, $T$ the time horizon and $D$ the \emph{diameter} of the MDP; we point the reader to~\cite{wei_model-free_2020} for a recent overview of this quest for optimal regret.

Since this problem has reached a satisfactory solution, the following natural question arises:
Can one learn \emph{efficiently} the optimal policy of an MDP not only when the rewards and the transition kernel are unknown \emph{but also when the state is partially observable}?
Recently, this question has been investigated under certain assumptions on the structure of model parameters~\cite{Jin-2020,Lazaric16,GuoDB16}.
In this paper, we address this question
in the context of queueing networks where we assume that the learner has only access to the \emph{total} number of jobs in the network, and this makes our problem fall in the family of Partially Observable MDPs (POMDPs).

\subsection{Reinforcement learning in POMDPs}

It is well-known that POMDPs are prohibitively expensive to solve.
If the parameters are known, the problem of computing an optimal policy is PSPACE-complete even in finite horizon \cite{PapTsi}. Furthermore, it is NP-hard to compute the optimal memoryless policy \cite{Nikos2012}.
In reinforcement learning, where (some of) the model parameters are unknown,
the lower bound on the average-case complexity developed in \cite[Propositions~1 and~2]{Jin-2020} confirms with no surprise that reinforcement learning in POMDPs remains intractable.
Matter of fact, the design of effective exploration--exploitation strategies in POMDPs is still relatively unexplored; see~\cite[Section 1]{Lazaric16} for a detailed discussion.
In the attempt to reduce this computational burden, researchers focused on reinforcement learning in \emph{subclasses} of POMDPs~\cite{Jin-2020}, and we will also follow this approach.
The algorithm in \cite{resets05} assumes POMDPs without resets and has sample complexity scaling exponentially with a certain horizon time.
The Bayesian algorithms proposed in \cite{NIPS2007_3b3dbaf6,Poupart2008ModelbasedBR} learn POMDPs but bounds on the mean regret remain unknown for these approaches.
%
A sample-efficient algorithm for episodic finite POMDPs is given in \cite{Jin-2020}. Here, it is assumed that the number of observations is larger than the number of latent states.

The works above have focused on reinforcement learning over a finite or discounted horizon. In contrast, we will be interested in the (undiscounted) infinite horizon case, which is technically more challenging.
In infinite horizon, a POMDP algorithm based on spectral methods is proposed in \cite{Lazaric16}.
For this algorithm, the authors find an order-optimal regret bound with respect to the optimal memoryless policy.
However, it exhibits a linear dependence on the diameter $D$ of the underlying MDP.
This dependence makes this type of bounds not interesting in the context of queueing systems as the diameter is usually exponential in the number of states~\cite{anselmi-2022}.
Although the additional assumptions on the structure of the model mitigate, to some extent, the intrinsic complexity of POMDPs, learning algorithms with regret $\tilde{O}(\sqrt{DSAT})$ have remained elusive for all but trivial cases to the best of our knowledge.

\subsection{Contribution and methodology}

In this paper, we propose a learning algorithm for the optimal job-admission policy in a partially observable queueing network with regret $\Reg(T) \leq \tilde O (\sqrt{ST})$.
Thus, our main contribution is a learning algorithm with a regret bound that
 does not depend on the diameter $D$,  and  whose dependence on the state space is very small.

Optimal admission control  is one of the most classical control problems in queues. It has been investigated in several works; see, e.g., \cite{Borgs-2014,Xia} and the references therein. However, these works  consider the case where the model parameters are known, i.e., no learning mechanism is used.
The novelty of our approach is to leverage i) Norton's equivalence theorem for closed product-form queueing networks \cite{Norton} and ii) the efficiency of reinforcement learning in MDPs with the structure of birth-and-death processes \cite{anselmi-2022}.
More specifically,
our result is achieved by using Norton's theorem to replace the whole network by a single load-dependent queue in its stationary regime and relies on the mixing time $\tm $ of the network to apply this equivalence every $\tm $ time-steps.
The key observation is that
Norton's theorem helps us to somewhat cast the original partially-observable MDP to a standard (fully-observable) MDP.
In other words, the resulting asymptotically equivalent POMDP becomes an MDP with the structure of a birth and death process. This structure is then exploited to construct tight bounds on the regret of our algorithm by controlling the bias of the current policy as well as its stationary measure.


%

\subsection{Organization}

The remainder of the paper is organized as follows.
The model of the queueing network, its practical motivation and Norton's equivalent queue are presented in Section \ref{sec:network}.
Section~\ref{sec:MDP} presents the problem addressed in the paper in detail, Section~\ref{sec:algo} is dedicated to the presentation of our learning algorithm (UCRL-M) and Section~\ref{sec:regret} to the analysis of its regret.
In the latter, we state our main result in Theorem~\ref{th:main}.
Then, Section~\ref{sec:mixing} discusses some technical aspects of our regret bound. Section~\ref{sec:experiment} showcases the behaviour of the algorithm on a multi-tier queueing network, and,
finally, Section~\ref{sec:conclusion} draws the conclusions of our work.

\section{Admission control in a queueing network}
 \label{sec:network}


We consider a  Jackson network with~$N$ queues (or stations) having service rates $\mu_1^o,\ldots,\mu_N^o$, a routing probability matrix~$L=[L_{i,j}:0\le i,j\le N]$ and exogenous arrivals occurring with rate~$\lambda$.  Here,
$L_{0,j}$ (resp. $L_{i,0}$) represents the probability that a job joins queue~$j$ from outside (resp. leaves the network after service at~ queue $i$).
To guarantee stability, i.e., positive recurrence of the underlying Markov chain, we require that $\lambda_i^o < \mu_i^o$, for all~$i$, where $\lambda_i^o$ is the arrival rate at queue~$i$. It is given by the unique solution of the traffic equations $\lambda_i^o = \lambda L_{0,i} + \sum_{j}\lambda_j^o L_{j,i} $, for all $i$.

We further assume that the total number of jobs in the network cannot exceed~$S$.
Under this constraint, the global system can be  seen as a \emph{closed} network with $S$ jobs. This network is identical to the original one except for an auxiliary queue, say queue 0, that represents the outside world with service rate $\lambda$. The departures of queue 0 correspond to the arrivals of the initial open  network (see Figure~\ref{fig:MDP}).

  \vspace{-0.2cm}
\begin{figure}[hbtp]
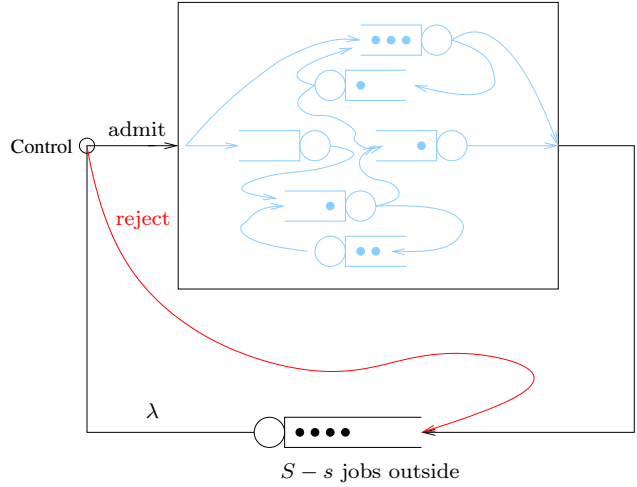

  \centering
  \include{MDPcode.tex}
  \vspace{-0.7cm}
  \caption{Admission control: rejected jobs immediately return to the outside queue.}
  \label{fig:MDP}
\end{figure}

Jobs that want to enter the network are subjected to admission control. If a job is rejected this can be modeled in the closed network  as the job being sent back to the outside queue (see Figure~\ref{fig:MDP}).
The goal of the admission controller is to minimize a cost function.
For each job, the immediate cost $c_t$ is decomposed into a per-rejection cost $\rcost$ and a per-time-unit holding cost $\hcost$.
This cost function is the long-run average cost per time unit: $\lim_{T\to \infty} \frac{1}{T}\sum_{t=0}^T c_t$.

When the controller can observe the state of the network and knows the parameters of the system $(\lambda,\mu^o,L)$,  this classical problem has been solved in \cite{Xia}. 
In the case where the network is a single M/M/1 queue, there exists an explicit formula (involving the Lambert $W$ function) for the optimal admission policy~\cite{Borgs-2014}.

 \subsection{Problem formulation}
 \label{sec:MDP}

%
%

In this paper, we consider an admission controller that can only observe arrivals to and departures from the network. More precisely, the network topology, internal service rates and routing probabilities are not known, and the movements of the jobs inside the network are not observable.
Our objective is to design a learning algorithm that learns the optimal admission policy with a \emph{small} regret in the sense that its dependence on the network \emph{complexity} is minimal.

For the cost to be minimized, we assume that:
\begin{itemize}
\item the controller may choose to reject jobs arriving in the network, at the price of a fixed $\rcost$ for each rejected job;
\item for every time unit in the system, each job induces a holding cost $\hcost$ (this is the classical cost function for admission control, see \cite{Borgs-2014});
\item the controller takes decisions only relying on its set of observations up to time $t$.
\end{itemize}

\subsection{Motivating applications}


Our main motivation is the control of computer and software systems.
These systems are composed of multiple interconnected \emph{containers}, where a container can be a cluster of servers or a modular software system, and admission control mechanisms are commonly employed to optimize performance.
In the literature, containers are usually modeled via product-form queueing networks (for tractability) or layered queueing networks \cite{MOL,Ju09}, which justifies our modeling approach.
In serverless computing, for instance, users of the serverless platform can control the overall number of simultaneous requests that can be processed in a cluster of servers (each with its own queue) at any given time.
In Knative, a Kubernetes-based platform to deploy and manage modern serverless workloads that is used among others by Google Cloud Run, admission thresholds are set via the \texttt{container-concurrency-target-default} global key \cite{KnativeAdmission} and the upper limit on the number of jobs that can be active running at the same time, i.e., $S$, can be controlled via the \texttt{max-scale-limit} global key.
In Kubernetes, an open-source system for the management of containerized applications,
admission controllers are configured via the \texttt{--enable-admission-plugins} and \texttt{--admission-control-config-file} flags and can be leveraged in case the pod (or application) is requesting too many resources.
%
%

Because of the complex relationships among containers,
which can also be nested in multiple layers,
i) a detailed knowledge of the current \emph{state} is expensive to obtain at any point in time
and ii) the internal container structure is also subject to estimation errors and may vary over time~\cite{Ju22}.
This leads us to our learning model, which is meant to capture both of these aspects: we do not know the network topology, routing probabilities and service rates as well as the current ``state''.

  \section{MDP Model}

  This section is dedicated to the construction of an  MDP model of the system as well as an artificial aggregated MDP  that is equivalent to the original  MDP under its stationary regime.
  Note however that the learning algorithm constructed in the following only interacts with the original system, non-aggregated. The aggregated system is only used for the performance analysis of the algorithm.

  \subsection{Original MDP}

Let us model the problem as an MDP,
  $M^o=(\mathcal{X}^o,\mathcal{A}^o,P^o,r^o)$, where the super-index $o$ stands for ``original'' throughout the paper. We first use uniformization to see  the process in discrete time.
The uniformization constant $U$ is lower bounded by the sum of the rates: $U \geq \lambda+\sum_{i=1}^N \mu_i^o$.
Thus, the time steps, which will be indexed by $t$, follow a Poisson process with rate $U$, and events (arrivals, services, routings and control actions) can only occur at these times. In the following $1/U$ will be seen as one time unit. 

\begin{itemize}
\item The state space $\mathcal{X}^o$ is the set of all tuples $(x_1,\ldots,x_N)$ given by the number of jobs $x_i$ in each queue $i$.

\item The action space is $\mathcal{A}^o:=\{\rm 0,1\}$ where $0$ stands for rejection and $1$ for admission. 

\item The transition matrix $P^{o}$ is simply constructed by using the routing matrix $L$, the arrival rate $\lambda$ and the service rates $(\mu_i^o)_i$.


\item The mean rewards $r^o$ are constructed  from the cost function.
  The immediate cost for each  state-action pair $(\bx,a)$, is  Bernoulli distributed.
  It is decomposed into: \\
  - a deterministic part, $\frac{1}{U}(\hcost \sum_{i=1}^N x_i)$ (each present job incurs a cost $\hcost$ per time unit),\\
  - and a stochastic part, $\rcost(1-a) {\bf 1}_{\mbox{\tiny job-arrival}}$ (if a job arrives and the action is reject).

  To be consistent with the learning literature, where  rewards are used instead of costs,  we first define $r_{\max} := \frac{\lambda \rcost  + \hcost S}{U}$ and 
  for each  state-action pair $(\bx,a)$, the reward are Bernoulli distributed with expected value 
 \begin{equation}
   \label{eq:o-reward}
   r^o(\bx,a):=r_{\max}-  \frac{\lambda \rcost (1-a) + \hcost s}{U} =  \frac{\lambda \rcost a + \hcost (S-s) }{U},
 \end{equation}
\end{itemize}
where $s := \sum_{i=1}^N x_i$.

Let $\Pi^o:=\{\pi:\mathcal{X}^o\to\mathcal{A}^o\}$ denote the set of {stationary and deterministic policies}.
A stationary {\it policy} $\pi$ is  a deterministic function  from $\mathcal{X}^o$ to $\mathcal{A}^o$.  

Then, the MDP evolves under $\pi$ in the standard Markovian way.
At each time-step $t$, the system is in state~$\mathbf{x}_t$, the controller  chooses the action $a_t = \pi(\mathbf{x}_t)$ and receives a random reward whose expected value is  $r^o(\mathbf{x}_t,a_t)$,
and the system moves to state $\mathbf{x}'$ at time $t+1$ with probability $P^o(\mathbf{x}' \mid \mathbf{x}_t,a_t)$.
The objective function is to minimize the long run average cost.

The average reward induced by policy $\pi$ is:
\begin{equation}
\label{rho_M_pi'}
\gain^o(M^o,\pi) :=  \lim_{T\to\infty} \frac{1}{T}\sum_{t=1}^T \mathbb{E}[r^o(\mathbf{x}_t,\pi(\bx_t))].
\end{equation}

        An optimal policy $\pi^*$ for the original MDP achieves the best average reward
        $\gain^o(M^o,\pi^*) = \sup_{\pi  \in \Pi^o}\gain^o(M^o,\pi).$

        \subsection{Aggregated model}

        Let us define an aggregated  MDP $M=(\mathcal{S},\mathcal{A},P,r)$ where  the network is replaced by a  single queue.

\subsubsection{Norton equivalent queue}
In this subsection, let us consider the system  defined in Section \ref{sec:network} without control (all jobs are admitted).

The stationary measure of the  network can be connected to the stationary measure of a birth-and-death process via Norton's theorem of queueing networks \cite{Norton}, also known in the literature as Flow Equivalent Server (FES) method~\cite{Bolch}. Towards this purpose,
let $v_i$ denote the average number of visits at queue $i$ relative to queue $0$.
Set $v_0=1$ and let  $(v_1,\ldots,v_N)$ be the unique solution of $v_i=\sum_{j=0}^N L_{j,i} v_j$, for all $i=1,\ldots,N$.
Then,
when containing $S$ jobs, the vector of the number of jobs in each queue  forms a continuous-time Markov chain with stationary measure \cite{Bolch}
\begin{align}
\label{eq:productform}
\sm^o(\mathbf{x}) = \frac{1}{G(S)} \prod_{i=0}^N \left(\frac{v_i}{\mu_i}\right)^{x_i},
\end{align}
for all $\mathbf{x}\in\{\mathbf{x}\in\mathbb{N}^N:|\mathbf{x}|\le S\}$,
where $G(S)$ is a normalization constant and $|\cdot|$ denotes the $L_1$ norm.

The construction of our equivalent queue works as follows:
\begin{enumerate}
 \item Given the closed Jackson network above, consider the (closed) network where queue~$0$ is short circuited (this means set $\mu^o_0 = \infty$) and let $\mu(s)$ denote the \emph{throughput}\footnote{The throughput of a closed Jackson queueing network with $s$ jobs
is the rate at which jobs flow at a reference queue (queue 0 in our case) and is defined by $\mu(s):=\frac{G(s-1)}{G(s)}$ where $G(s)$ is the normalizing constant appearing in the product-form expression~\eqref{eq:productform} of the stationary measure~$m^o$.} of the network with $s$ jobs in total (see Figure~\ref{fig:Norton} for an illustration).

\item Consider the original network where  all queues except $0$ are all replaced by a \emph{single} queue that operates with rate $\mu(s)$ if it contains $s$ jobs.
 \item Then,
 \begin{align}
\sum_{\mathbf{x}: |\mathbf{x}|=s}  \sm^o(\mathbf{x})
 = \sm(S-s,s), \quad \forall s=0,\ldots,S
 \end{align}
 where
 $\sm(S-s,s)$, for all $s=0,\ldots,S$, is the stationary measure of the reduced network with two queues.
\end{enumerate}

\vspace{-0.0cm}
\begin{figure}[hbtp]
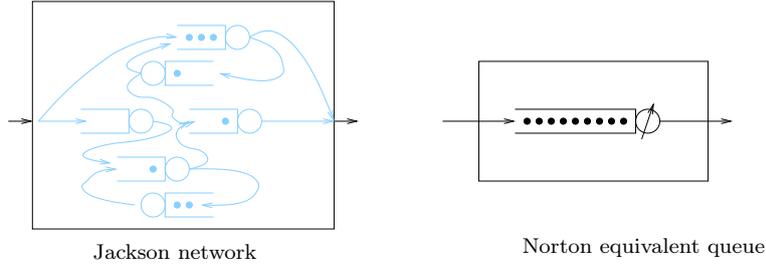

  \centering
      \include{nortoncode.tex}
      \vspace{-0.6cm}
  \caption{Illustration of Norton Equivalence theorem.}
  \label{fig:Norton}
\end{figure}

We remark that
$\sm(S-s,s)$ is indeed the stationary measure of a birth-and-death process with birth rate $\lambda \mathds{1}_{s< S}$ and death rate $\mu(s)$, a fact that will be key in the regret analysis of our learning algorithm.
In particular, we will use the following lemma, which provides some known properties about the throughput function $\mu(s)$~\cite{kameda}.

\begin{Lem}
The throughput function $s\mapsto \mu(s)$ is increasing, concave and bounded by
$\mu_{\max}:=\sum_{i \leq N} \mu_i^o$.
\label{lem:EqRates}
\end{Lem}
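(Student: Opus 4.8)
The plan is to establish the three properties separately: monotonicity and the upper bound are elementary, whereas concavity is the substantial (and classical) part. Throughout, recall that after short-circuiting queue~$0$ we are dealing with a closed Jackson network on stations $1,\dots,N$ with loads $\rho_i:=v_i/\mu_i^o>0$, that its normalizing constant is
\[
 G(s)=\!\!\sum_{\mathbf{x}\in\N^{N},\ |\mathbf{x}|=s}\ \prod_{i=1}^{N}\rho_i^{x_i},
 \qquad\text{equivalently}\qquad
 \sum_{s\ge 0}G(s)\,z^s=\prod_{i=1}^{N}\frac{1}{1-\rho_i z},
\]
and that $\mu(s)=G(s-1)/G(s)$. \emph{Monotonicity.} The inequality $\mu(s+1)\ge\mu(s)$ is literally $G(s)^2\ge G(s-1)G(s+1)$, i.e.\ log-concavity of the sequence $\big(G(s)\big)_{s\ge 0}$. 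Now $\big(G(s)\big)_s$ is the convolution of the $N$ geometric sequences $(\rho_i^{k})_{k\ge 0}$, each of which is log-concave with no internal zeros (in fact a P\'olya-frequency sequence), and the convolution of such sequences is again log-concave (a classical fact of total positivity). Hence $\mu$ is non-decreasing.

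\emph{Upper bound.} I would use the probabilistic meaning of $\mu(s)$: it is the throughput at the reference station, i.e.\ the long-run rate at which jobs traverse queue~$0$ when $s$ jobs circulate. In steady state every such traversal is immediately preceded by a service completion at some internal station $i$ (the one that routes that job to~$0$), so the rate of traversals is at most the total service-completion rate, which in any state with $s\ge 1$ jobs equals $\sum_{i:\,x_i\ge 1}\mu_i^o\le\sum_{i=1}^{N}\mu_i^o=\mu_{\max}$. (Equivalently, flow balance at queue~$0$ gives $\mu(s)=\sum_{i=1}^N \mu_i^o\,L_{i,0}\,\Pb_s(x_i\ge 1)\le\sum_{i=1}^N\mu_i^o$, where $\Pb_s$ is the product-form distribution with $s$ jobs.) Being non-decreasing and bounded, $\mu$ moreover converges as $s\to\infty$.

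\emph{Concavity.} This is the assertion taken from~\cite{kameda}, and it is the main obstacle. Writing $\Delta_s:=G(s-1)^2-G(s-2)G(s)$, which is nonnegative by the log-concavity above, one checks that $\mu(s+1)-\mu(s)=\Delta_{s+1}/\big(G(s+1)G(s)\big)$, so concavity of $s\mapsto\mu(s)$ is equivalent to the third-order inequality $\Delta_{s+1}G(s-1)\le\Delta_s G(s+1)$, i.e.\ $\Delta_{s+1}/\Delta_s\le G(s+1)/G(s-1)$. The route I would take is induction on the number of stations $N$: Buzen's recursion $G^{[N]}(s)=G^{[N-1]}(s)+\rho_N\,G^{[N]}(s-1)$ expresses the normalizing constants — hence the throughput — of the $N$-station network through those of the $(N-1)$-station one, and the induction hypothesis has to carry the whole conjunction ``$\mu^{[k]}$ non-decreasing, $\mu^{[k]}$ concave, $\big(G^{[k]}(s)\big)_s$ log-concave'', since none of these three properties propagates in isolation. (An alternative is to start from the Mean Value Analysis fixed-point relations $\mu(s)=s/\big(\sum_i\rho_i+\sum_i\rho_i L_i(s-1)\big)$ and $L_i(s)=\rho_i\mu(s)\big(1+L_i(s-1)\big)$, but this likewise forces one to control the full vector $\big(L_i(s)\big)_i$ rather than just the scalar $\mu(s)$.) The delicate bookkeeping in this inductive step — propagating three coupled inequalities simultaneously through the recursion — is exactly where the difficulty lies; since the statement is classical, in the paper we simply invoke~\cite{kameda}.
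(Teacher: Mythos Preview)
The paper does not actually prove this lemma: it merely states the result and cites~\cite{kameda} for the proof, adding only the remark that the bound $\mu_{\max}$ can be sharpened. So there is no ``paper's own proof'' to compare against; your proposal already goes well beyond what the paper itself provides, and you correctly note this at the end.

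As for the content of your sketch: the monotonicity argument via log-concavity of $(G(s))_s$ (convolution of geometric, hence P\'olya-frequency, sequences) is clean and correct. The upper bound via flow balance is also fine; your formula $\mu(s)=\sum_i \mu_i^o L_{i,0}\,\Pb_s(x_i\ge 1)$ is exactly the stationary arrival rate at the short-circuited reference queue, and in fact already gives the sharper bound $\mu(s)\le \min_i \mu_i^o/v_i$ alluded to in the paper. Your reduction of concavity to the third-order inequality $\Delta_{s+1}G(s-1)\le \Delta_s G(s+1)$ is correct, and the induction-on-$N$ plan using Buzen's recursion is the standard route in the literature; you are right that the bookkeeping is delicate and that the three properties must be carried jointly through the induction. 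Since you explicitly defer that step to the reference, there is no gap relative to the paper---you have simply been more explicit about \emph{why} the citation is needed.
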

%

The throughput bound $\mu_{\max}$ can be significantly improved \cite{Bolch} but this will not change the structure of our results.

        \subsubsection{Aggregated MDP}

        Notice that, in the original MDP $M^o$, the rewards do not depend on the state but only on the number of jobs in the network, therefore, the Norton equivalent queue can also be used to construct an equivalent MDP.

        Define the simplified equivalent MDP $M=(\mathcal{S},\mathcal{A},r,P)$.
        \begin{itemize}
        \item 
The state space $\mathcal{S}= \{0,\ldots,S\} $ consists of all possible numbers of jobs in the queueing network. We denote by $S':=S+1$ the number of states of the aggregated MDP.
\item The actions are the same as for the original MDP: $\mathcal{A}=\mathcal{A}^o=\{\rm 0,1\}$ (reject or accept).
\item  The original reward in \eqref{eq:o-reward} does not depend on the precise position of the jobs in the network but only  on their number. Therefore for $s \in \mathcal{S}$ and $a \in \mathcal{A}$, we can define  the expected reward as
\begin{equation}
r(s,a)=\frac{\lambda \rcost a +  \hcost (S-s)}{U}
 \label{eq:reward}
\end{equation}

  \item
The transition probabilities  $(P^\pi)$ are defined as follows.

    Let $\pi$ be a policy (a function from $\mathcal{S} \to \mathcal{A}$) on $M$. By convention, $\pi$ will  also be seen  as a policy in the original MDP $M^o$ using the natural extension, i.e., if $\mathbf{x} \in \mathcal{S}^o$, then
    $\pi(\mathbf{x}) := \pi( |\mathbf{x}|)$. We can now define the  transition matrix for policy $\pi$ as the transition matrix in the aggregated MDP $M$ under the stationary measure $\sm^{o,\pi}$:
 \begin{equation}
  P(s' \mid s,\pi(s))=\sum_{\mathbf{x}, |\mathbf{x}|= s}\; \sum_{\mathbf{y},|\mathbf{y}| =s'} \frac{\sm^{o,\pi}(\mathbf{x})}{\sm^\pi(s)}  P^o(\mathbf{y} \mid \mathbf{x},\pi(\mathbf{x})),
  \label{eq:AggTrans}
 \end{equation}
 where $\sm^\pi(s)=\sum_{\mathbf{x}, |\mathbf{x}|= s} \sm^{o,\pi}(\mathbf{x})$ is the equivalent stationary measure.
Under this construction, these probabilities are those for the Norton equivalent queue. Also, notice that the equivalent stationary measure $\sm^\pi(s)$ is also  the stationary measure of the Norton equivalent queue  with transition matrix $P^\pi$ under policy $\pi$.
\end{itemize}

Let $\Pi:=\{\pi:\mathcal{S}\to\mathcal{A}\}$ denote the set of {stationary and deterministic policies}.

   \begin{Def}
     The average gain induced by policy $\pi$ is:
     \begin{equation}
\label{rho_M_pi}
\gain(M,\pi) :=  \lim_{T\to\infty} \frac{1}{T}\sum_{t=1}^T \mathbb{E}[r({\bs}_t,\pi(\bs_t))].
\end{equation}
    \end{Def}

    The   optimal policy $\pi^*$ achieves
    \begin{equation}\label{eq:optimal gain}
      \gain(M,\pi^*) := \gain^*(M) := \sup_{\pi  \in \Pi}\gain(M,\pi).
    \end{equation}

    \subsection{Comparison between both MDPs}

    It should be clear that the original MDP $M^o$ has a greater set of policies than the aggregated MDP $M$ because it has more states.
    Therefore,  $\gain^o(M^o,\pi^*) \geq \gain^*(M)$. However, if we only consider the set of policies in the original MDP $M^o$ that take the same action (reject or accept) in all the states with the same total number of jobs, then optimal gains coincide.
    More precisely, let $\Pi^o_{sum}$ be the subset of policies  in $M^o$
    such that for all $\pi \in \Pi^o_{sum}, \pi(\mathbf{x}) = \pi(\mathbf{y}) $ if $|\mathbf{x}|= |\mathbf{y}|$.
    Then, the stationary measure on $M^o$ under any policy $\pi$ in  $\Pi^o_{sum}$, and the stationary measure under $\pi$ on $M$
    satisfy
    $\sum_{\mathbf{x}, |\mathbf{x}|= s} \sm^{o,\pi}(\mathbf{x}) = \sm^\pi(s)$.
    Therefore, we get for all  $\pi$ in  $\Pi^o_{sum}$,
    \begin{eqnarray*}
      \gain^o(M^o,\pi) & = & \sum_{\mathbf{x}} \sm^{o,\pi}(\mathbf{x}) r^o(\mathbf{x},\pi(\mathbf{x}))\\
                       & = & \sum_{s} \sum_{\mathbf{x}, |\mathbf{x}|= s} \sm^{o,\pi}(\mathbf{x}) r(s,\pi(s))\\
      & = &  \sum_{s} \sm^\pi(s)  r(s,\pi(s)) =  \gain(M,\pi).
    \end{eqnarray*}
    Now taking the maximum over all policies in  $\Pi^o_{sum}$ yields
    \[ \max_{\pi  \in \Pi^o_{sum}}\gain^o(M^o,\pi) = \gain^*(M).\]

    As for learning when the full state is not observable, the best one can hope for is to learn  $\max_{\pi  \in \Pi^o_{sum}}\gain^o(M^o,\pi)$, so we will consider the regret with respect to the  optimal gain $\gain^*(M)$, in the following.

\subsection{Bias and diameter}

In the following, we will heavily use  the bias \cite{puterman-1994} of a policy on $M$ (although it has  no intuitive meaning relative to the original MDP)  as well  the \emph{diameter} of the aggregated MDP.

\begin{Def}[Diameter of an MDP]
Let $\pi: \mathcal{S} \to \mathcal{A}$ be a stationary policy of any MDP $M$ with initial state $\bs$.
Let $T\left(\bs'|M,\pi,\bs\right):=\min\{t\ge 0: \bs_t=s'|\bs_0=\bs\}$ be the random variable for the first time step in which $\bs'$ is reached from $\bs$ under $\pi$.
Then, we say that the \emph{diameter} of~$M$ is
$$
D(M):=\max_{\bs\neq \bs'} \min_{\pi:\mathcal{S}\to \mathcal{A}} \mathbb{E}\left[ T\left(\bs'|M,\pi,\bs\right)\right].
$$
\end{Def}

Again, let us point out  that we will only consider the diameter on the aggregated  MDP for the computations, as it is needed to control the bias terms of the aggregated  MDP (see Appendix \ref{sec:aggregated}). We will  never need to consider the bias or the diameter of the original MDP.

\subsection{Reinforcement learning}

Here, we consider a learner that can observe the  arrivals and departures of jobs in the original MDP
and makes admission decisions for each arriving job.

\subsubsection{What does the learner know?}

\begin{itemize}
\item As mentioned earlier, the learner can observe the external events: arrivals and departures of jobs. This implies that at any discrete time-step  $t$, the total number of jobs in the system, $s_t$ is known to the learner and will be seen as the partially observed {\it state}.
\item The expected cost in state-action pair $(s,a)$  is unknown as it depends on the unknown parameter $\lambda$ (see \eqref{eq:reward}). However, the parameters  $\rcost$, $\hcost$ and the uniformization constant $U$ are known, and the learner knows how the cost depends on $\lambda$, which will be important for the definition of the confidence regions.  We will often use an upper bound on the difference of rewards between two neighboring  states $\delta_{\max}:=\rcost +\frac{\hcost}{U}$. We will use $\delta_{\max}$ instead of  $r_{\max}$ in the following derivation of the regret, 
  as $\delta_{\max}$ does not depend on $S$ and this will help us gain a factor $S$ in the regret bound.
  We will also make the assumption that the learner knows the reward function up to the  actual value of the arrival rate $\lambda$, that must be learned.

\item The learning algorithm knows $T$, the number of time steps where it can take observations and actions. This is not a strong requirement as one can  make the algorithm oblivious to $T$ by using a classical doubling trick on $T$.
\end{itemize}

\subsubsection{Regret}

\begin{Def}[Regret]
The \emph{regret} at time~$T$ of the learning algorithm $\mathcal{L}$ is
\begin{align}
\label{def:regret}
{\rm Reg}(M,\mathcal{L}, T):= T g^*(M)-\sum_{t=1}^T r_t.
\end{align}
\end{Def}
Here, $g^*(M)$ is the optimal gain defined in~ \eqref{eq:optimal gain}.
The reward $r_t$ is the reward of the state visited at time $t$ under the current policy used by the learning algorithm.


%

\section{Learning algorithm}
\label{sec:algo}

\subsection{High-level description of the proposed algorithm}
\label{ssec:description}
Our algorithm is  {\it episodic},  {\it model-based} and {\it  optimistic}.
More precisely, the interactions of the learner with the MDP $M^o$  are decomposed into {\it episodes}. In each episode $k$, of duration $[t_k, t_{k+1}-1]$, one admission policy $\pi_k$ is used to control the network and the learner observes the system (arrivals and departures) while collecting rewards under $\pi_k$.
At the end of the episode, the estimation of the true transition probabilities and rewards (the {\it model}),  $\hat{p}_k$ and $\hat{r}_k$ respectively, as well as the {\it confidence region} $\mathcal{M}_k$  are updated using the samples collected during the episode. This gives $\hat{p}_{k+1}$, $\hat{r}_{k+1}$  and $\mathcal{M}_{k+1}$.
The next policy $\pi_{k+1}$ is the best policy for the best MDP inside the confidence region $\mathcal{M}_{k+1}$ ({\it optimism}).

In our case with partial observations,
the number of jobs  at  time $t$, $(s_t)_{t \leq T}$ is not Markovian, therefore it does not provide enough information to make good estimates on the underlying MDP. Instead, we collect a set $\{s_{1}, \ldots, s_{\tm }\}$ of observations and try to learn using  this extended information.
If $\tm $ is well chosen, i.e., larger than the mixing time of the MDP, then each  subsequence  $s_i, s_{i+\tm },  s_{i+2\tm }, \ldots$ forms an ``almost'' independent sequence and therefore can be used for statistical estimations.

Our learning algorithm is based on the following idea. It can be seen as a collection of $\tm $ learning algorithms $\mathcal{L}_1,\ldots, \mathcal{L}_{\tm }$, using respectively the subsequence $(s_{i+k\tm })_{k\in \N}$ of observations, which are called  {\it modules} in the following.
Each learning module  $\mathcal{L}_i$ behaves similarly as the classical optimistic algorithm described above. There are no  interactions between modules
except for the number of visits that contributes to  the construction of the global confidence region, as detailed  in Section \ref{ssec:confidence}.
The main technical difficulties  in the control of the behavior of the algorithm are:
\begin{enumerate}
\item The observations used by  the learning modules $\mathcal{L}_1,\ldots, \mathcal{L}_{\tm }$ are not independent of each other, so one must be careful in assessing the interplay between the modules.
\item For each learning module $\mathcal{L}_i$, its  sequence of observations  $s_{i+\tm }, s_{i+2\tm },  s_{i+3\tm }, \ldots$ is not really stationary and independent, but only weakly correlated.
\end{enumerate}

\subsection{Number of modules: \texorpdfstring{$\tm$}{taumix}}
\label{ssec:t*}

Let us first give a more precise definition of the $\tm$ modules, where the number $\tm$ is yet to be chosen carefully. At the beginning of the algorithm, each time-step $t$ is attributed a module $\module_t$, so that these modules form a partition of the time-steps. For $0 \leq t\leq \tm-1$, the module $\module_t$ is defined in the following way: first $t \in \module_t$, then we wait $\tm$ steps to add the next time-step to that module, so that $t, t+\tm, t+2\tm, \ldots \in \module_t$, until time-step $T$ is reached. More formally one can identify, $\module_t = t \mod \tm$.

The number of modules $\tm$ is chosen using the following construction.
Let us consider the  original MDP  under any policy $\pi$,   with stationary measure $\sm^{o,\pi}$.
There exists $C>0$, $\mrate\in(0,1)$ such that:
\begin{equation}
\max_{\pi \in \Pi}\sup_{x_0\in\mathcal{S}} \left\|\mathbb{P}^{o,\pi}_{x_0}(x_t=\cdot)-\sm^{o,\pi} \right\|_{TV}\leq C\mrate^t \quad \forall t>0,
 \label{eq:mixing}
\end{equation}
where $\mathbb{P}^{o,\pi}_{x_0}(x_t=\cdot)$ is the distribution of the state at time $t$ under policy $\pi$ in the original MDP, with starting state $x_0$.
Let us then define
\begin{equation} \tm:= \lceil 5 \log T /\log \mrate^{-1} \rceil.
\end{equation}
The reason for this precise choice will appear in the analysis of the regret (see Section \ref{sec:regret}) but
the general idea behind this choice comes from  Lemma \ref{lem:Russo} given in appendix,  that basically says that after $\tm$ steps, the correlation between the state at time $t$ and the state at time $t + \tm$, under any policy,  is smaller than $C' \mrate^{\tm}$, where $C'$ is a constant.

The  fact that the number of modules used by the algorithm depends on $\mrate$  can be seen as a weakness of our approach because it means that the learner needs to know {\it a priori} a bound on the mixing time of the unknown MDP. This point will be addressed in Section~\ref{sec:mixing}.



\subsection{UCRL-M: learning with \texorpdfstring{$\tm $}{taumix} modules}

Algorithm UCRL-M (Upper Confidence Reinforcement Learning with several Modules) is given in Algorithm~\ref{algo:UCRL-M}.
First, the algorithm initializes the different  modules. Here, for each episode $k$ and  module $\module$, it computes the empirical estimates
of the reward and probability transition as in \eqref{eq:estimates_r_p} and \eqref{eq:estimates_r_p2}.
Then, it applies Extended Value Iteration (EVI) (Appendix~\ref{sec:EVI})
to find a policy $\tilde{\pi}_k$ and an optimistic MDP $\tilde{M}_k \in \mathcal{M}_k$ according to \eqref{ucrl2_ineq}.
Finally, to explore the MDP at episode $k$, it first iterates on the MDP over $\tm $ time-steps and discards these samples (ramping phase) to start the observations from the stationary distribution of the current policy. This phase is necessary to guarantee that observations within a module are nearly independent.
Afterward, UCRL-M  explores the true MDP with the optimistic policy $\tilde \pi_k$ and updates the empirical estimates with its observations.

The episode ends when the stopping criterion \eqref{eq:stop} is met. The next optimistic policy for the episode $k+1$ is found with respect to the observations inducing the confidence region $\mathcal{M}_k$ that is built using all modules (see \eqref{confP}).

 \begin{algorithm}[H]
    \label{algo:UCRL-M}
  \caption{The {\sc UCRL-M} algorithm.}
  \KwIn{ $\mathcal{S}$ and $\mathcal{A}$.}
  Set $t=0$, $k=0$;\\
  \While{$t\leq T$}
  {
   {\bf Initialize} episode $k$ with $t_k:=t$\\
    {\bf Compute} for all $(s,a)$ the modules $\module_k(s,a)$ according to \eqref{eq:module};\\
    {\bf Compute} the confidence region $\mathcal M_k$ as in \eqref{confP};\\
    {\bf Find} a policy $\tilde{\pi}_k$ and an optimistic MDP $\tilde{M}_k \in \mathcal{M}_k$ with ``Extended Value Iteration'' such that
\begin{equation}
g (\tilde{M}_k, \tilde{\pi}_k)\ge \max_{M_k\in {\mathcal{M}}_k}\max_\pi g (M_k, {\pi}) - \frac{\delta_{\max}}{\sqrt{t_k}}.
\label{ucrl2_ineq}
\end{equation}\\
     {\bf Ramping phase ($\Phi$):} Iterate the MDP with policy $\tilde \pi_k$ for $\tm $ time-steps, discard the observations and set $t:=t+\tm $.\\
    {\bf Exploration:} \While{ $
  \quad V_k^{(\module_t)}(s_t,\tilde \pi_k(s_t)) < \max \{1, N_{t_k}^{(\module_t)}(s_t,\tilde \pi_k(s_t))\},
$ }{
     \begin{itemize}
      \item[1.] Choose action $a_t = \tilde{\pi}_k(s_t)$;
      \item[2.] Observe $s_{t+1}$;
      \item[3.] Update $V_k^{(\module_t)}(s_t,a_t):=V_k^{(\module_t)}(s_t,a_t)+1$;
      \item[4.] Set $t:=t+1$.
     \end{itemize}
}

}
\end{algorithm}

\subsection{Confidence region}
\label{ssec:confidence}

As mentioned earlier, the learning algorithm relies on the ``Optimism in face of uncertainty'' principle. Here,  we provide the explicit construction of  a confidence region  $\mathcal M_k$ based on the observations, which depends on the visit counts.
For each state-action pair $(s,a)$ and each module $\module$, let $N_{t_k}^{(\module)}(s,a)$  be the cumulative number of visits to $(s,a)$  at all times $t = \module \mod \tm$ smaller than $t_k$, and excluding the visits during the ramping phases $\Phi$ (see the UCRL-M algorithm).

We also define the {\it most frequent module} for each state-action pair $(s,a)$: Let  $\module_k(s,a)$ be a module with the highest visit count until episode $k$,
\begin{equation}\label{eq:module}
 \module_k(s,a) \in \arg\max_{\module}{ N_{t_k}^{(\module)}(s,a)},
\end{equation}
so that for this module, the empirical observations are the most accurate, and we can relate the number of observations for this module to the total number of visits $N_{t_k}(s,a)$ of the pair $(s,a)$ with the inequality: $ N_{t_k}^{(\module_k(s,a))}(s,a) \geq \frac{1}{\tm}N_{t_k}(s,a)$.

To define the confidence region $\mathcal{M}_k$, first define $\hat{r}_k^{(\module)}$ and $\hat{p}_k^{(\module)}$ the empirical reward and transition estimates in module $\module$:
\begin{align}
\label{eq:estimates_r_p2}
\hat{p}_k^{(\module)}(s'|s,a)&:=
\frac
{\sum_{t=1}^{t_k-1} \mathds{1}_{\{s_t=s,a_t=a,s_{t+1}=s',\module_t=\module\}}\mathds{1}_{\croc{t\notin \Phi}}}
{\max\left\{1,N_{t_k}^{(\module)}(s,a)\right\}}
\\
\label{eq:estimates_r_p}
\hat{r}_k^{(\module)}(s,a)&:=
\hat{p}_k^{(\module)}(s+1|s,a)
\rcost + \hcost\frac{S-s}{U},
\end{align}
where $\Phi$ is the set of the time steps in the ramping phases defined in the algorithm. $\mathcal{M}_k $ is the confidence set of MDPs whose rewards $\tilde{r}$ and transitions $\tilde{p}$ satisfy:
\begin{equation}
 \label{confR}
 \forall (s,a), \quad \left|\tilde{r}(s,a)-\hat{r}_k^{(\module_k(s,a))}(s,a) \right|\leq \delta_{\max}\sqrt{\frac{2\log\left(2At_k \right)}{\max\croc{1,N_{t_k}^{(\module_k(s,a))}(s,a)}}};
\end{equation}
\begin{equation}
\label{confP}
\forall (s,a), \quad \|\tilde p(\cdot\mid s,a)-\hat{p}_k^{(\module_k(s,a))}\prt{\,\cdot\mid s,a}\|_1 \leq \sqrt{\frac{8\log\prt{2 At_k}}{\max\{1,N_{t_k}^{(\module_k(s,a))}(s,a)\}}}.
\end{equation}
Notice that for each state-action pair $(s,a)$, we only need the empirical reward and transition estimates for the module $\module_k(s,a)$: this means that the confidence region $\mathcal M_k$ is built from the comparison between modules from \eqref{eq:module}, and we do not build a specific confidence region for each module.

The algorithm finds the best optimistic MDP and policy within this confidence set, and executes the policy on the true MDP until the stopping criterion is met, that is when for any module $\module$ the number of visits $V_k^{(\module)}(s,a)$ in the current episode of a state-action pair $(s,a)$ reaches the number of visits of this pair and module until time $t_k$.  More formally, if at episode $k$ we choose the policy $\tilde \pi_k$, then the stopping criterion gives the following guarantee:
\begin{equation}
 \forall (s,\module) \quad V_k^{(\module)}(s,\tilde \pi_k(s)) \leq \max \{1, N_{t_k}^{(\module)}(s,\tilde \pi_k(s))\}.
 \label{eq:stop}
\end{equation}

\subsection{Time complexity of UCRL-M}

\begin{proposition}
  The time complexity of UCRL-M is $O(K S \tm  + Kt_{\rm evi} + T)$, where $K$ is the number of episodes and $t_{\rm evi}$ the time complexity of extended value iteration.
  Furthermore, $\E (K) = O(\log T)$.
\end{proposition}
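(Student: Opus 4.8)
The plan is to prove the two assertions separately: the time‑complexity bound is obtained by accounting for the per‑step and per‑episode work, and the bound $\E(K)=O(\log T)$ is obtained from a doubling argument on the visit counts, sharpened using the module structure of the algorithm and the birth‑and‑death structure of the Norton queue.

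\textbf{Time complexity.} I would bound the work done in a single episode $k$, separately from the stepping through the MDP. Apart from the stepping, episode $k$ performs: the computation of the most frequent modules $\module_k(s,a)$, which for each of the $|\mathcal S|\,|\mathcal A|=O(S)$ state--action pairs is a maximization of $N_{t_k}^{(\module)}(s,a)$ over the $\tm$ modules, hence $O(S\tm)$ in total; the construction of $\mathcal M_k$ from \eqref{confP}, which is $O(S)$ since the Norton kernel is tridiagonal; and one call to extended value iteration, costing $O(t_{\rm evi})$. Over the $K$ episodes this is $O(KS\tm+Kt_{\rm evi})$. The remaining operations are the $\tm$ discarded steps of each ramping phase and the $\ell_k$ steps of the exploration loop of episode $k$, each step costing $O(1)$ (the action choice, one observation, one counter increment, and the $O(1)$ stopping test of \eqref{eq:stop}); since these steps together exhaust the horizon, $\sum_k(\tm+\ell_k)=T$, they contribute $O(T)$. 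Summing gives $O(KS\tm+Kt_{\rm evi}+T)$.

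\textbf{Number of episodes.} I would start from the usual UCRL‑type observation: by the stopping rule \eqref{eq:stop}, every episode that does not end because the horizon is reached forces at least one triple $(s,a,\module)$ with $V_k^{(\module)}(s,a)\ge\max\{1,N_{t_k}^{(\module)}(s,a)\}$, i.e.\ a triple whose cumulative count at least doubles (or jumps from $0$ to $1$). The potential $\Phi_k:=\sum_{s,a,\module}\log_2\max\{1,N_{t_k}^{(\module)}(s,a)\}$ then increases by at least $1$ at each episode, and, since $\sum_{s,a,\module}N_T^{(\module)}(s,a)\le T$, its final value is at most $|\mathcal S|\,|\mathcal A|\,\tm\log_2 T$; this already yields the pathwise bound $K=O(S\tm\log T)$. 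To upgrade this to $\E(K)=O(\log T)$ I would remove the two spurious factors. The factor $\tm$: because $\tm$ exceeds the mixing time of the chain — this is exactly the role of the choice of $\tm$ and of Lemma~\ref{lem:Russo} / \eqref{eq:mixing} — each module samples an almost‑stationary, almost‑independent sub‑chain, so $N_{t_k}^{(\module)}(s,a)=\tfrac{1}{\tm}N_{t_k}(s,a)\,(1+o(1))$ simultaneously over all modules; the per‑module counts therefore move essentially in lock‑step, so a doubling of one coincides with a doubling of all, and there are effectively only $O(S)$ counters. The factor $S$: the Norton queue is a birth‑and‑death MDP whose stationary measure is log‑concave by Lemma~\ref{lem:EqRates}, and for such MDPs I would invoke the argument of \cite{anselmi-2022} showing that the episode lengths grow geometrically in expectation — once the policy and the empirical model have stabilised, the flow‑balance relations of the birth‑and‑death chain force $N_{t_k}(s,a)$ to be of order $t_k\,\sm^{\pi_k}(s)$ for every $(s,a)$ with $\pi_k(s)=a$, so the first triple to double does so only after $\Theta(t_k)$ additional steps, i.e.\ $t_{k+1}\ge(1+c)t_k$ for some $c>0$. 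Combining, and taking expectations to absorb the $o(1)$ errors from the weak dependence, gives $\E(K)=O(\log T)$.

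\textbf{Main obstacle.} The time‑complexity count is routine; the real difficulty is the passage from the pathwise bound $O(S\tm\log T)$ to $O(\log T)$. Three features make this delicate: the policies $\tilde\pi_k$ are chosen adaptively by EVI and may oscillate; the sub‑chains within a module are only weakly correlated, not i.i.d.; and the first $O(S\tm)$ episodes, in which a state--action pair is visited for the first time, can be extremely short. Handling these requires the concentration estimates for weakly dependent chains underlying Lemma~\ref{lem:Russo}, a separate accounting of the burn‑in episodes, and a careful transfer of the birth‑and‑death episode‑counting argument of \cite{anselmi-2022} to the multi‑module setting; this is where I would expect to spend most of the effort.
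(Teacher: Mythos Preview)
Your time-complexity argument is correct and matches the paper's proof essentially line for line: the paper simply tallies the cost of lines 5--6 as $O(KS\tm)$, line 7 as $O(Kt_{\rm evi})$, line 8 as $O(K\tm)$, and line 9 as $O(T-K\tm)$, which is exactly what you do.

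For the episode count, however, you are working far harder than the paper does, and in a direction the paper does not take. The paper's entire argument for $\E(K)=O(\log T)$ is the single sentence ``$\E K = O(\log T)$ because of the doubling trick used to end the episodes (see \cite{jaksch-2010} for example).'' In other words, the standard UCRL doubling bound is invoked and the constant in the big-$O$ is allowed to absorb the problem parameters; indeed the paper's own Lemma~\ref{Lem:number of episodes} later states only the pathwise bound $K\le S'A\tm\log_2(8T/(S'A\tm))$, and this is the bound actually used in the regret analysis (Sections~\ref{ssec:init}, \ref{ssec:Rdiff}, \ref{ssec:Rep}). So the proposition here is informal: the intended reading of $O(\log T)$ is ``polylogarithmic in $T$ with $S$, $A$, $\rho$ treated as fixed,'' not a bound uniform in those parameters.

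Your attempt to genuinely strip the $S$ and $\tm$ factors is therefore unnecessary for matching the paper, and the sketches you give would not obviously go through. The lock-step claim ``$N_{t_k}^{(\module)}(s,a)=\tfrac{1}{\tm}N_{t_k}(s,a)(1+o(1))$ simultaneously, so one doubling coincides with all'' does not survive the fact that the stopping rule \eqref{eq:stop} triggers on the \emph{first} module to double, which is governed by fluctuations rather than means. Likewise, the geometric-growth step $t_{k+1}\ge(1+c)t_k$ relies on ``once the policy and the empirical model have stabilised,'' but EVI may change the threshold at every episode and there is no mechanism in the paper (or in \cite{anselmi-2022}) guaranteeing this stabilisation before the horizon is reached. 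These are interesting directions, but they constitute a new and substantially harder result than what the proposition and its one-line proof assert.
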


\begin{proof}
  The time complexity of lines 5 and 6 is $O(KS\tm )$. The complexity of line 7 is $O(Kt_{\rm evi})$.
  The complexity of line 8 is $O(K\tm )$. The complexity of line 9 is $O(T - K\tm )$, the number of useful observations.
  As for the expected number of episodes,  $\E K = O(\log T)$ because of the doubling trick used to end the episodes (see \cite{jaksch-2010} for example).
\end{proof}

Note that the total number of useful samples (excluding the steps made during the ramping phases ) is $T- K\tm $, and  each module uses
$\frac{T - K\tm }{\tm }$ samples.
As for the time complexity of EVI, each iteration of EVI is $O(S^3)$ and the number of iterations depends on the starting point and is more difficult to estimate.
In total, the time complexity does not really depend on $\tm$ or $t_{\rm evi}$ that only appears at the beginning of each episode, and the  number of episodes is small w.r.t. $T$.

\section{Regret of {\sc UCRL-M} }
\label{sec:regret}
\subsection{Main result}

Let us recall that $S$ is the global bound on the number of jobs, $S'=S+1$ is the number of states, $\rcost$ is the rejection cost, $\hcost$ is the unit-time holding cost and
$D$ is the diameter of the aggregated MDP.
Also, $\sm^{\pi^{\max}}(s)$ is the stationary measure in the aggregated MDP under the policy that accepts all jobs,
$\mrate$ is defined in Section \ref{sec:algo}  and
$\mu(i)$ is the service rate in the aggregated MDP when $i$ jobs are in the system.

Define the constant $C_1:=\prod_{i=1}^{i_0-1}\frac{\mu(i_0)}{\mu(i)}\geq1$, where $i_0$ is chosen such that $\mu(i_0)>\lambda$. Such a $i_0$ exists because the unconstrained network is assumed to be stable (see Section~\ref{sec:network}) regardless of $S$. Hence, the flow equivalent queue is also stable regardless of~$S$. Define also $C_2:=\frac{(\lambda \rcost + \hcost)C_1}{\mu(1)\prt{1-\lambda/\mu(i_0)}}$.

\begin{The}
 Let $M \in \mathcal M$. Define $Q_{\max}:=\left(\frac{10C_2 (S')^2}{\sm^{\pi^{\max}}(S)}\right)^2 \log \left(\left(\frac{10C_2 (S')^2}{\sm^{\pi^{\max}}(S)}\right)^4\right)$. Define also the constant $ \kappa=228( \rcost + \frac{\hcost}{U}) \frac{U}{\mu(1)} C_1 {\prt{1-\sqrt{\frac{\lambda}{\mu(i_0)}}}^{-3}}$. For the choice $\tm =5\frac{\log T}{\log 1/\mrate}$, and $A=2$, we have:
 \begin{equation}
  \E\brac{\Reg(M, \mbox{UCRL-M},T)}\leq  \kappa\log\left(2T\right) \sqrt{T\log^{-1}(1/\mrate)} + R_{\rm LO},
 \end{equation}
where $R_{\rm LO}:= 138r_{\max}D^2 \max\croc{Q_{\max},T^{1/4}}\frac{\log^4\prt{4T}}{\log^2 1/\mrate}$ is a lower order term of the regret.
\label{th:main}
\end{The}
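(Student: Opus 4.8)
I would follow the episodic optimistic regret analysis pioneered for UCRL2 in \cite{jaksch-2010}, but carried out entirely on the \emph{aggregated} birth-and-death MDP $M$ and adapted to the two distinguishing features here: the learner sees only $s_t=|\mathbf{x}_t|$, and we must avoid any multiplicative dependence on the diameter $D$. The module decomposition handles partial observability. Within a module $\module$, the transitions $(s_{i+j\tm},a,s_{i+(j+1)\tm})$ recorded \emph{after} the ramping phase are, by the geometric mixing bound \eqref{eq:mixing} together with Lemma~\ref{lem:Russo}, within total-variation distance $O(\mrate^{\tm})$ of i.i.d.\ draws from the stationary law of the current policy composed with the Norton-equivalent kernel $P$ of $M$; with the choice $\tm=5\log T/\log\mrate^{-1}$ this discrepancy is $O(T^{-5})$ and will only ever feed the lower-order term $R_{\rm LO}$. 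Throughout, the dominant terms are expressed with $\delta_{\max}=\rcost+\hcost/U$ rather than $r_{\max}$, exactly as announced in Section~\ref{ssec:description}.

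\emph{Step 1 (good event).} I would first show that, with probability at least $1-\mathrm{poly}(1/T)$, the true MDP $M$ lies in every confidence region $\mathcal{M}_k$. For each fixed module one applies a standard $\ell_1$ concentration inequality to the empirical transition vector of the near-i.i.d.\ module-subsequence and Azuma–Hoeffding to the reward, pays the $O(T^{-5})$ mixing correction, and takes a union bound over state-action pairs, over episodes $k$, and over the $\tm$ modules; this reproduces exactly the widths \eqref{confR}--\eqref{confP}. The inequality $N_{t_k}^{(\module_k(s,a))}(s,a)\ge\frac1\tm N_{t_k}(s,a)$ then transfers the guarantee to the most-frequent module that the confidence region actually uses.

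\emph{Step 2 (decomposition and birth-and-death structure).} On the good event, optimism \eqref{ucrl2_ineq} combined with $M\in\mathcal{M}_k$ yields $g^*(M)\le g(\tilde M_k,\tilde\pi_k)+\delta_{\max}/\sqrt{t_k}$, so, writing $h_k$ for the bias of $(\tilde M_k,\tilde\pi_k)$ (which solves a Poisson equation $g(\tilde M_k,\tilde\pi_k)+h_k=\tilde r_k(\cdot,\tilde\pi_k)+\tilde P_kh_k$), the episode-$k$ regret is at most $\sum_{t}(\tilde r_k-r)(s_t,a_t)+\sum_{t}(\tilde P_k-P)(s_t,\cdot)h_k+\sum_{t}\bigl(P(s_t,\cdot)h_k-h_k(s_{t+1})\bigr)+\delta_{\max}(t_{k+1}-t_k)/\sqrt{t_k}$, plus the bounded reward-noise martingale. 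Since $P$ and $\tilde P_k$ are both supported on $\{s-1,s,s+1\}$, the transition-error and martingale summands can be rewritten using only the \emph{consecutive} bias increments $h_k(s{+}1)-h_k(s)$. Two structural facts are then invoked: (i) by the monotonicity and concavity of $\mu(\cdot)$ (Lemma~\ref{lem:EqRates}) and stability beyond $i_0$, each such increment is bounded by an absolute constant of order $\delta_{\max}\,\tfrac{U}{\mu(1)}C_1\bigl(1-\sqrt{\lambda/\mu(i_0)}\bigr)^{-1}$, independently of $D$ — this replaces the usual $\mathrm{span}(h_k)\le D$ estimate in the leading term; and (ii) by the geometric decay of $\sm^\pi$ away from its bulk, for $t_k\ge Q_{\max}$ the visit count of each state is comparable to $T\sm^{\tilde\pi_k}(s)$, so $\sum_s\sqrt{\text{visits to }s}=O(\sqrt T)$ instead of the worst-case $O(\sqrt{S'T})$, where $Q_{\max}$ is precisely the horizon past which enough visits to the boundary states — whose frequency under the most expansive policy is $\sm^{\pi^{\max}}(S)$ — have accumulated.

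\emph{Step 3 (assembling).} Combining (i)--(ii) with the standard telescoping bound $\sum_k\sum_{t\in\text{episode }k}1/\sqrt{\max\{1,N_{t_k}(s_t,a_t)\}}=O(\sqrt T)$ for birth-and-death visit counts (with an extra $\sqrt{\tm}$ from re-assembling the $\tm$ modules), the reward and transition errors sum to $O\bigl(\delta_{\max}\sqrt{\tm T\log T}\bigr)=O\bigl(\delta_{\max}\log(2T)\sqrt{T/\log\mrate^{-1}}\bigr)$; Azuma on the martingale (increments bounded by the same absolute constant) and on the reward noise gives the same order; the optimism slack sums to $\sum_k(t_{k+1}-t_k)/\sqrt{t_k}=O(\sqrt T)$; and the $K=O(\log T)$ ramping phases cost at most $r_{\max}\tm K$. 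The residual episodes with $t_k<Q_{\max}$ — where one falls back on $\mathrm{span}(h_k)\le D$, producing a $D^2$ with the square coming from the double use of $D$ (bounding both the transient visit counts and the bias) — together with the $O(T^{-5})$ mixing corrections, are all absorbed into $R_{\rm LO}$. Tracking the constants and setting $A=2$ gives the stated bound. The \textbf{main obstacle} is Step 2's $D$-free bound on the consecutive bias increments, made \emph{uniform over every optimistic MDP $\tilde M_k\in\mathcal{M}_k$}: one must verify that each MDP in the confidence region is still a birth-and-death chain whose rates are close enough to $\mu(\cdot)$ to inherit the same geometric concentration and hence the same constant-order increments — this is what ultimately keeps $D$ out of the leading term. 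A close second is the careful propagation of the weak intra-module dependence (items (1)--(2) of Section~\ref{ssec:description}) through both the concentration and the Azuma steps without it leaking beyond $R_{\rm LO}$, which is exactly why the ramping phase and the logarithmically large $\tm$ are needed; the remaining sensitivity to $\mrate$ is discussed separately in Section~\ref{sec:mixing}.
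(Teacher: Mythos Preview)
Your high-level skeleton (optimistic episodic decomposition on the aggregated MDP, module-wise concentration with $O(T^{-5})$ mixing corrections, birth-and-death structure to avoid a $D$ factor in the leading term) matches the paper's, but the heart of the argument is misidentified in two linked ways.

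\textbf{Where the bias increments live.} In your Step~2 you take $h_k$ to be the bias of the \emph{optimistic} pair $(\tilde M_k,\tilde\pi_k)$ and then declare the ``main obstacle'' to be a $D$-free bound on its consecutive increments, \emph{uniformly over all $\tilde M_k\in\mathcal{M}_k$}. The paper never attempts this --- and for good reason: the optimistic transitions produced by EVI need not inherit the monotone/concave rate structure of the Norton queue, so proving a uniform increment bound over $\mathcal{M}_k$ would be a new and nontrivial result. Instead, after reaching $\mathbf v_k(\tilde P_k-I)\tilde h_k$, the paper \emph{inserts the bias $h_k$ of the policy $\tilde\pi_k$ in the true MDP $M$}, splitting into
\[
\underbrace{\mathbf v_k(\tilde P_k-P_k)h_k}_{R_{\rm trans}}
\;+\;\underbrace{\mathbf v_k(\tilde P_k-P_k)(\tilde h_k-h_k)}_{R_{\rm diff}}
\;+\;\underbrace{\mathbf v_k(P_k-I)\tilde h_k}_{R_{\rm ep}}.
\]
Now $h_k$ is the bias of a policy on the \emph{known} Norton birth-and-death chain, so its increments $\Delta^\pi(s)$ can be bounded explicitly via the rates $\mu(s)$ (Proposition~\ref{pro:bias}). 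The price is the extra term $R_{\rm diff}$, controlled by the bias-perturbation Lemma~\ref{Lem:bias difference}, which gives $\|\tilde h_k-h_k\|_\infty\le O\bigl(r_{\max}D^2/\sqrt{N_{t_k}^{(\module')}(x_k,a_k)}\bigr)$. This is the actual source of the $D^2$ in $R_{\rm LO}$, and the $Q_{\max}$/$I(T)$ threshold enters here (to ensure the least-visited state has enough visits in long episodes, via Corollary~\ref{Cor:worst count}), not as a ``fallback to $\mathrm{span}(h_k)\le D$ for early episodes'' as you write.

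\textbf{The increments are not constant.} Even for the true bias, the increment $\Delta^\pi(s)=h^\pi(s)-h^\pi(s-1)$ is \emph{not} bounded by an absolute constant: Proposition~\ref{pro:bias} gives $\Delta(s)\le 2\delta_{\max}\sm^{\pi^{\max}}(0)^{-1}\tfrac{U}{\mu(1)}\,s$, i.e.\ linear in $s$. What is constant is the \emph{weighted sum} $\sum_s(\Delta(s{+}1)+\delta_{\max})\sqrt{\sm^{\pi^{\max}}(s)}$, because the linear growth is beaten by the geometric decay of $\sm^{\pi^{\max}}(s)$ (Lemma~\ref{lem:measure}). Correspondingly, the visit-count control is not ``each state gets $\approx T\sm^{\tilde\pi_k}(s)$ visits once $t_k\ge Q_{\max}$'' but rather the stochastic domination $\E[N_T(s)]\le T\sm^{\pi^{\max}}(s)$, applied unconditionally to the main term. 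So your Step~2(i)--(ii) as stated would not close the argument; you need the pairing of linearly growing $\Delta(s)$ with geometrically decaying $\sqrt{\sm^{\pi^{\max}}(s)}$, and the insertion of the true bias is what makes that pairing available.
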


Before diving into the proof, which involves many technical points, let us comment on our result.
In contrast with most bounds from the literature,
the most remarkable point is that both the diameter and the size of the state space do not appear in our bound.
These are both replaced by $\log^{-1/2} (1/\mrate)$.

Although we do not know any explicit bounds on $\mrate$ for all possible networks, it is quite reasonable to predict that $\log^{-1/2} (1/\mrate)$ can be  of order $\sqrt{S}$. In fact, this can be shown for acyclic networks as well as for hyper-stable networks as it will be shown in Section \ref{sec:mixing}.

This implies that the regret of UCRL-M  is $\tilde O(\sqrt{ST})$, which is a major improvement over the best bound for general MDPs, namely  $\tilde O(\sqrt{DSAT})$.
This  further confirms the fact that exploiting the structure of the learned system actually leads to more efficient algorithms as well as tighter analysis of their performance.

\subsection{Outline of the proof}\label{ssec:outline}

To compute the expected regret $\mathbb{E}[{\rm Reg}]$, we will mainly follow the strategy from \cite[Section~4]{jaksch-2010}. First, we deal with the regret term corresponding to the initialization phase of each episode, which depends in the number of episodes. Then, for each episode $k$, we consider the case where the true MDP $M$ does not belong to the confidence region $\mathcal{M}_k$, and use concentration inequalities along with the independence Lemma~\ref{lem:Russo} to show that this regret term will remain low. Then, we consider the case where the true MDP belongs to the confidence region, and  for each episode, we split the regret into relevant comparisons. Here, we expose terms depending on the difference of rewards and transitions between the true and optimistic MDPs, terms depending on the difference of biases, a term depending on the number of episodes and a term coming from the the computation of the optimistic policy and MDP with EVI.

To achieve the first split, we need to define: $R_k^{(\module)}(s):= \sum_{a}V_k^{(\module)}(s,a)(\mrate^*-r(s,a))$ the regret at episode $k$ induced by state $s$ in module $\module$, with $V_k^{(\module_t)}(s,a)$ the number of visit of $(s,a)$ during episode $k$ in module $\module$.
We split the regret into terms where the true MDP belongs to the confidence region, terms where it does not, and the terms from initializing the episodes:
\begin{equation}
 \mathbb E \brac{\rm Reg} \leq \mathbb E \brac{R_{\rm in}} +\mathbb E \brac{R_{\rm out}}
  +\E\brac{R_{\rm ramp}}
\label{eq:in-out}
\end{equation}
with $K$ the number of episodes and the regret where the MDP is in the confidence region being $R_{\rm in}:= \sum_\module\sum_{s}\sum_{k=1}^{K} R_k^{(\module)}(s) \mathds{1}_{M \in \mathcal{M}_k}$, and when it is outside $R_{\rm out}:= \sum_\module\sum_{s}\sum_{k=1}^K R_k^{(\module)}(s) \mathds{1}_{M \notin \mathcal{M}_k}$ and the regret of the ramping phases $R_{\rm ramp}=\sum_k \sum_{t=t_k}^{t_k+\tm -1}r(s_t,\tilde \pi_k(s,t))$.
Each term is then bounded as explained in Appendix \ref{app:proof}.

 \section{Controlling the regret bound parameter \texorpdfstring{$\mrate$}{rho}}
 \label{sec:mixing}

 The efficiency of UCRL-M is critically based on controlling $\tm $ and $\mrate$.
 In particular, Theorem \ref{th:main} says that the regret of UCRL-M depends on $W:=\log^{-1/2} (1/\mrate)$.

 \subsection{Bounds using mixing and coupling times}

 In Section \ref{sec:algo}, the number of modules  $\tm $ is defined as  $\tm :=5 \log T /\log \mrate^{-1}$.
 where $\mrate$ is such that
 \begin{equation}\label{eq:definition rho}
\max_\pi \sup_{x_0\in\mathcal{S}} \left\| \mathbb{P}^{o,\pi}_{x_0}(x_t=\cdot)-\sm^o(\pi) \right\|_{TV}\leq C\mrate^t \quad \forall t>0.
\end{equation}

Let us first recall classical results from Markov chain theory \cite{Peres-2008} relating $\mrate$ with the mixing and coupling time of a Markov chain.
Let us consider any Markov chain with transition matrix $P$ and stationary distribution $\sm$ (in our case, consider the Markov chain under the policy that attains the maximum in~\eqref{eq:definition rho}).
Let us define $d(t) := \sup_{x_0\in\mathcal{S}} \left\|\mathbb{P}_{x_0}(x_t=\cdot)-\sm \right\|_{TV}$.
Then, the {\it mixing time} of the chain is defined as
$t_{mix} := \min \{ t: d(t) \leq 1/4\}$.

A classical bound on $\mrate$ is then obtained by using the mixing time:

\begin{equation}
  \label{eq:mix}
  \mrate \leq \frac{1}{2^{t_{mix}^{-1}}}
\end{equation}

This implies that $W  \leq  \sqrt{t_{mix}  \log(2)} $.

Another bound on $\mrate$ can be obtained by using the coupling time.
The coupling time is $\tau_{x,y} := \min\{t: X_t = Y_t\}$. If $X_t$ and $Y_t$ are coupled and start at $X_0 = x$ and $Y_0 = y$ respectively.
Then,
$d(t) \leq \max_{x,y} \Pb(\tau_{x,y} > t)$.
By using Markov inequality, this implies that

\begin{equation}
  \label{eq:couple}
t_{mix} \leq 4  \max_{x,y}\E[\tau_{x,y}].
\end{equation}
Therefore, a  bound on the expected coupling time translates into a bound on $\mrate$.

\subsubsection{Acyclic networks}

In our model, if the queueing network is acyclic, then the coupling time is controllable because whenever a queue couples it stays coupled forever.

More precisely, since the total number of states in the network increases with the admission threshold, the  threshold policy under which the coupling time is the largest is when all jobs are admitted. Under this policy, by monotonicity, the coupling time is upper bounded by the coupling in an open network where all the $N$ queues have buffers bounded by $S$.
In this case, the coupling time has been studied in \cite[Theorem 5.3]{dopper-2006}, where the following result is proved in the stable case. Using our notation,
\begin{equation}
\max_{x,y}\E[\tau_{x,y}] \leq \sum_{i=1}^N \frac{U^2}{(\lambda_i^o + \mu_i^o)(\mu_i^o - \lambda_i^o)} S,
\end{equation}
where $U$ is the uniformization constant and $(\lambda_i)_{i\leq N}$ is the solution of the traffic equations.

According to Equation \eqref{eq:mix} and  \eqref{eq:couple}, this  induces the following bound  on the term $W$ in the regret:
\[ W \leq  \kappa_0 \sqrt{N S}, \]
where $\kappa_0$ is a constant: $\kappa_0= \max_i \sum_{i=1}^N \frac{U}{\lambda_i^o + \mu_i^o}$.

\subsubsection{Hyperstable networks}

This is another type of networks for which an explicit  bound on the coupling time exists.
A network is called {\it hyperstable} if for each queue $i$,
$\sum_j L_{ji} \mu_j^o + L_{0i}\lambda  < \mu_i^o$.

As in the acyclic case,  the threshold policy under which the coupling time is the largest is when all jobs are admitted. Under this policy, as for the acyclic case,  the coupling time is upper bounded by the coupling in an open network where all the $N$ queues have buffers bounded by $S$.

Coupling times of hyperstable networks with finite buffer queues have been studied in~\cite{anselmiGaujal-2014}, where the following bound is given (Theorem 2):
\begin{equation}
\max_{x,y}\E[\tau_{x,y}] \leq \kappa_2 N^2 S \sum_{i=1}^N \frac{\lambda_i^o}{\mu_i^o - \lambda_i^o},
\end{equation}
where $\kappa_2$ is a constant.
Using  Equation \eqref{eq:mix} and  \eqref{eq:couple}, this induces a similar bound on  the term $W$ in the regret:
\[ W \leq \kappa_3 N\sqrt{ S}, \]
where $\kappa_3$ is yet another constant.

\subsection{Making the algorithm oblivious to \texorpdfstring{$\mrate$}{rho} }
\label{ssec:oblivious}

By construction, the current version of UCRL-M uses  explicitly $\tm  = 5 \log T /\log \mrate^{-1}$ modules.
This can be a problem as it implies an {\it a priori} knowledge of $\mrate$, and of the mixing time (or at least an upper bound) of the network being learned.

These types of assumptions are sometimes made in the reinforcement learning literature. For example, the UCBVI algorithm \cite{UCBVI}  requires the knowledge of the diameter of the MDP being learned.

Here, we can patch  UCRL-M to make it oblivious to  $\mrate$ by making sure that $\tm  \geq  5 \log T /\log \mrate^{-1}$ for any large enough $T$. For example, one can chose $\tm  := \log^2(T)$, as it is asymptotically larger than the previous one. This patch adds a multiplicative $\log(T)$ term in the asymptotic bound of the regret given in Theorem~\ref{th:main}.

\section{Numerical experiments}
\label{sec:experiment}
\subsection{A multi-tier queueing network}

To assess the performance of UCRL-M, we rely on a standard multi-tier queueing network as displayed in Figure~\ref{fig:containers}.
The topology of this network is composed of three tiers. Namely, tiers 1, 2 and 3 represent the web, application and database stages of a typical web-application request.
Each tier is composed of multiple servers, each with its own queue.
After accessing the web tier, a request may either return back to the issuing user with probability $1-p$ or flow through the application and database tiers. This multi-tier structure is common in empirical studies of computer systems \cite{UR05} and is the default architecture of web applications deployed on Amazon Elastic Compute Cloud (EC2) \cite{3tier}.

\begin{figure}[h]
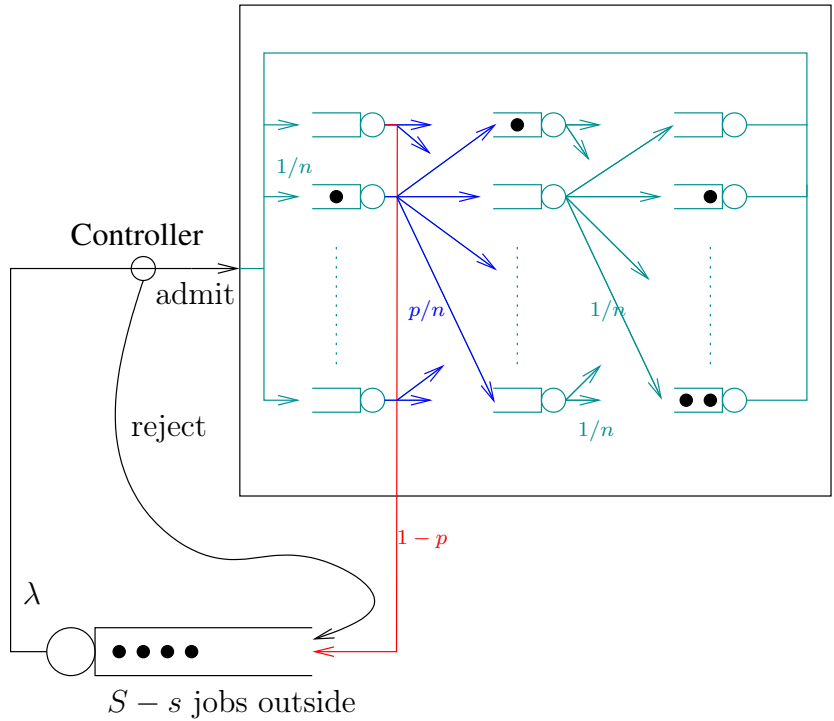

  \centering
  \include{containerscode.tex}
    \vspace{-0.65cm}
  \caption{A queueing network model with three interconnected tiers. Each tier contains $n$ queues and the total capacity is of $S$ jobs.}
  \label{fig:containers}
\end{figure}

This model may be studied as an example of the generic case described in Section~ \ref{sec:network}. Notice that given the routing from Figure~\ref{fig:containers}, the stability condition is met if $\frac{\lambda}{1-p}<\mu$, where $\mu=\mu_1^o=\ldots=\mu_{3n}^o$ is the service rate of the queues in the network.

\subsection{Regret of UCRL-M on the multi-tier queueing network}

We provide the performance of UCRL-M over the queueing network  described above when the number of queues per tier $n$ and the total number of jobs $S$ vary. In Figure~\ref{fig:regret}, we display the average regret over $66$ runs of the UCRL-M algorithm when $n$ varies, and with parameters scaling with $n$ to keep the systems proportionally comparable. More precisely, the  scaling in $S$ and $\mu$ is such that as the number of queues increases, the waiting time in each tier remains roughly identical for a job in each tier, and the scaling in the holding cost is also consistent with the increase of the number of jobs in the system. Notice that for our choice of parameters, the network is not stable, so that we use  the UCRL-M algorithm under  more general conditions than those assumed in Section~\ref{sec:mixing} and even in Section~\ref{sec:network}.


In Figure~\ref{fig:regret}, we remark that as we let the number of queues $n$ (and the number of jobs $S$) scales multiplicatively, the regret is increasing in $\log(S)$. Knowing that the dependency in $S$ of the regret bound from Theorem~\ref{th:main} mainly comes from $\mrate$, this is much slower than the square root bounds given  in Section~\ref{sec:mixing} (under strong assumptions). This can be interpreted as the bound of equation~\ref{eq:definition rho} being too large as it considers the mixing from the worst state, while in average it is more likely for the algorithm to mix from states that are visited the most, which are already close to stationary states.

\begin{figure}
\begin{minipage}{0.70\textwidth}
 \hspace{-0.08\textwidth}
  \includegraphics[width=1.0\textwidth]{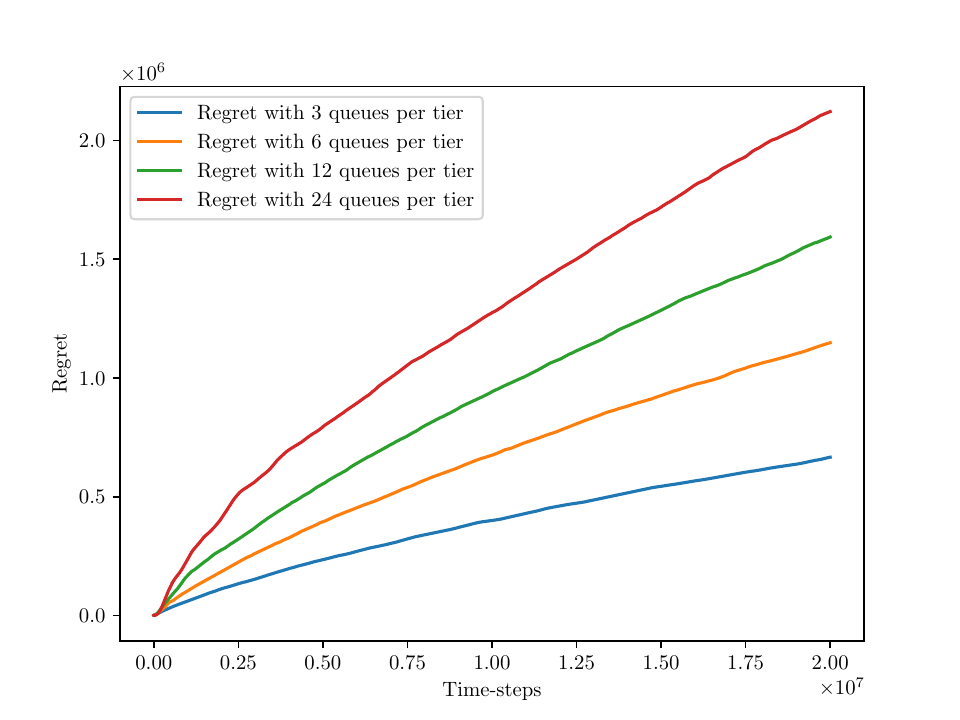}

\end{minipage}
 \hspace{-0.08\textwidth}
\begin{minipage}{0.1\textwidth}
\renewcommand{\arraystretch}{1.6}
\begin{tabular}{|@{\hspace{0.1cm}}c@{\hspace{0.1cm}}|c|}
    \hline
    Parameter & Value \\
    \hline \hline
    $S$ & $\frac{10}{3}n$  \\
    \hline
    $\lambda$ & $0.99$  \\
    \hline
    $\mu$ & $\frac{1}{n}$ \\
    \hline
    $\rcost$ & $10$ \\
    \hline
    $\hcost$ & $\frac{4}{3n}$ \\
    \hline
    $U$ & $\lambda+3n\mu$ \\
    \hline
    $p$ & $0.2$ \\
    \hline
    $\tm$ & $3$ \\
    \hline
\end{tabular}
 \end{minipage}
 \caption{Regret of the UCRL-M algorithm on the queueing network for different values of $n$ and scaling parameters.}
  \label{fig:regret}
  \vspace{-0.8cm}
 \end{figure}

 \begin{figure}[H]
 \centering
  \includegraphics[width=0.70\textwidth]{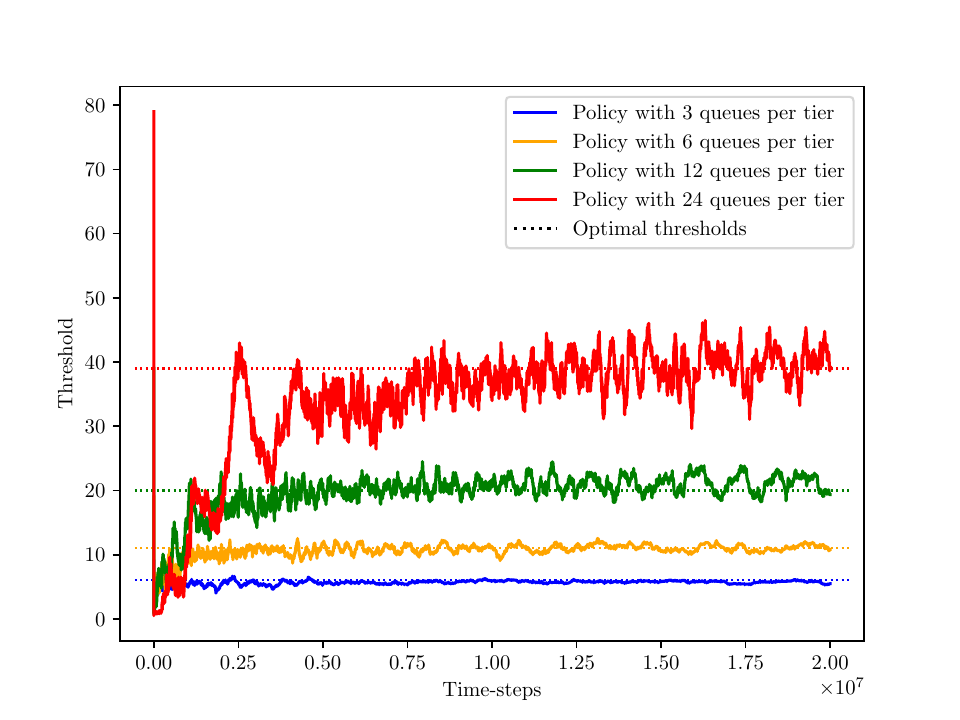}
  \caption{Average threshold chosen as a function of time.}
  \label{fig:policy}
 \end{figure}

In Figure~\ref{fig:policy}, we can see the convergence to the optimal threshold for each value of $n$. It suggests that the optimal threshold is scaling linearly with $n$, and that the convergence is slower as $S$ increases.

In the previous experiments, the number of modules is arbitrarily fixed to $\tm = 3$.
Now, we perform another experiment to observe the dependency of the regret in the choice of $\tm$ for this queueing system. The intuition is the following: as explained in the high-level description of the algorithm in Subsection~\ref{ssec:description}, UCRL-M could be compared to $\tm$ instances of UCRL2~\cite{jaksch-2010}, where all  modules but the best one is discarded at each episode. This best module runs on roughly $\frac{T}{\tm}$ time-steps, and its regret can be compared to $\frac{1}{\tm}$ times the expected regret of UCRL-M. With this intuition in mind,  we plot in Figure~\ref{fig:tmixReg} the regret of UCRL-M, where we rescaled both the regret and the time-steps by a factor $\frac{1}{\tm}$.

\begin{figure}[H]
\begin{minipage}{0.75\textwidth}
 \hspace{-0.08\textwidth}
  \includegraphics[width=1.0\textwidth]{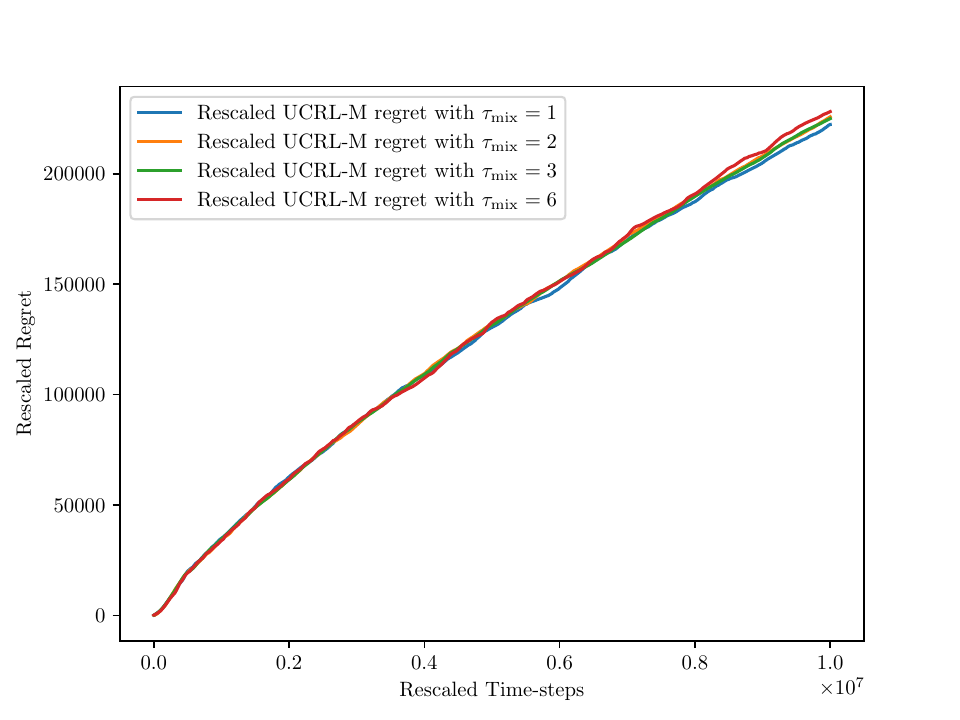}
\end{minipage}
 \hspace{-0.08\textwidth}
\begin{minipage}{0.1\textwidth}
\renewcommand{\arraystretch}{1.6}
\begin{tabular}{|@{\hspace{0.1cm}}c@{\hspace{0.1cm}}|c|}
    \hline
    Parameter & Value \\
    \hline \hline
    $S$ & $20$  \\
    \hline
    $n$ & $6$ \\
    \hline
    $\lambda$ & $0.99$  \\
    \hline
    $\mu$ & $\frac{1}{n}$ \\
    \hline
    $\rcost$ & $10$ \\
    \hline
    $\hcost$ & $\frac{4}{3n}$ \\
    \hline
    $U$ & $\lambda+3n\mu$ \\
    \hline
    $p$ & $0.2$ \\
    \hline
\end{tabular}
 \end{minipage}
 \caption{Rescaled regret of the UCRL-M algorithm on the queueing network for different values of $\tm$.}
  \label{fig:tmixReg}
 \end{figure}
Within the considered queueing model,  we notice that the modules do not seem to bring any practical upside because the regret is almost perfectly linear in the number of modules. However, they remain necessary to guarantee the correctness of the confidence sets and to get the theoretical bound on the regret given in Theorem~\ref{th:main}.

\section{Conclusion}\label{sec:conclusion}

In the context of queueing networks, we have shown that efficient learning in POMDPs is possible.
Provided that the learner's objective is to learn the optimal admission control policy, which is a problem appearing in a number of applications as discussed in Section~\ref{sec:network}, we have proposed UCRL-M, an optimistic  algorithm whose regret is independent of the diameter $D$, i.e., a quantity that appears in most of the existing regret analyses \cite{jaksch-2010} and that is exponential in the size of the space $S$ in most queueing systems.

While our result strongly relies on Norton's equivalent theorem, which only applies exactly to product-form queueing networks, our main perspective is that this type of  results under partial observations may be found in several other models from queueing theory. In fact, Norton's theorem has been
generalized to multiclass networks~\cite{norton_generalization} and
also used in the context of non-product-form queueing networks for approximate analysis~\cite{Bolch}.

\bibliographystyle{plain}
 \bibliography{biblio}

\begin{appendices}

\section{Proof of Theorem \ref{th:main}}
\label{app:proof}

 \subsection{Terms for the ramping phases}\label{ssec:init}
We first briefly deal with the terms coming from the ramping phases $\Phi$ of the episodes, $R_{\rm ramp}$.
We have:
\begin{equation}
 R_{\rm ramp}= \sum_k \sum_{t=t_k}^{t_k+\tm -1}r(s_t,\tilde \pi_k(s,t))
 \leq K \tm  r_{\max}
\leq r_{\max}SA\tm^{2}\log_2\prt{\frac{8T}{SA\tm}},
\end{equation}
where in the last inequality we used Lemma~\ref{Lem:number of episodes}.
Assuming $\tm SA\geq 4$, and using $\log(2)\geq \frac{1}{2}$, we rewrite it:
\begin{equation}
 R_{\rm ramp}
\leq 2r_{\max}SA\tm^{2} \log(2T).\label{eq:Rinit}
\end{equation}
This term is therefore among the lower-order terms of the regret.

\subsection{Terms in the confidence bound}\label{ssec:outside}

We start with the terms coming from the case where the MDP is out of the confidence regions $\mathcal{M}_k$.
For each episode $k$, we define:
\begin{itemize}
 \item $ V_k^{(\module)}(s) $  the number of visits of state $s$ during episode $k$ in module $\module$.
 \item $N_{t}^{(\module)}(s)$ is the number of visits of state $s$ until time-step $t$ excluded, in module $\module$.
 \item $\mathcal{M}(t)$ the set of MDPs $\mathcal{M}_k$ such that $t_k\leq t<t_{k+1}$
\end{itemize}
  For the terms out of the confidence sets, we have:
\begin{align*}
R_{\rm out}
 &\leq r_{\max}\sum_\module\sum_{s}\sum_{k=1}^{K} V_k^{(\module)}(s) \mathds{1}_{M \notin \mathcal{M}_k} \\
 &\leq r_{\max}\sum_\module\sum_{s}\sum_{k=1}^{K} N_{t_k}^{(\module)}(s) \mathds{1}_{M \notin \mathcal{M}_k}  \text{ using the stopping criterion}\\
 &=r_{\max}\sum_{t=1}^T \sum_{s}\sum_{k=1}^{K} \mathds{1}_{t_k=t}N_t(s) \mathds{1}_{M \notin \mathcal{M}(t)}
 \leq r_{\max}\sum_{t=1}^T \sum_{s}N_t(s) \mathds{1}_{M \notin \mathcal{M}(t)}\\
 &= r_{\max}\sum_{t=1}^T \mathds{1}_{M \notin \mathcal{M}(t)}\sum_{s}N_t(s)
 \leq r_{\max}\sum_{t=1}^T t\mathds{1}_{M \notin \mathcal{M}(t)}.
\end{align*}
We now need Lemma~\ref{Lem:conf outside} to control the probability that the MDP fails to be within the confidence bounds $\mathbb P \croc{M \notin \mathcal{M}(t)}$. Taking the expectations and using Lemma~\ref{Lem:conf outside}, we obtain
 \begin{align}
 \mathbb E \brac{R_{\rm out}}&\leq r_{\max} \sum_{t=1}^T t\Pb\croc{M\notin \mathcal{M}(t)}
 \leq r_{\max}\sum_{t=1}^T \frac{S'+16CS'A}{2t^2}\leq r_{\max}(S'+16CS'A).
  \label{eq:Rout}
 \end{align}
 This term is constant in $T$ and therefore it does not significantly contribute to the regret.

\subsection{Split of  confidence bound}\label{ssec:split}
We assume that $M \in \mathcal{M}_k$ and to simplify the notations, we will omit the use of the indicator functions $\1_{M\in \mathcal{M}_k}$.
For each episode $k$ and module $\module$, let us define
\begin{itemize}
\item $R_{{\rm in},k}^{(\module)} := \sum_s R_k^{(\module)}$,
\item $\tilde{\pi}_k$ the optimistic policy,
 \item $\tilde{P}_k:=\prt{\tilde{p}_k(s'|s,\tilde{\pi}_k(s))}$ the transition matrix of policy  $\tilde{\pi}_k$ on the optimistic MDP $\tilde{M}_k$,
 \item $\mathbf{v}_k^{(\module)}:=\prt{V_k(s,\tilde{\pi}_k}$ the row vector of visit counts,
 \item $\mathbf{h}_k$ the bias vector of the Markov chain in the true MDP $M$ with policy $\tilde \pi_k$.
\end{itemize}
Now, we split the regret term $R_{\rm in}^{(\module)}$ into subterms that have different meaning.
Assuming $M \in \mathcal{M}_k$ and using Lemma~\ref{EVIgain} on the accuracy of EVI, we get:
\begin{align*}
 R_{{\rm in},k}^{(\module)} &= \sum_{s,a} V_k^{(\module)}(s,a)(g^*-r(s,a))\\
 &\leq \sum_{s,a} V_k^{(\module)}(s,a)(\tilde{g}_k-r(s,a)) + \varepsilon_k\sum_{s,a}V_k^{(\module)}(s,a)\\
 &= \sum_{s,a} V_k^{(\module)}(s,a)(\tilde{g}_k-\tilde{r}_k(s,a)) +
 \sum_{s,a} V_k^{(\module)}(s,a)(\tilde{r}_k(s,a) - r(s,a)+
 \varepsilon_k) .
\end{align*}
In the next few steps, we will focus on rewriting the first sum. With \eqref{eq:EVIdiff} and using the definition of the iterated values from EVI, we have for a given state $s$ and $a_s:=\tilde{\pi}_k(s)$:
$$ \left|\prt{\tilde{g}_k-\tilde{r}_k(s,a_s)} - \prt{\sum_{s'}\tilde{p}_k(s'|s,a_s) u^{(k)}_i(s') -u^{(k)}_i(s)} \right| \leq \varepsilon_k,$$
so that:
$$
R_{{\rm in},k}^{(\module)}\leq \mathbf{v}_k^{(\module)}\left(\tilde{\mathbf{P}}_k - \mathbf{I} \right) \mathbf{u}_i 
+ \sum_{s,a} V_k^{(\module)}(s,a)(\tilde{r}_k(s,a) - r(s,a))
+ \varepsilon_k\sum_{s,a}V_k^{(\module)}(s,a).
$$
Again, with $\mathbf{\tilde{h}}_k$ being the bias of the average optimal policy for the optimist MDP, define:
$$d_k(s):=\left(u^{(k)}_i(s)-\min_x u^{(k)}_i(x)\right) - \left( \mathbf{\tilde{h}}_k(s)-\min_x \mathbf{\tilde{h}}_k(x)\right).$$
Then for any $s$: $\left| d_k(s)\right| \leq \varepsilon_k.$

Notice that the unit vector is in the kernel of $\left(\tilde{\mathbf{P}}_k - \mathbf{I} \right)$. Therefore, in the first term, we can replace $ \mathbf{u}_i$ by any translation of it. We get:
$$
\mathbf{v}_k^{(\module)}\left(\tilde{\mathbf{P}}_k - \mathbf{I} \right) \mathbf{u}_i =\mathbf{v}_k^{(\module)}\left(\tilde{\mathbf{P}}_k - \mathbf{I} \right) \mathbf{\tilde{h}}_k + \mathbf{v}_k^{(\module)}\left(\tilde{\mathbf{P}}_k - \mathbf{I} \right) \mathbf{d}_k.
$$
so that, using the definition of $\varepsilon_k$, we have that overall:
\begin{multline*}
R_{\rm in}^{(\module)}\leq
 \underbrace{\sum_k\mathbf{v}_k^{(\module)}\left(\tilde{\mathbf{P}}_k - \mathbf{I} \right) \mathbf{\tilde{h}}_k}_{R_{\rm bias}^{(\module)}}
+ \underbrace{\sum_k\mathbf{v}_k^{(\module)}\left(\tilde{\mathbf{P}}_k - \mathbf{I} \right) \mathbf{d}_k+ 2\delta_{\max}\sum_k \sum_{s,a}\frac{V_k^{(\module)}(s,a)}{\sqrt{t_k}}}_{R_{\rm EVI}^{(\module)}} \\
+ \underbrace{\sum_k \sum_{s,a} V_k^{(\module)}(s,a)(\tilde{r}_k(s,a) - r(s,a))}_{R_{\rm rewards}^{(\module)}}
\end{multline*}

We can already further simplify the term related to EVI. Notice that:
\begin{align*}
\mathbf{v}_k^{(\module)}\left(\tilde{\mathbf{P}}_k - \mathbf{I} \right) \mathbf{d}_k&\leq \sum_s V_k\left(s,\tilde{\pi}_k(s)\right)\cdot \left\|\tilde{p}_k\left(\cdot|s,\tilde{\pi}_k(s)\right)-\mathds{1}_{s} \right\|_1\cdot \sup_{s'}|d_k(s')| \\
&\leq 2\varepsilon_k\sum_s V_k^{(\module)}\left(s,\tilde{\pi}_k(s)\right)
\leq 2\delta_{\max}\sum_{s,a} \frac{V_k^{(\module)}\left(s,a\right)}{\sqrt{t_k}} \\
&\leq 2\delta_{\max}\sum_{s,a}\frac{V_k^{(\module)}(s,a)}{\sqrt{\max{\{1,N_{t_k}^{(\module_k(s,a))}(s,a)\}}}},
\end{align*}
where in the last inequality we used  that $\max\{1,N_{t_k}^{(\module_k(s,a))}(s,a)\}\leq t_k \leq T$.
Thus,
for $T\geq \frac{e^2}{2AT}$
the regret term coming from the consequences and approximations of EVI satisfies
\begin{equation}
 \label{eq:term EVI 1}
R_{\rm EVI}^{(\module)}\leq \delta_{\max}2\sqrt{2\log(2AT)}\sum_k\sum_{s,a}\frac{V_k^{(\module)}(s,a)}{\sqrt{\max{\{1,N_{t_k}^{(\module_k(s,a))}(s,a)\}}}}.
\end{equation}

Let us now deal with the term $R_{\rm rewards}^{(\module)}$, as it will be bounded by a similar term as in equation~\eqref{eq:term EVI 1}. Indeed, as $M \in \mathcal{M}_k$, we may use that both the optimistic and true rewards are within the confidence region from equation~\ref{confR}, and use that $t_k<T$, so that:
\begin{equation}
 \label{eq:term reward}
 R_{\rm rewards}^{(\module)} \leq  \delta_{\max}2\sqrt{2\log(2AT)}\sum_k\sum_{s,a}\frac{V_k(s,a)}{\sqrt{\max{\{1,N_{t_k}^{(\module_k(s,a))}(s,a)\}}}}
\end{equation}

On the other hand, we can also split more precisely the term that depends on the bias. Define $P_k$ as the transition matrix of the optimistic policy $\tilde{\pi}_k$ in the true MDP $M$. We get
%
\begin{multline}
\label{eq:multiterms}
 R_{\rm in}^{(\module)}  \hspace{-0.1cm}\leq
 \underbrace{\sum_k\mathbf{v}_k^{(\module)}\hspace{-0.1cm}\left(\tilde{\mathbf{P}}_k - \mathbf{P}_k \right) \hspace{-0.07cm}\mathbf{h}_k}_{R_{\rm trans}^{(\module)}}
 + \underbrace{\sum_k\mathbf{v}_k^{(\module)}\hspace{-0.1cm}\left(\tilde{\mathbf{P}}_k - \mathbf{P}_k \right)\hspace{-0.15cm} \prt{\mathbf{\tilde{h}}_k-\mathbf{h}_k}}_{R_{\rm diff}^{(\module)}}
 +\underbrace{\sum_k\mathbf{v}_k^{(\module)}\hspace{-0.1cm}\left(\mathbf{P}_k - \mathbf{I} \right)\hspace{-0.07cm} \mathbf{\tilde{h}}_k}_{R_{\rm ep}^{(\module)}} \\
+\underbrace{\delta_{\max}4\sqrt{2\log(2AT)}\sum_k\sum_{s,a}\frac{V_k^{(\module)}(s,a)}{\sqrt{\max{\{1,N_{t_k}^{(\module_k(s,a))}(s,a)\}}}}}_{R_{\rm EVI}^{(\module)}+R_{\rm rewards}^{(\module)}}.
\end{multline}

Now that we split the regret into several terms, we still need to sum over the modules and analyze for each term its contribution to the regret. For instance, we can sum over the modules the terms depending on EVI and the reward differences to get:
\begin{equation}
 R_{\rm EVI}+R_{\rm rewards}=\delta_{\max}4\sqrt{2\log(2AT)}\sum_k\sum_{s,a}\frac{V_k(s,a)}{\sqrt{\max{\{1,N_{t_k}^{(\module_k(s,a))}(s,a)\}}}}.
 \label{eq:EVIrew}
\end{equation}
This term is related to the choice of the confidence bounds, and it will contribute to the main term of the regret. Regarding the other terms, $R_{\rm trans}^{(\module)}$ will also use the confidence bounds on the transition as well as our knowledge of the bias in the true MDP. $R_{\rm diff}^{(\module)}$ will be a lower order term in the regret, using the confidence bounds for both the comparisons between the transitions and the biases. Finally, $R_{\rm ep}^{(\module)}$ will be related to the count of episodes, so that it will also be a lower order term. The discussion for each of these terms will be spread over the next subsections.

\subsection{Bound on \texorpdfstring{$R_{\rm trans}^{(\module)}$}{Rtrans}}\label{ssec:Rtrans}

To bound $R_{\rm trans}^{(\module)}$, we can follow the computations from \cite{anselmi-2022}. We will use our knowledge of the bias $\mathbf{h}_k$ and the control on the transitions in the optimistic MDP to simplify the regret term.

Notice that for a fixed state $1\leq s \leq S-1$:
\begin{equation*}
 \sum_{s'}p\prt{s'|s,\tilde{\pi}_k(s)}h_k(s')=\sum_{s'} p\prt{s'|s,\tilde{\pi}_k(s)}\prt{h_k(s')-h_k(s)}+h_k(s).
\end{equation*}
The same is true for $\tilde{p}_k$, and knowing the MDP is a birth and death process:
\begin{align*}
 R_{\rm trans}^{(\module)}&=\sum_k\sum_s \sum_{s'}V_k^{(\module)}\left(s,\tilde{\pi}_k(s)\right)\cdot \left(\tilde{p}_k\left(s'|s,\tilde{\pi}_k(s)\right)-p\left(s'|s,\tilde{\pi}_k(s)\right) \right)\cdot h_k(s') \\
 &=\sum_k\sum_s \sum_{s'} V_k^{(\module)}\left(s,\tilde{\pi}_k(s)\right) \left(\tilde{p}_k\left(s'|s,\tilde{\pi}_k(s)\right)-p\left(s'|s,\tilde{\pi}_k(s)\right) \right)\cdot \prt{h_k(s')-h_k(s)}\\
 &\leq \sum_k\sum_s V_k^{(\module)}\hspace{-0.1cm}\left(s,\tilde{\pi}_k(s)\right)\hspace{-0.05cm} \left\|\tilde{p}_k\left(\cdot|s,\tilde{\pi}_k(s)\right)\hspace{-0.05cm}-\hspace{-0.05cm}p\left(\cdot|s,\tilde{\pi}_k(s)\right) \right\|_1 \hspace{-0.05cm}\max\hspace{-0.1cm}\croc{\hspace{-0.05cm}\Delta^{\tilde \pi_k}\hspace{-0.05cm}(s),\Delta^{\tilde \pi_k}\hspace{-0.05cm}(s+1)\hspace{-0.05cm}}\\
 &\leq 4\sqrt{2 \log\left( 2AT\right)} \sum_k\sum_{s,a} \frac{\Delta(s+1) V_k^{(\module)}(s,a)}{\sqrt{\max\{1,N_{t_k}^{(\module_k(s,a))}(s,a)\}}},
 \end{align*}
where $\Delta$ is the difference of bias in the last inequality, we used the bound on the variations of the bias from Proposition~\ref{pro:bias}, and that the optimistic MDP has transitions close to the true transitions with inequality \eqref{confP}. Notice that the final term looks similar to the term coming from EVI and rewards related computations \eqref{eq:EVIrew}. We will deal with these terms together in the next subsection, as they are both mainly contributing to the regret.

\subsection{Bound on the main term}\label{ssec:main}

In  the previous Section~\ref{ssec:Rtrans}, we have shown that:
\begin{equation*}
R_{\rm trans}^{(\module)} \leq 4\sqrt{2\log\left( 2AT\right)} \sum_{s,a} \frac{\Delta(s+1) V_k^{(\module)}(s,a)}{\sqrt{\max\{1,N_{t_k}^{(\module_k(s,a))}(s,a)\}}}.
\end{equation*}
 Summing over the modules $\module$, we get:
 \begin{equation}
R_{\rm trans} \leq 4\sqrt{2\log\left( 2AT\right)} \sum_{s,a} \frac{\Delta(s+1) V_k(s,a)}{\sqrt{\max\{1,N_{t_k}^{(\module_k(s,a))}(s,a)\}}}.
\label{eq:trans tempo}
 \end{equation}
 We now wish to control this term, $R_{\rm EVI}$ and $R_{\rm rewards}$ using our knowledge of the bias, rather than bounding it directly with the diameter $D$. We first sum over the episodes and take the expectation, so that with Lemma~\ref{Lem:sum integral}, and using that $N_{t_k}^{(\module_k(s,a))}(s,a)\geq \frac{1}{\tm }N_{t_k}(s,a)$ we had from equation \eqref{eq:module}, we get:
\begin{align*}
 \mathbb{E}\left[\sum_{s,a}\sum_k \frac{\sqrt{\tm }V_k(s,a)}{\sqrt{\max\{1,N_{t_k}(s,a)\}}}\right] &\leq 3 \mathbb{E}\left[\sum_{s,a} \sqrt{\tm N_T(s,a)}\right]\\
 &\leq 3 \sum_{s} \sqrt{\tm \mathbb{E}\left[N_T(s)\right]A} , \,\quad \text{ by Jensen's inequality.}
\end{align*}
Therefore:
\begin{equation}
 R_{\rm trans}\leq 12\sqrt{2A\tm \log\left( 2AT\right)}\sum_{s=0}^{S}\Delta(s+1)\sqrt{\E\brac{N_T(s)}}.
\end{equation}
This is one of the terms mainly contributing to the regret, the other one being, doing similar computations:
\begin{equation}
  R_{\rm EVI}+R_{\rm rewards}\leq 12\delta_{\max}\sqrt{2A\tm \log\left( 2AT\right)}\sum_{s\geq 0}\sqrt{\E\brac{N_T(s)}}
\end{equation}
Now, let $N_T^{\pi^{\max}}$ be the number of visits when the starting state is sampled randomly from the initial distribution $\sm^{\pi^{\max}}$ and the policy $\pi^{\max}$ is always chosen. By stochastic ordering, as $N_T(s)\leq_{st}N_T^{\pi^{\max}}$, we have $\E\brac{N_T(s)}\leq \E \brac{N_T^{\pi^{\max}}}=T \sm^{\pi^{\max}}(s)$. We can therefore rewrite the main contributing term to the regret as:
\begin{equation}
 12\sqrt{2A\tm T\log\left( 2A\tm T\right)}\sum_{s= 0}^{S}\prt{\Delta(s+1)+\delta_{\max}}\sqrt{\sm^{\pi^{\max}}(s)}.
\end{equation}
Replace in the equation the choice $\tm =5\log T/\log \mrate^{-1}$ and
recall that we had, from Proposition~\ref{pro:bias}, $\Delta(s):= 2\delta_{\max} \sm^{\pi^{\max}}(0)^{-1} \sum_{i=1}^s \frac{U}{\mu(i)}\leq 2\delta_{\max} \sm^{\pi^{\max}}(0)^{-1}\frac{U}{\mu(1)}s$.
%
%
%
%
Using Lemma~\ref{lem:measure}, since
\begin{align*}
 \sum_{s=0}^{S}\prt{\Delta(s+1)+\delta_{\max}}\sqrt{\sm^{\pi^{\max}}(s)}&\leq 3\delta_{\max} \sm^{\pi^{\max}}(0)^{-1}\frac{U}{\mu(1)}\sum_{s=0}^{S} (s+1)\sqrt{\sm^{\pi^{\max}}(s)}\\
 &\leq 3\delta_{\max} \sm^{\pi^{\max}}(0)^{-1/2}\frac{U}{\mu(1)} \sqrt{C_1} \sum_{s\geq0} s\prt{\frac{\lambda}{\mu(i_0)}}^{s/2}\\
 &\leq  3\delta_{\max} \sm^{\pi^{\max}}(0)^{-1/2}\frac{U}{\mu(1)} \sqrt{C_1} \frac{1}{\prt{1-\sqrt{\frac{\lambda}{\mu(i_0)}}}^2}\\
 &\leq 3\delta_{\max}\frac{U}{\mu(1)} C_1 \frac{1}{\prt{1-\sqrt{\frac{\lambda}{\mu(i_0)}}}^3},
\end{align*}
then, assuming $\tm\leq T$, the main term is upper bounded by:
\begin{equation}
 72\delta_{\max} \frac{U}{\mu(1)} C_1 {\prt{1-\sqrt{\frac{\lambda}{\mu(i_0)}}}^{-3}}\log\left(AT\right) \sqrt{5AT\log^{-1}(\mrate^{-1})}.
 \label{eq:main term}
\end{equation}


 \subsection{Bound on \texorpdfstring{$R_{\rm diff}^{(\module)}$}{Rdiff}}\label{ssec:Rdiff}

We now deal with the term involving the difference of bias $R_{\rm diff}^{(\module)}$, defined in equation~\ref{eq:multiterms}. The proof mainly follows the one from \cite{anselmi-2022}, with a final tweak to relate the visits from a module to the total number of visits. Notice that we cannot directly use the confidence regions to control the difference between $\mathbf{\tilde{h}}_k$ and $\mathbf{h}_k$, so that we will need Lemma~\ref{Lem:bias difference}, and we are interested in controlling $\|\mathbf{\tilde{h}}_k-\mathbf{h}_k\|_\infty$.

Fix the module $\module$ and the episode $k$, with policy $\tilde \pi_k$.
Choose a state minimizing $N_{t_k}^{(\module_k(s,\tilde \pi_k(s)))}(s,\tilde \pi_k(s))$, and call this state $x_k$, $a_k:=\tilde \pi_k(x_k)$ and $\module':=\module_k(x_k,a_k)$: for this state, the confidence bounds are at their worst, and $ \sqrt{\frac{\log\prt{2At_k}}{\max\{1,N_{t_k}^{(\module')}(x_k,a_k)\}}}$ is maximal for episode $k$. This means that controlling the number of visits of the worst state lets us control the number of visits for any state. As the true MDP is within the confidence bounds, with a triangle inequality we get:

$$\|\tilde P_k-P_k\|_\infty \leq 4\sqrt{\frac{2\log\prt{2At_k}}{\max\{1,N_{t_k}^{(\module')}(x_k,a_k)\}}}.$$
%
We now want to use Lemma~\ref{Lem:bias difference}. In our case, notice that in the true MDP we have $D\geq T_{hit}^{\tilde \pi_k}\geq1$ for $S$ large enough. Remark also that $D^{\tilde \pi_k}$ can be replaced by $D$ in the last inequality of the proof of \ref{Lem:bias difference}, as $\mbox{span}(h^{\tilde \pi_k})\leq D$ by construction of $\tilde \pi_k$ with EVI, following the same argument as in \cite[Equation (11)]{jaksch-2010}.

\begin{equation}
 \|\mathbf{\tilde{h}}_k-\mathbf{h}_k\|_\infty \leq 8r_{\max}D^2\sqrt{\frac{2\log\prt{2At_k}}{\max\{1,N_{t_k}^{(\module')}(x_k,a_k)\}}}.
 \label{eq:biasbias}
\end{equation}
Hence,
\begin{align*}
 R_{\rm diff}^{(\module)} &\leq \sum_s \sum_{s'}V_k^{(\module)}\left(s,\tilde{\pi}_k(s)\right)\cdot \left(\tilde{p}_k\left(s'|s,\tilde{\pi}_k(s)\right)-p\left(s'|s,\tilde{\pi}_k(s)\right) \right)\cdot (\tilde{h}_k(s')-h_k(s')) \\
 &\leq \sum_s V_k^{(\module)}\left(s,\tilde{\pi}_k(s)\right)\cdot \left\|\tilde{p}_k\left(\cdot|s,\tilde{\pi}_k(s)\right)-p\left(\cdot|s,\tilde{\pi}_k(s)\right) \right\|_1 \|\mathbf{\tilde{h}}_k-\mathbf{h}_k\|_\infty\\
 &\leq 32D^2r_{\max} \log\prt{2AT} \Sigma^{(\module)},
 \end{align*}
 where in the last inequality we have used \eqref{eq:biasbias}
 and defined
 \begin{equation*}
  \Sigma^{(\module)}:= \sum_{s,a}\sum_k\sum_{t=t_k}^{t_{k+1}-1}\frac{\1_{\croc{s_t,a_t=s,a}}\1_{\croc{t\in \module}}}{\sqrt{\max\{1,N_{t_k}^{(\module_k(s,a))}(s,a)\}}\sqrt{\max\{1,N_{t_k}^{(\module')}(x_k,a_k)\}}}.
 \end{equation*}
By the choice  of $x_k$, $N_{t_k}^{(\module')}(x_k,a_k)\leq N_{t_k}^{(\module_k(s,a))}(s,a)$ for any state-action pair $(s,a)$, so that we can compute the sum $\Sigma:=\sum_{\module}\Sigma^{(\module)}$, with $I_k:=t_{k+1}-t_k$ the length of episode $k$:

 \begin{equation*}
  \Sigma \leq \sum_\module\sum_{s,a}\sum_k\sum_{t=t_k}^{t_{k+1}-1}\frac{\1_{\croc{s_t,a_t=s,a}}\1_{\croc{t\in \module}}}{\max\{1,N_{t_k}^{(\module')}(x_k,a_k)\}}= \sum_k \frac{I_k}{\max\{1,N_{t_k}^{(\module')}(x_k,a_k)\}}.
 \end{equation*}

Now, define $Q_{\max}:=\left(\frac{10C_2 S'^2}{\sm^{\pi^{\max}}(S)}\right)^2 \log \left(\left(\frac{10C_2 S'^2}{\sm^{\pi^{\max}}(S)}\right)^4\right)$ where we defined the constant $C_2=\frac{(\lambda \rcost + \hcost)C_1}{\mu(1)\prt{1-\lambda/\mu(i_0)}}$, and $I(T):=\max\croc{Q_{\max},T^{1/4}}$. We split the sum depending on whether the episodes are shorter than $I(T)$ or not, and call $K_{\leq I}$ the number of such episodes. This yields:
$$\Sigma\leq K_{\leq I} I(T) + \sum_{k,I_k> I(T)}  \frac{I_k}{\max\{1,N_{t_k}^{(\module')}(x_k,a_k)\}}.$$
Using the stopping criterion for episodes, and that we have chosen the module $\module'$  in equation~\eqref{eq:module} to have the inequality $ V_k^{(\module')}(x_k,a_k) \geq \frac{1}{\tm }V_k(x_k,a_k)$:
$$\Sigma\leq K_{\leq I} I(T) + \sum_{k,I_k> I(T)}  \frac{\tm I_k}{\max\{1,V_k(x_k,a_k)\}}.$$
Now we can end the computations as in \cite{anselmi-2022}. Denote by $\mathcal{E}$ the event: $$\mathcal{E}=\croc{\forall k \text{ s.t } I_k>I(T), \;\frac{1}{\max\{1,V_k(x_k,a_k)\}}\leq  \frac{2}{\sm^{\pi^{\max}}(S)I_k}}.$$
By splitting the sum, using the above event, we get:
\begin{align*}
 \Sigma &\leq K_{\leq I} I(T) + \1_\mathcal{E}\sum_{k,I_k>I(T)}\frac{2\tm }{\sm^{\pi^{\max}}(S)}+\1_{\mathcal{\bar{E}}}\sum_{k,I_k> I(T)} \tm I_k \\
 &\leq K_{\leq I} I(T) + \1_\mathcal{E}\prt{K-K_{\leq I}}\frac{2\tm }{\sm^{\pi^{\max}}(S)}+\1_{\mathcal{\bar{E}}}\tm T.
\end{align*}
We use Corollary~\ref{Cor:worst count} to get $\Pb\prt{\mathcal{\bar{E}}}\leq \frac{1}{4T}$, so that when taking the expectation:
\begin{align*}
 \E\brac{\Sigma}&\leq \E\brac{K_{\leq I}} I(T) + \E\brac{\prt{K-K_{\leq I}}}\frac{2\tm }{\sm^{\pi^{\max}}(S)}+\frac{\tm }{4}.
 \end{align*}
 Now using Lemma~\ref{Lem:number of episodes}, $S'A\geq4$, $I(T) \geq \frac{2}{\sm^{\pi^{\max}}(S)}$ and that $\frac{1}{\log 2}+ \frac{1}{4}\leq 2$:
 \begin{equation*}
 \E\brac{\Sigma}\leq \E\brac{K} I(T)\tm +\frac{\tm }{4}\leq 2S'A\tm \log (2AT) I(T).
\end{equation*}
Therefore, we have that:
\begin{equation}
 \E\brac{R_{\rm diff}^{(\module)}}\leq 64r_{\max}S'AD^2\tm  I(T)\log^2\prt{2AT} .
 \label{eq:bias term}
\end{equation}

\subsection{Bound on \texorpdfstring{$R_{\rm ep}$}{Rep}} \label{ssec:Rep}
The last regret term we have to bound is related to the count of episodes.
$$R_{\rm ep}^{(\module)}=\sum_k \mathbf{v}_k^{(\module)} \left(\mathbf{P}_k - \mathbf{I} \right) \mathbf{\tilde{h}}_k.$$
We first want to sum over the modules to get the same kind of term as in \cite{jaksch-2010}, written as a martingale difference sequence, and then take the expectation.  Following that proof, we define $X_t:= ~\left( p(\cdot | s_t,a_t) - \mathbf{e}_{s_t}\right)\mathbf{\tilde{h}}_{k(t)}\mathds{1}_{M \in \mathcal{M}_{k(t)}}$, where $k(t)$ is the episode containing step $t$ and $\mathbf{e}_i$ the vector with $i$-th coordinate $1$ and $0$ for the other coordinates.
We obtain
\begin{align*}
 \sum_c\mathbf{v}_k^{(\module)} \left(\mathbf{P}_k - \mathbf{I} \right) \mathbf{\mathbf{\tilde{h}}_k}&\leq \mathbf{v}_k \left(\mathbf{P}_k - \mathbf{I} \right) \mathbf{\mathbf{\tilde{h}}_k}
 \leq \sum_{t=t_k}^{t_{k+1}-1} X_t + \mathbf{\tilde{h}}_k(s_{t_{k+1}}) -\mathbf{\tilde{h}}_k(s_{t_k})\\
 &\leq \sum_{t=t_k}^{t_{k+1}-1} X_t + Dr_{\max},
\end{align*}
and by summing over the episodes we get
\begin{equation*}
 \sum_\module R_{\rm ep}^{(\module)} \leq \sum_{t=1}^{T} X_t +KDr_{\max}.
\end{equation*}
Notice that $\mathbb{E}\left[X_t|s_1,a_1, \dots, s_t, a_t \right]=0$, so that when taking the expectations, only the term in the number of episodes remains.

On the other hand, using Lemma~\ref{Lem:number of episodes} on the number of episodes, when taking the expectation we obtain
\begin{equation*}
 \mathbb{E}\brac{\sum_\module R_{\rm ep}^{(\module)}} \leq S'A\tm \log_2\left(\frac{8T}{S'A\tm }\right)\cdot Dr_{\max}.
\end{equation*}
As for the computation of \eqref{eq:Rinit}, assuming $\tm S'A\geq 4$:
\begin{equation}
 \mathbb{E}\brac{R_{\rm ep}} \leq 2r_{\max}S'AD\tm \log (2AT).
 \label{eq:ep term}
\end{equation}

\subsection{Total sum}\label{ssec:total}
We remind that we showed in subsection \ref{ssec:main} that the main term of the regret is:
\begin{equation*}
  72\delta_{\max} \frac{U}{\mu(1)} C_1 {\prt{1-\sqrt{\frac{\lambda}{\mu(i_0)}}}^{-3}}\log\left(AT\right) \sqrt{5AT\log^{-1}(\mrate^{-1})},
\end{equation*}

and it remains now to compute the lower order term of the regret $R_{\rm LO}$. Using  \eqref{eq:Rinit}, \eqref{eq:Rout}, \eqref{eq:bias term} and \eqref{eq:ep term}, the lower order term of the regret is upper bounded by, omitting the $r_{\max}$ factor:
 \begin{equation*}
 64S'AD^2\tm  I(T)\log^2\prt{2AT}
 +2S'A\tm (D+\tm )\log (2AT)+(S'+16CS'A),
\end{equation*}
and for $T$ large enough so that $1+16C\leq \log^2(T)$ the upper bound is :
\begin{equation*}
 r_{\max}   69S'AD^2\tm^{2} I(T)\log^2\prt{2AT},
\end{equation*}
which concludes the proof of Theorem  \ref{th:main}.

\section{Lemmas on Extended Value Iteration}\label{sec:EVI}
We remind the fundamental properties of the Extended Value Iteration (EVI) algorithm, first described in \cite{jaksch-2010}, which is used to find the optimistic MDP $\tilde{M}_k$ and the policy $\tilde{\pi}_k$ for each episode $k$ given a confidence region $\mathcal{M}_k$. These properties are useful notably in the first splits of the regret terms in Section \ref{ssec:split}.
EVI iteratively computes values in the following way:
\begin{align}
\begin{cases}
u^{(k)}_0(s)&=0 \nonumber \\
u^{(k)}_{i+1}(s)&=\max_{a \in \mathcal{A}}\left\{r(s,a)+\max_{p(\cdot)\in \mathcal{P}(s,a)}\left\{ \sum_{s \in S}p(s')u^{(k)}_i(s')\right\} \right\},
\end{cases}
\end{align}
where $\mathcal{P}(s,a)$ is the set of probabilities from \eqref{confP}, and the iterations are stopped with respect to the following lemma \cite[Theorem~7]{jaksch-2010}.
\begin{Lem}
\label{EVIgain}
For episode $k$ and accuracy $\varepsilon_k:=\frac{\delta_{\max}}{\sqrt{t_k}}$,  denote by $i$ the last step of extended value iteration, stopped when:
\begin{equation}
 \max_s\{u^{(k)}_{i+1}(s)-u^{(k)}_i(s)\} - \min_s\{u^{(k)}_{i+1}(s)-u^{(k)}_i(s)\} < \varepsilon_k.
\end{equation}
The optimistic MDP $\tilde{M}_k$ and the optimistic policy $\tilde{\pi}_k$
 at the last step of EVI are so that the gain is $\varepsilon_k-$ close to the optimal gain:
\begin{equation}
 \tilde{g}_k:=\min_s g(\tilde{M}_k,\tilde{\pi}_k,s) \geq \max_{M' \in \mathcal{M}_k,\pi,s'} g(M',\pi,s') - \varepsilon_k.
\end{equation}
\end{Lem}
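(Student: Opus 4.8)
The plan is to recognise this as the classical correctness guarantee of Extended Value Iteration, i.e.\ \cite[Theorem~7]{jaksch-2010}, and to recall why the argument goes through in our setting. The first step is to reformulate optimistic planning over $\mathcal{M}_k$ as solving a single undiscounted average-reward MDP, the \emph{extended MDP} $M_k^+$: it has state space $\mathcal{S}$, but an ``action'' in state $s$ is a pair consisting of an original action $a\in\mathcal{A}$ together with a transition vector $p(\cdot)\in\mathcal{P}(s,a)$ (the $\ell_1$-ball of \eqref{confP}), and where relevant a reward at most the upper confidence value of \eqref{confR}. With this encoding, $\max_{M'\in\mathcal{M}_k,\pi,s'}g(M',\pi,s')$ is the optimal average gain $g^+$ of $M_k^+$ (state-independent because $M_k^+$ is communicating, see below), and the recursion defining $u^{(k)}_{i+1}$ from $u^{(k)}_i$ is exactly one sweep of undiscounted value iteration on $M_k^+$, the inner maximisation over $p(\cdot)\in\mathcal{P}(s,a)$ being the Bellman back-up restricted to the extended action set.

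Second, I would establish the two structural facts that make the standard convergence theorem for relative value iteration applicable. Since the true MDP $M$ lies in $\mathcal{M}_k$, every policy of $M$ — in particular the policies achieving the diameter — is available in $M_k^+$ with the transition kernel of $M$; hence $M_k^+$ is communicating with diameter at most $D:=D(M)<\infty$, and by the argument of \cite[Eq.~(11)]{jaksch-2010} the span of its optimal bias is at most $D\,r_{\max}$. This guarantees (up to the usual aperiodicity regularisation of \cite{jaksch-2010}) that undiscounted value iteration on $M_k^+$ contracts in span, so the stopping rule $\max_s\{u^{(k)}_{i+1}(s)-u^{(k)}_i(s)\}-\min_s\{u^{(k)}_{i+1}(s)-u^{(k)}_i(s)\}<\varepsilon_k$ is eventually met, and the classical error estimates of \cite{puterman-1994} apply: at the stopping iteration $i$, the optimal gain $g^+$ lies in $[\min_s\{u^{(k)}_{i+1}(s)-u^{(k)}_i(s)\},\max_s\{u^{(k)}_{i+1}(s)-u^{(k)}_i(s)\}]$, and the stationary policy greedy for $u^{(k)}_i$ in $M_k^+$ — which is precisely the pair $(\tilde M_k,\tilde\pi_k)$ produced by EVI — has average gain at least $g^+-\varepsilon_k$ from \emph{every} initial state. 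Taking the minimum over initial states and using the identification of the first step yields $\tilde g_k\ge\max_{M'\in\mathcal{M}_k,\pi,s'}g(M',\pi,s')-\varepsilon_k$, as claimed.

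The main obstacle is not the algebra, which is a verbatim instance of the known relative-value-iteration bounds, but the two structural verifications just mentioned: that $M_k^+$ is a well-posed (finite-action, after reducing each $\mathcal{P}(s,a)$ to its finitely many extreme points) communicating MDP, and that its optimal bias span is controlled by $D$, uniformly over episodes, so that the same accuracy $\varepsilon_k=\delta_{\max}/\sqrt{t_k}$ is legitimate at every episode. Once these are in place the lemma is immediate, and it is used in Section~\ref{ssec:split} to replace $g^*$ by $\tilde g_k$ at the cost of the additive term $\varepsilon_k$.
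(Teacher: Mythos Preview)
Your proposal is correct and aligned with the paper: the paper does not give its own proof of this lemma at all but simply attributes it to \cite[Theorem~7]{jaksch-2010}, and your sketch is precisely an unpacking of that theorem (extended MDP reformulation, communicating structure, span-contraction of relative value iteration, and the error bound from \cite{puterman-1994}). One small caveat: you invoke ``since the true MDP $M$ lies in $\mathcal{M}_k$'' to argue that $M_k^+$ is communicating, but the lemma as stated carries no such hypothesis; this is harmless here because the paper only ever applies Lemma~\ref{EVIgain} under the standing assumption $M\in\mathcal{M}_k$ (see the opening of Section~\ref{ssec:split}), so your argument is exactly what is needed where the lemma is used.
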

Moreover, from \cite[Theorem~8.5.6]{puterman-1994}:
\begin{equation}
\left| u^{(k)}_{i+1}(s) - u^{(k)}_i(s) - \tilde{g}_k \right| \leq \varepsilon_k,
\label{eq:EVIdiff}
\end{equation}
and as the optimal policy yields an aperiodic unichain Markov chain, we have that $\tilde{g}_k=g(\tilde{M}_k,\tilde{\pi}_k,s)$ for any $s$, so that we can define the bias:
\begin{equation}
 \tilde{h}_k(s_0)=\mathbb{E}_{s_0}\brac{\sum_{t=0}^\infty (\tilde{r}(s_t,a_t)-\tilde{g}_k)}.
\end{equation}
Rather than using the last value of EVI in the computations of the regret, we rely on the bias to show that the last value and the optimistic bias are nearly equal, up to a translation. By choosing iteration $i$ large enough, from \cite[Equation~8.2.5]{puterman-1994}, we can ensure that:
\begin{equation}
 \left|u^{(k)}_i(s)-(i-1) \tilde{g}_k-\tilde{h}_k(s) \right| < \frac{\varepsilon_k}{2},
\end{equation}
so that we can define the following difference
\begin{equation}
 d_k(s):=\left|u^{(k)}_i(s) -\min_s u^{(k)}_i(s) - \left(\tilde{h}_k(s)-\min_s \tilde{h}_k(s)\right) \right| < \varepsilon_k.
\end{equation}

\section{Classical lemmas for the regret computation in {\sc UCRL}-like algorithms}\label{sec:classical}

We introduce classic lemmas from \cite{jaksch-2010} that are needed for the regret computations.
The first lemma, proven in \cite[Appendix~C.3]{jaksch-2010}, is used to simplify the main regret terms \eqref{eq:EVIrew} and \eqref{eq:trans tempo}.
\begin{Lem}
 \label{Lem:sum integral}
For any fixed state action pair $(s,a)$, and time $T$, we have:
 $$\sum_{t=1}^{T}\frac{\mathds{1}_{\{s_t,a_t=s,a\}}}{\sqrt{\max\{1,N_{t}(s,a)\}}} \leq 3\sqrt{N_{T+1}(s,a)}.$$
\end{Lem}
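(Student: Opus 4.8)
The plan is to fix the state-action pair $(s,a)$ and track the contribution of each visit to $(s,a)$ to the sum on the left-hand side, exploiting the fact that the counter $N_t(s,a)$ increments by exactly $1$ each time $(s,a)$ is visited. First I would observe that only the time-steps $t$ with $(s_t,a_t)=(s,a)$ contribute, and if we enumerate those visit times as $t_1 < t_2 < \cdots < t_m$ with $m = N_{T+1}(s,a)$, then at the $j$-th such visit we have $N_{t_j}(s,a) = j-1$ (using the convention that $N_t$ counts visits strictly before $t$). Hence the left-hand side equals $\sum_{j=1}^{m}\frac{1}{\sqrt{\max\{1,j-1\}}}$, which is $1 + \sum_{j=2}^{m}\frac{1}{\sqrt{j-1}} = 1 + \sum_{i=1}^{m-1}\frac{1}{\sqrt i}$.

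The core estimate is then the standard integral comparison $\sum_{i=1}^{n}\frac{1}{\sqrt i}\le 1 + \int_1^{n}\frac{dx}{\sqrt x} = 1 + 2(\sqrt n - 1) = 2\sqrt n - 1$, so that $1 + \sum_{i=1}^{m-1}\frac{1}{\sqrt i}\le 1 + 2\sqrt{m-1} - 1 = 2\sqrt{m-1}\le 2\sqrt m$. To reach the stated constant $3$ rather than $2$ one can be slightly more generous (e.g.\ bound $\frac{1}{\sqrt j}\le 2(\sqrt{j}-\sqrt{j-1})$ telescoping, or simply note $2\sqrt m\le 3\sqrt m$), which gives $\sum_{t=1}^{T}\frac{\mathds{1}_{\{s_t,a_t=s,a\}}}{\sqrt{\max\{1,N_t(s,a)\}}}\le 3\sqrt{N_{T+1}(s,a)}$ as desired; if $m=0$ the sum is empty and the bound is trivial.

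There is essentially no hard part here — this is a routine bookkeeping argument combined with a $p$-series integral bound. The only mild subtlety is making sure the indexing convention for $N_t(s,a)$ is applied consistently (whether $N_t$ counts visits up to time $t$ inclusive or exclusive), since an off-by-one there changes $j-1$ to $j$; either way the $\max\{1,\cdot\}$ absorbs the first term and the final constant $3$ comfortably covers the discrepancy. I would also note that this is exactly \cite[Appendix~C.3]{jaksch-2010}, so the cleanest write-up is to cite that reference and include the two-line argument above for completeness.
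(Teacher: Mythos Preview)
Your proposal is correct and matches the paper's own treatment: the paper does not give an in-text proof but simply cites \cite[Appendix~C.3]{jaksch-2010}, which is exactly the enumerate-the-visits plus integral-comparison argument you wrote out. Your handling of the indexing and the observation that the constant $3$ absorbs any off-by-one are both appropriate; there is nothing to add.
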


The next lemma, proven in \cite[Appendix~C.2]{jaksch-2010}
is useful to bound the term from in \ref{ssec:init} and in equation~\eqref{eq:bias term}.
\begin{Lem}
 \label{Lem:number of episodes}
Denote by $K_t$ the number of episodes up to time $t$, and let $t>SA\tm $. It is bounded by:
 $$K_t\leq S'A\tm \log_2\prt{\frac{8t}{S'A\tm }}.$$
\end{Lem}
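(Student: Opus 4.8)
This is the modular analogue of the episode-counting bound of \cite[Appendix~C.2]{jaksch-2010}, and the plan is to transcribe that argument with the role played there by a state–action pair $(s,a)$ now played by a \emph{triple} $(s,a,\module)$ consisting of a state, an action and a module; there are $S'A\tm$ such triples.

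First I would fix a triple $(s,a,\module)$ and bound the number of episodes whose stopping criterion \eqref{eq:stop} becomes tight for it, i.e.\ those $k$ with $\tilde\pi_k(s)=a$ for which the exploration loop of {\sc UCRL-M} exits because $V_k^{(\module)}(s,a)=\max\{1,N_{t_k}^{(\module)}(s,a)\}$. Since $V_k^{(\module)}(s,a)$ grows one unit at a time and the loop condition is tested before each increment, such an episode satisfies $N_{t_{k+1}}^{(\module)}(s,a)=N_{t_k}^{(\module)}(s,a)+V_k^{(\module)}(s,a)\ge 2\,N_{t_k}^{(\module)}(s,a)$ whenever $N_{t_k}^{(\module)}(s,a)\ge1$, and $\ge1$ when the count was $0$ (which can happen at most once for this triple). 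Iterating, if the triple triggers $m$ episodes then $N_T^{(\module)}(s,a)\ge 2^{m-1}$, so it is responsible for at most $1+\log_2\max\{1,N_T^{(\module)}(s,a)\}$ episodes.

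Next, apart from at most one episode (the one running when the outer loop reaches the horizon), every episode is terminated by \eqref{eq:stop} becoming tight for at least one triple, so
$$
K_t\ \le\ 1+\sum_{s,a,\module}\bigl(1+\log_2\max\{1,N_T^{(\module)}(s,a)\}\bigr).
$$
Because the ramping phases are excluded from the counts, $\sum_{s,a,\module}N_T^{(\module)}(s,a)\le t$, and the number of triples is $S'A\tm$. Applying Jensen's inequality to the concave function $\log_2$ over the $S'A\tm$ quantities $\max\{1,N_T^{(\module)}(s,a)\}$, whose sum is at most $t+S'A\tm$, and then absorbing the additive $1+S'A\tm$ into the logarithm exactly as in \cite[Appendix~C.2]{jaksch-2010} — using the hypothesis $t>SA\tm$ to keep the argument of the logarithm bounded below — one arrives at $K_t\le S'A\tm\log_2\bigl(8t/(S'A\tm)\bigr)$.

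The only point requiring care is the $\max\{1,\cdot\}$ appearing in \eqref{eq:stop}: a triple not yet visited at the start of an episode can, by itself, end that episode after a single visit, and this is exactly what forces the additive ``$+1$ per triple'' and hence the constant $8$ rather than a naive $2$. Tracking that slack, together with the discarded ramping steps and the unfinished last episode, is the main (and entirely routine) obstacle; the doubling argument itself is immediate.
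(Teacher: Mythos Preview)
Your proposal is correct and is exactly the argument the paper has in mind: the paper does not spell out a proof but simply points to \cite[Appendix~C.2]{jaksch-2010}, and your adaptation—replacing state--action pairs by triples $(s,a,\module)$ so that the doubling/Jensen argument runs over $S'A\tm$ objects instead of $SA$—is precisely the transcription required for the modular stopping rule~\eqref{eq:stop}. Nothing further is needed.
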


The next lemma is needed in the proof of Lemma~\ref{Lem:McDiarmid-like}. While it includes the diameter, this will only impact a lower-order term of the regret.
\begin{Lem}[Azuma-Hoeffding inequality]
\label{Lem:Azuma}
Let $X_1,X_2, \dots$ be a martingale difference sequence with $|X_i| \leq RD$ for all $i$ and some $R>0$. Then, for all $\varepsilon >0$ and $n \in \N$:
$$
\Pb\croc{\sum_{i=1}^n X_i \geq \varepsilon} \leq \exp\prt{-\frac{\varepsilon^2}{2nDR}}.
$$
\end{Lem}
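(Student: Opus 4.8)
The plan is to use the classical exponential (Chernoff--Cram\'er) method for martingale difference sequences. First I would fix a parameter $\theta>0$ and apply Markov's inequality to the nonnegative random variable $\exp\prt{\theta\sum_{i=1}^n X_i}$, obtaining
\[
\Pb\croc{\sum_{i=1}^n X_i\ge \varepsilon}\le e^{-\theta\varepsilon}\,\E\croc{\exp\prt{\theta\sum_{i=1}^n X_i}}.
\]
The whole argument then reduces to controlling the moment generating function appearing on the right-hand side.

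Second, I would bound that MGF by conditioning one step at a time on the natural filtration $\mathcal{F}_i:=\sigma(X_1,\dots,X_i)$. Writing
\[
\E\croc{\exp\prt{\theta\sum_{i=1}^n X_i}}=\E\croc{\exp\prt{\theta\sum_{i=1}^{n-1} X_i}\,\E\croc{e^{\theta X_n}\mid\mathcal{F}_{n-1}}},
\]
I would invoke Hoeffding's lemma: since $(X_i)$ is a martingale difference sequence, $\E[X_n\mid\mathcal{F}_{n-1}]=0$ almost surely, and $|X_n|\le RD$, hence $\E\croc{e^{\theta X_n}\mid\mathcal{F}_{n-1}}\le \exp\prt{\theta^2 (RD)^2/2}$ a.s. Iterating this bound $n$ times gives $\E\croc{\exp\prt{\theta\sum_{i=1}^n X_i}}\le \exp\prt{n\theta^2 (RD)^2/2}$.

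Finally, substituting back yields $\Pb\croc{\sum_{i=1}^n X_i\ge\varepsilon}\le \exp\prt{-\theta\varepsilon+n\theta^2(RD)^2/2}$, and minimizing the exponent over $\theta>0$ (the optimizer is $\theta=\varepsilon/(n(RD)^2)$) produces the claimed sub-Gaussian tail. The only ingredient that is not purely elementary is Hoeffding's lemma itself, and that is the step I expect to be the crux: it follows from convexity of $x\mapsto e^{\theta x}$ on the interval $[-RD,RD]$ to reduce to a two-point distribution, followed by a short optimization of the resulting quadratic upper bound on $\log\E[e^{\theta X_n}\mid\mathcal{F}_{n-1}]$. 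Since this is a standard, cited result, in the write-up I would simply quote it rather than reprove it. (I note in passing that a literal reading of the displayed bound has $2nDR$ in the denominator, whereas the argument gives $2n(RD)^2$; I would understand the intended statement to be the standard Azuma--Hoeffding inequality, which is all that is used downstream in Lemma~\ref{Lem:McDiarmid-like}.)
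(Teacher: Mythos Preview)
Your proof is the standard Chernoff--Cram\'er argument for Azuma--Hoeffding and is correct. The paper itself does not prove this lemma: it is stated in the appendix of ``classical lemmas'' and simply quoted as a known result, so there is no paper proof to compare against. Your observation about the denominator is also apt: the displayed bound should have $2n(RD)^2$ rather than $2nDR$, and indeed the application in Lemma~\ref{Lem:McDiarmid-like} uses it in the correct sub-Gaussian form.
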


\section{Probability of not being in the confidence region}\label{sec:out}

We compute the probability that the true MDP $M$ fails to be in the confidence set. This lemma controls the corresponding regret terms in Section~\ref{ssec:outside} when we consider the episodes~$k$ with~$M\notin \mathcal{M}_k$.

Let us first  prove the key Lemma~\ref{lem:Russo}.

\begin{Lem}
 \label{lem:Russo}
Let us consider the  original MDP under any policy $\pi$,  with stationary measure $\sm^\pi$.
There exists $C>0$, $\mrate\in(0,1)$ such that:
\begin{equation}
\max_{\pi \in \Pi}\sup_{x_0\in\mathcal{S}} \left\|\mathbb{P^\pi}_{x_0}(x_t=\cdot)-\sm^\pi \right\|_{TV}\leq C\mrate^t \quad \forall t>0.
\end{equation}
Let $t,t'>0$ such that $t'-\tm \geq t$, with $t'$ and $t'-\tm $ belonging to the same episode. Let $X$ be a function of the state of the original MDP until time $t$ and $Y$ function of the state of the original MDP from time $t'$. Let $\hat{Y}$  be a random variable following the same distribution as $Y$ independently from $X$. Let $f$ be a real-valued, bounded function. Then:
\begin{equation*}
 \left| \mathbb{E}\left[ f(X,Y)\right]- \mathbb{E}\left[ f(X,\hat{Y})\right]\right| \leq 4C \|f\|_\infty  \mrate^{\tm }.
\end{equation*}
\end{Lem}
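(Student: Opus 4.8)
The plan is to prove the first (mixing) claim by a standard spectral/Doeblin argument, and then to deduce the decorrelation inequality by conditioning on the trajectory up to time $t$ and using the Markov property together with the mixing bound.

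For the mixing claim: for each fixed policy $\pi$, the original MDP under $\pi$ is a finite, irreducible, aperiodic Markov chain on $\mathcal{X}^o$ (irreducibility follows from the fact that the network is a closed Jackson network with $S$ jobs and every state communicates; aperiodicity follows from uniformization, since each state has a self-loop with positive probability). A finite irreducible aperiodic chain is geometrically ergodic, so there exist $C_\pi>0$ and $\mrate_\pi\in(0,1)$ with $\sup_{x_0}\|\mathbb{P}^{o,\pi}_{x_0}(x_t=\cdot)-\sm^{o,\pi}\|_{TV}\le C_\pi\mrate_\pi^t$. Since the policy space $\Pi$ is \emph{finite} (it is $\{0,1\}^{\mathcal{S}}$, or more precisely the finite set of sum-based policies), one simply takes $C:=\max_\pi C_\pi$ and $\mrate:=\max_\pi \mrate_\pi<1$, and the uniform bound \eqref{eq:mixing} follows.

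For the decorrelation inequality: write $\mathbb{E}[f(X,Y)]=\mathbb{E}\big[\mathbb{E}[f(X,Y)\mid \mathcal{F}_t]\big]$ where $\mathcal{F}_t$ is the $\sigma$-algebra generated by the trajectory up to time $t$. Since $X$ is $\mathcal{F}_t$-measurable and, by the Markov property, conditionally on $\mathcal{F}_t$ (equivalently on the state $x_t$) the law of the state at time $t'$ is $\mathbb{P}^{o,\pi}_{x_t}(x_{t'-t}=\cdot)$, we get
\begin{equation*}
\mathbb{E}[f(X,Y)\mid \mathcal{F}_t]=\sum_{y} g(X,y)\,\mathbb{P}^{o,\pi}_{x_t}(x_{t'-t}=y),
\end{equation*}
where $g$ denotes $f$ composed with the map sending the time-$t'$ state to $Y$. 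Here I use that $t'$ and $t'-\tm$ lie in the same episode, so a single fixed policy $\pi=\tilde\pi_k$ governs the evolution between times $t$ and $t'$ (this is exactly what the episode hypothesis buys us). Comparing with $\mathbb{E}[f(X,\hat Y)\mid\mathcal{F}_t]=\sum_y g(X,y)\,\mathbb{P}_{Y}(y)$, where $\mathbb{P}_Y$ is the (stationary) law of $Y=\hat Y$, and using $t'-t\ge \tm$ together with the triangle inequality
\begin{equation*}
\big\|\mathbb{P}^{o,\pi}_{x_t}(x_{t'-t}=\cdot)-\mathbb{P}_Y\big\|_{TV}\le \big\|\mathbb{P}^{o,\pi}_{x_t}(x_{t'-t}=\cdot)-\sm^{o,\pi}\big\|_{TV}+\big\|\sm^{o,\pi}-\mathbb{P}_Y\big\|_{TV}\le 2C\mrate^{\tm},
\end{equation*}
we bound $|\mathbb{E}[f(X,Y)\mid\mathcal{F}_t]-\mathbb{E}[f(X,\hat Y)\mid\mathcal{F}_t]|\le 2\|f\|_\infty\cdot 2C\mrate^{\tm}$ pointwise, and then take expectations. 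The factor $4C\|f\|_\infty\mrate^{\tm}$ comes out, using the elementary bound $|\sum_y \phi(y)(\mu(y)-\nu(y))|\le 2\|\phi\|_\infty\|\mu-\nu\|_{TV}$.

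The main obstacle is bookkeeping rather than conceptual: one has to be careful that $\hat Y$ is distributed according to the \emph{stationary} law of $Y$ under the relevant single-policy dynamics, and that this is indeed the law appearing when we drive both terms to $\sm^{o,\pi}$; and one must justify that conditioning on $\mathcal{F}_t$ freezes $X$ while leaving the conditional law of the future governed by the Markov kernel of the policy active on $[t,t']$. The hypothesis that $t'-\tm$ and $t'$ are in the same episode is precisely what ensures a single policy is used throughout, so that one mixing bound for that policy suffices; this is the point to state explicitly.
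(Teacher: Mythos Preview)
Your overall architecture is close to the paper's, but there is a genuine gap in the decorrelation argument. You write: ``$t'$ and $t'-\tm$ lie in the same episode, so a single fixed policy $\pi=\tilde\pi_k$ governs the evolution between times $t$ and $t'$.'' This is a misreading of the hypothesis. The assumption only places $t'-\tm$ and $t'$ in the same episode; nothing is said about $t$, which satisfies $t\le t'-\tm$ and may well lie in an earlier episode (or several). Consequently, conditioning on $\mathcal{F}_t$ and writing the conditional law of $x_{t'}$ as $\mathbb{P}^{o,\pi}_{x_t}(x_{t'-t}=\cdot)$ for a \emph{single} $\pi$ is not justified: the kernel on $[t,t']$ is in general a product of kernels from different policies, and the mixing bound \eqref{eq:mixing} is stated policy by policy.

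The paper circumvents exactly this difficulty. It passes through the total-variation information $I_{TV}$ and uses the data-processing chain $I_{TV}(X,Y)\le I_{TV}(x_t,x_{t'})\le I_{TV}(x_{t'-\tm},x_{t'})$ to reduce to the interval $[t'-\tm,t']$, on which the single-episode hypothesis \emph{does} guarantee a fixed policy $\tilde\pi_k$. Only then is the mixing bound applied (twice, via a triangle inequality against $\sm^{\tilde\pi_k}$, to control both the conditional law $\Pb(x_{t'}=\cdot\mid x_{t'-\tm}=x)$ and the marginal $\Pb(x_{t'}=\cdot)$). The easiest repair of your argument is to condition on $\mathcal{F}_{t'-\tm}$ rather than $\mathcal{F}_t$: this still freezes $X$ (since $t\le t'-\tm$), and now the transition from $x_{t'-\tm}$ to $x_{t'}$ is governed by the single policy $\tilde\pi_k$, so the mixing bound applies with exponent $\tm$. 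Your handling of $Y$ as if it were a function of $x_{t'}$ alone also needs tightening; the data-processing step absorbs this cleanly since the trajectory from $t'$ onward is conditionally independent of the past given $x_{t'}$ once the policy on $[t'-\tm,t']$ is fixed.
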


 \begin{proof}
 The proof is essentially the same as in \cite[Lemma 9]{Russo}, but as states are sampled from the original MDP and not a single Markov chain, we cannot just assume that the starting distribution at time $0$ is a stationary distribution. Instead, we have to make sure that it is the case for each start of the episodes, hence the initial phase where $\tm $ samples of the aggregated MDP are discarded, so that the original MDP is close to its stationary distribution. Due to this ramping time, we can make sure that $t'$ and $t'-\tm $ belong to the same episodes and can therefore be related to the same stationary distribution.

 Let $t,t'>0$ such that $t'-\tm \geq t$, with $t'$ and $t'-\tm $ belonging to the same episode $k$. Now, $X$ is a function of the state of the original MDP until time $t$, so there are $t$ observed transitions but there might be many more that are hidden. In turn, $Y$ is a function of the state of the original MDP from time $t'$. Let $\hat{Y}$  be a random variable following the same distribution as $Y$ and independent of $X$. Note that there are at least $\tm $ observed or hidden transitions between $t$ and $t'$ on the original MDP.

 We also define the distribution $P:=\Pb\croc{X\in \cdot,Y\in \cdot}$ and the distribution $Q~:=~\Pb\croc{X\in \cdot}\otimes\Pb\croc{Y\in \cdot}$, and we define  the total variation information $I_{TV}(X,Y):=\sum_x \Pb\croc{X=x}\left\|\Pb\croc{Y=\cdot \mid X=x}-\Pb\croc{Y=y} \right\|_{TV}$.
To simplify, assume that $\|f\|_\infty \leq \frac{1}{2}$. By definition of the total variation distance, we first have that:
 \begin{equation*}
    \left|\E\brac{f(X,Y)} -\E\brac{f(X,\hat Y)}\right|\leq \left\| P-Q\right\|_{TV},
 \end{equation*}
Then, using the properties of the total variation information related to a Markov chain described in \cite{Russo}, we obtain
 \begin{align*}
  \left\| P-Q\right\|_{TV}&\leq I_{TV}(X,Y) \leq I_{TV}(x_t,x_{t'})\leq I_{TV}(x_{t'-\tm },x_{t'})\\
  &\leq \sum_x \Pb\croc{x_{t'-\tm }=x}\left\|\Pb\croc{x_{t'}=\cdot \mid x_{t'-\tm }=x}-\Pb\croc{x_{t}=\cdot} \right\|_{TV}
 \end{align*}
then using a triangle inequality:
\begin{multline*}
 \left\|\Pb\croc{x_{t'}=\cdot \mid x_{t'-\tm }=x}-\Pb\croc{x_{t}=\cdot} \right\|_{TV}\leq
\left\|\Pb\croc{x_{t'}=\cdot}-\sm^{\tilde \pi_k} \right\|_{TV} +\\
 \left\|\Pb\croc{x_{t'}=\cdot \mid x_{t'-\tm }=x}-\sm^{\tilde \pi_k} \right\|_{TV},
\end{multline*}
we get 
\begin{equation*}
 \left\| P-Q\right\|_{TV} \leq 2C   \mrate^{\tm },
\end{equation*}

where in the last inequality we used assumption \eqref{eq:mixing} twice, as $t'$ and $t'-\tm $ belong to the same episode, and therefore can be related to the same stationary measure $\sm^{\tilde \pi_k}$. To clarify, the exponent $\tm $ in the inequality is loose, as $\tm $ is the number of time-steps in the aggregated MDP, so there are at least as many time steps in the original MDP, and the mixing is confirmed.
\end{proof}

We can now give the lemma that actually shows that $M$ is likely to be in the confidence set of MDPs.
\begin{Lem}
 \label{Lem:conf outside}
 For $t>1$, the probability that the MDP $M$ is not within the set of plausible MDPs $\mathcal{M}(t)$ is bounded by:
 \begin{equation*}
  \Pb\croc{M \notin \mathcal{M}(t)} \leq \frac{S'}{2t^3}+\frac{8CS'A}{t^{3}}.
 \end{equation*}
 \end{Lem}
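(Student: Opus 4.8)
The plan is to follow the strategy of \cite[Section~4]{jaksch-2010} for the analogous lemma in {\sc UCRL2}, with the extra ingredient of Lemma~\ref{lem:Russo} to cope with the fact that, unlike in a fully observed MDP, the samples feeding the empirical estimates are only \emph{weakly} dependent. Fix $t>1$ and let $k$ be the episode with $t_k\le t<t_{k+1}$, so that $\mathcal M(t)=\mathcal M_k$. By the definition of $\mathcal M_k$ in \eqref{confR}--\eqref{confP}, the event $\{M\notin\mathcal M_k\}$ is contained in the union, over the state--action pairs $(s,a)$, of the events that either the reward bound \eqref{confR} or the transition bound \eqref{confP} is violated for $(s,a)$, where the relevant empirical estimate is built from the $N_{t_k}^{(\module_k(s,a))}(s,a)$ samples collected for $(s,a)$ in the most frequent module $\module_k(s,a)$. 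Since $\module_k(s,a)$ and the counts are only known a posteriori, I would first union--bound over all pairs $(\module,n)$, with $\module$ ranging over the $\tm$ modules and $n$ over the possible number of samples collected for $(s,a)$ in module $\module$ up to time $t$. The key bookkeeping remark is that module $\module$ contains at most $\lceil t/\tm\rceil$ time--steps before $t$, hence $n\le\lceil t/\tm\rceil$; so the union over modules and over $n$ together contributes only $O(t)$ terms \emph{per} state--action pair, which is exactly what prevents the nominal factor $\tm$ from appearing in the final bound.

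It then remains to control, for fixed $(s,a)$, $\module$ and $n$, the probability that the empirical reward (resp.\ transition) built from the first $n$ recorded samples deviates from $r(s,a)$ (resp.\ $p(\cdot\mid s,a)$) by more than the confidence radius evaluated at $n$. These $n$ samples are recorded at module--$\module$ time--steps at which $(s,a)$ is visited, excluding the ramping phases $\Phi$, so consecutive recording times are at least $\tm$ apart; but both the recording times and the recorded next--states are functions of the hidden history of the original MDP, so the samples are not independent. Here I would invoke Lemma~\ref{lem:Russo} in a telescoping argument: replacing one recorded transition at a time by an independent draw from the true kernel $p(\cdot\mid s,a)$ changes the probability of the deviation event by at most $4C\mrate^{\tm}$ (the test function being the indicator of that event, so $\|f\|_\infty\le1$), and the discarded ramping phases ensure that the two time--steps compared at each step of the telescoping lie in the same episode, as the hypothesis of Lemma~\ref{lem:Russo} requires. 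Iterating over the $n$ slots, the deviation probability is within $4Cn\mrate^{\tm}$ of the corresponding probability in the fully i.i.d.\ model, and since $\tm=\lceil5\log T/\log\mrate^{-1}\rceil$ we have $\mrate^{\tm}\le T^{-5}\le t^{-5}$, so this coupling error is a lower--order quantity.

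In the i.i.d.\ model the estimates are classical. The stochastic part of $\hat r_k^{(\module)}(s,a)$ is $\rcost$ times a Bernoulli average (cf.\ \eqref{eq:estimates_r_p}); since $\delta_{\max}\ge\rcost$, Hoeffding's inequality with the radius $\delta_{\max}\sqrt{2\log(2At_k)/n}$ gives a failure probability at most $2(2At_k)^{-4}\le 2(2At)^{-4}$, and only the accept actions carry randomness in the reward, giving $O(S')$ such terms. For the transitions I would use Weissman's $L_1$ concentration inequality (a coordinatewise Hoeffding bound also works), crucially exploiting that the aggregated MDP is a birth--and--death chain so each $p(\cdot\mid s,a)$ is supported on at most three states; with the radius $\sqrt{8\log(2At_k)/n}$ this yields a failure probability of the form $c\,(2At_k)^{-4}$ with $c$ an absolute constant, over $O(S'A)$ terms. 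Multiplying out: the i.i.d.\ contributions sum, over the $O(t)$ values of $(\module,n)$ and the $O(S'A)$ pairs, to at most $S'/(2t^3)$ for $A=2$, while the accumulated coupling errors sum to at most $8CS'A/t^3$, which gives $\Pb\croc{M\notin\mathcal M(t)}\le\frac{S'}{2t^3}+\frac{8CS'A}{t^3}$. Note that, in contrast with \cite{jaksch-2010}, the diameter never enters, because \eqref{confR}--\eqref{confP} involve only rewards and one--step transitions.

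I expect the main obstacle to be the weak--dependence step: making the telescoping rigorous when the recording times are themselves random and depend on the hidden trajectory, so that Lemma~\ref{lem:Russo} --- stated for deterministic $t,t'$ --- has to be applied conditionally along an appropriate filtration and uniformly over the random episode structure, and simultaneously checking that every $\tm$ and $\log T$ factor introduced by the modules and by the coupling is absorbed into the lower--order term $8CS'A/t^3$ rather than degrading the $t^{-3}$ decay that Section~\ref{ssec:outside} needs.
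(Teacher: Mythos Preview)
Your proposal is essentially the paper's own proof: the same union bound over $(s,a)$, modules and $n$ (with the same bookkeeping that $\tm\times\lceil t/\tm\rceil=O(t)$), the same telescoping via Lemma~\ref{lem:Russo} to pass to i.i.d.\ samples at cost $O(n\mrate^{\tm})=O(T^{-4})$, and the same Hoeffding/Weissman bounds exploiting the three-point support of the birth--and--death transitions. The concern you flag about applying Lemma~\ref{lem:Russo} with random recording times is real but the paper handles it at exactly the level of detail you describe, so your plan matches.
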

Compared to \cite[Lemma~17]{jaksch-2010}, we notice that the first term comes from the choice of the confidence bound adapted to the birth and death structure of the MDP, but the second one comes from the imperfect independence of the observations. To prove this inequality, we  will need Lemma~\ref{lem:Russo} to consider independent events again, and to be able to use concentration inequalities.

Let us now prove Lemma~\ref{Lem:conf outside}.
\begin{proof}
 Fix a state-action pair  $(s,a)$, $\module$ any module and $n$ the number of visits of this pair within the module before time $t$. We will first consider the confidence around the empirical transitions, and then the confidence around the rewards.
Let $\varepsilon_p ~=~\sqrt{\frac{2}{n} \log\prt{16 At^4}}\leq \sqrt{\frac{8}{n}\log\prt{2 At}}.$
Define the events:
\begin{equation}
 \label{eq:event_prob}
A_n=\prt{\|\hat{p}^{(\module)}(\cdot|s,a)-p(\cdot|s,a)\|_1 \geq \sqrt{\frac{8}{n}\log\prt{2 At}}}
\end{equation}
Here, we aim to control these events but the difficulty is that the observations from the state-action pairs are not independent. On the other hand, we notice that the observations within a fixed module are nearly independent, which is why we needed to introduce these modules in the first place.

Define $\hat p^\bot(\cdot | s,a)$ 
the empirical transition probabilities
from $n$ independent observations of the state-action pair $(s,a)$.
Define events that are copies of $A_n$
but with independent observations:
\begin{equation}
A_n^\bot=\prt{\|\hat{p}^\bot(\cdot|s,a)-p(\cdot|s,a)\|_1 \geq \sqrt{\frac{8}{n}\log\prt{2 At}}}.
\label{eq:event_prob_ind}
\end{equation}
Similarly, define $A_n^{\bot,k}$
events such that the first $n-k$ observations are the same as the ones for $A_n$
 and the next $k$ observations are independent, so that for example $A_n^{\bot,0}=A_n$ and $A_n^{\bot,n-1}=A_n^{\bot}$.
Then, applying  $n-1$ times Lemma~\ref{lem:Russo}:
\begin{align*}
 \left| \mathbb{P}\croc{A_n}-\mathbb{P}\croc{A_n^\bot}\right| &\leq \sum_{k=1}^{n-1} \left| \mathbb{P}\croc{A_n^{\bot,k-1}}-\mathbb{P}\croc{A_n^{\bot,k}}\right|
\leq 4C n\mrate^{\tm }\leq 4CT^{1-5}.
\end{align*}

We can therefore work on the events with independent observations.
Knowing that from each pair, there are at most $3$ transitions, a Weissman's inequality gives:
 \begin{equation*}
  \Pb\croc{ \|\hat{p}^{(\module)}(\cdot|s,a)-p(\cdot|s,a)\|_1 \geq \varepsilon_p} \leq 6\exp\prt{-\frac{n\varepsilon_p^2}{2}}
 \end{equation*}
and we get
$$
 \Pb\croc{ A_n^\bot} \leq \frac{3}{8 At^4},
$$
and within our  choice  of $\tm $, $$\mathbb P \croc{A_n} \leq \frac{3}{8 At^4}+\frac{4C}{t^{4}}.$$

We deal with the rewards in a similar manner. Define the events:
\begin{equation}
 \label{eq:event_rew}
 B_n := \prt{|\hat{r}^{(\module)}(s,a)-r(s,a)| \geq \delta_{\max}\sqrt{\frac{2}{n}\log\prt{2 At}}}.
\end{equation}
By definition of $\hat{r}^{(\module)}(s,a)= \rcost\hat{p}^{(\module)}(s+1|s,a)+\frac{\hcost}{U}(S-s)$ \ref{eq:estimates_r_p}, and using that $\rcost \leq \delta_{\max}$, we can write:
\begin{equation*}
 \Pb \croc{B_n} \leq \Pb\croc{|\hat{p}^{(\module)}(s+1|s,a)-p(s+1|s,a)| \geq \sqrt{\frac{2}{n}\log\prt{2 At}}}.
\end{equation*}
Once again, we consider $\hat{p}^{\bot}(s+1|s,a)$ the empirical transition probabilities from independent observations of $(s,a)$ to $s+1$, and we look to control the probability of the events $B_n^{\bot}$. With the independence, we may now use the following Hoeffding inequality on the Bernoulli random variable of parameter $p(s+1|s,a)$:
\begin{equation*}
 \Pb\croc{ |\hat{p}^{(\module)}(s+1|s,a)-p(s+1|s,a)| \geq \varepsilon_r} \leq 2\exp\prt{-2n\varepsilon_r^2},
\end{equation*}
where $\varepsilon_r=\sqrt{\frac{1}{2n} \log\prt{16 At^4}}\leq \sqrt{\frac{2}{n}\log\prt{2 At}}$.
We therefore get:
\begin{equation*}
 \Pb \croc{B_n^\bot} \leq \frac{1}{8At^4},
\end{equation*}
and with the previous choice of $\tm $, $$\mathbb P \croc{B_n} \leq \frac{1}{8 At^4}+\frac{4C}{t^{4}}.$$
Overall:
$$\mathbb P \croc{A_n \cup B_n} \leq \frac{1}{2 At^4}+\frac{8C}{t^{4}}. $$

Now, with a union bound for all values of $n=\max\{1,N_t^{(\module)}(s,a)\} \in \croc{0, 1, \cdots, \left\lceil\frac{t-1}{\tm }\right\rceil}$ and all $\tm$ possible modules, and also summing over all state-action pairs:
 $$ \Pb\croc{M \notin \mathcal{M}(t)} \leq  \frac{S'}{2t^3}+\frac{8CS'A}{t^{3}}$$
as desired.
 \end{proof}

\section{Lemmas specific to our regret computations}\label{sec:biascomp}
In this section, we prove generic properties on the difference of biases between two MDPs. This control on the difference is needed in subsection \ref{ssec:Rdiff} to compare the optimistic MDP and the true MDP.

 \subsection{Lemmas on the bias differences}

The next three lemmas of this subsection are already proved in \cite{anselmi-2022}, for the sake of completeness, we rewrite them in this appendix. They are used in the proof of Lemma~\ref{Lem:bias difference}, to control the difference between the bias of the policy $\tilde \pi_k$ in the optimistic MDP and in the true MDP.

\begin{Lem}
 \label{Lem:span}
  For an MDP
 with rewards $r \in [0,r_{\max}]$ and transition matrix $P$, denote by $J_s(\pi,T):=\E \brac{\sum_{t=0}^T r(s_t,\pi(s_t))}$ the expected cumulative rewards until time $T$ starting from state $s$, under policy $\pi$. Let  $D_\pi$ be the diameter under policy $\pi$.
 The following inequality holds: $\mbox{span }(J(\pi,T)) \leq r_{\max}D_\pi$.
\end{Lem}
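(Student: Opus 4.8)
I read $D_\pi$ as the diameter of the (now fixed) Markov chain generated by $\pi$, i.e.\ $D_\pi=\max_{x\neq y}\E\!\left[T(y\mid M,\pi,x)\right]$, the maximal expected hitting time under $\pi$ (if $D_\pi=\infty$ the claim is vacuous, so I may assume it finite). The plan is a coupling / hitting-time argument in the spirit of \cite{jaksch-2010}. First I would pick states $s^{+}$ and $s^{-}$ attaining $\max_x J_x(\pi,T)$ and $\min_x J_x(\pi,T)$ respectively; if $s^{+}=s^{-}$ the span is $0$ and we are done, so assume $s^{+}\neq s^{-}$. Then build two copies of the $\pi$-chain on one probability space: $X=(X_t)_{t\ge0}$ with $X_0=s^{+}$, and $Y=(Y_t)_{t\ge0}$ with $Y_0=s^{-}$, letting $\tau:=\min\{t\ge0:Y_t=s^{+}\}$. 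Since both chains have the same one-step kernel $P^\pi$, the strong Markov property lets me couple them so that $Y_{\tau+u}=X_u$ for all $u\ge0$ (on $\{\tau<\infty\}$; note $\E[\tau]\le D_\pi<\infty$ so this is almost sure).

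The core step is the pathwise inequality
\[
\sum_{t=0}^T r\big(Y_t,\pi(Y_t)\big)\;\ge\;\sum_{u=0}^{(T-\tau)^{+}} r\big(X_u,\pi(X_u)\big)\;\ge\;\sum_{u=0}^{T} r\big(X_u,\pi(X_u)\big)\;-\;r_{\max}\,\big(\tau\wedge(T+1)\big),
\]
where the first bound drops the (nonnegative) rewards collected by $Y$ strictly before $\tau$ and uses the coupling on the remaining steps, and the second uses $r\le r_{\max}$ to absorb the at most $\tau\wedge(T+1)$ discarded terms. Taking expectations and using that $\E[\tau]\le D_\pi$ because $s^{+}\neq s^{-}$ gives
\[
J_{s^{-}}(\pi,T)\;\ge\;J_{s^{+}}(\pi,T)\;-\;r_{\max}\,\E[\tau]\;\ge\;J_{s^{+}}(\pi,T)\;-\;r_{\max}D_\pi,
\]
hence $\mbox{span}(J(\pi,T))=J_{s^{+}}(\pi,T)-J_{s^{-}}(\pi,T)\le r_{\max}D_\pi$.

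The only delicate point is the horizon truncation, i.e.\ the event $\{\tau>T\}$ on which $Y$ never reaches $s^{+}$ within the horizon. There $(T-\tau)^{+}=0$, so the displayed pathwise bound degenerates to $\sum_{t=0}^T r(Y_t,\pi(Y_t))\ge 0\ge \sum_{u=0}^T r(X_u,\pi(X_u))-r_{\max}(T+1)$, and since $\tau\wedge(T+1)=T+1$ on that event the uniform estimate $r_{\max}(\tau\wedge(T+1))\le r_{\max}\tau$ still covers both cases before taking expectations. I expect the careful statement of this coupling via the strong Markov property, together with this case split, to be the main (though ultimately minor) technical obstacle; nonnegativity and boundedness of the rewards make everything else routine.
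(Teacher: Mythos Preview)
Your argument is correct and is essentially the paper's: start from the worse state, wait until the hitting time $\tau$ of the better state, bound the contribution of those first steps by $r_{\max}\E[\tau]\le r_{\max}D_\pi$, and use nonnegativity of $r$ together with the strong Markov property to dominate the remaining segment by a full horizon-$T$ run from the better state. The paper phrases this with a single chain and the decomposition $\sum_{t=0}^T=\sum_{t=0}^{\tau-1}+\sum_{t=\tau}^T$ rather than an explicit two-chain coupling, which sidesteps your case split; one cosmetic glitch in your display is that $\sum_{u=0}^{(T-\tau)^+} r(X_u)=r(X_0)$ (not $0$) when $\tau>T$, so the first inequality there indeed requires the separate treatment you give in the final paragraph.
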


\begin{proof}
 Let $s,s' \in \mathcal{S}$ be recurrent states under policy $\pi$. Call $\tau_{s\to s'}$ the random time needed to reach state $s'$ from state $s$. Then:
 \begin{align*}
  J_s(\pi,T) &= \E\brac{\sum_{t=0}^T r(s_t)}\\
  &= \E\brac{\sum_{t=0}^{\tau_{s\to s'}-1} r(s_t)}+\E\brac{\sum_{t=\tau_{s \to s'}}^T r(s_t)}\\
  &\leq r_{\max}\E\brac{\tau_{s \to s'}}+J_{s'}(\pi,T) \\
  &\leq r_{\max} D_{\pi} +J_{s'}(\pi,T),
 \end{align*}
which proves the lemma.
\end{proof}

\begin{Lem}
 \label{Lem:gain difference}
 Consider two unichain MDPs $M$ and $M'$. Let $r=r'\in [0,r_{\max}]$ and $P,P'$ be the rewards and transition matrix of  MDP $M,M'$ under policy $\pi,\pi'$ respectively, where both MDPs have the same state and action spaces. Denote by $g,g'$ the  average reward obtained under policy $\pi,\pi'$ in the MDP $M,M'$ respectively. Then  the difference of the gains is upper bounded.
 $$|g-g'| \leq  r_{\max}D_{\pi} \| P-P'\|_\infty.$$
\end{Lem}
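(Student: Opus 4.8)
The plan is to run the classical ``Poisson-equation perturbation'' argument, expanding the average reward of $M'$ around the bias of the chain $P$ of $M$; note that the statement is asymmetric ($D_\pi$, not $D_{\pi'}$), which is exactly why one should expand about $M$ and use the stationary distribution of $M'$. Since $M$ is unichain, under $\pi$ the Poisson equation $g\mathbf 1 + h = r + Ph$ has a solution with $g$ the state-independent average reward and a bias vector $h$ unique up to an additive constant; I would normalize it so that $\min_s h(s)=0$, hence $\|h\|_\infty=\mathrm{span}(h)$. Since $M'$ is also unichain, $P'$ has a unique stationary distribution $\nu'$, and $g'=\nu' r'=\nu' r$, using $r'=r$. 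Substituting $r=g\mathbf 1+h-Ph$ into $g'=\nu' r$ and using $\nu'\mathbf 1=1$ gives $g'=g+\nu' h-\nu' Ph$.

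The key step is then to use the stationarity $\nu' P'=\nu'$ to rewrite $\nu' h=\nu' P' h$, which turns the previous identity into $g'-g=\nu'(P'-P)h$. Since $\nu'$ is a probability vector, bounding coordinate-wise yields
\[
|g-g'|\;\le\;\max_s\bigl|[(P'-P)h](s)\bigr|\;\le\;\max_s\|P'(\cdot\mid s)-P(\cdot\mid s)\|_1\,\|h\|_\infty\;=\;\|P-P'\|_\infty\,\|h\|_\infty .
\]
(The centering $\min_s h(s)=0$ is what lets the final $\|h\|_\infty$ be turned into a span bound in the next step, and it is harmless because each row of $P'-P$ sums to zero.)

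It remains to bound $\|h\|_\infty=\mathrm{span}(h)$. Here I would identify the Poisson-equation bias with a translate of the limit of the centred finite-horizon value functions, i.e.\ $h(s)-h(s')=\lim_{T\to\infty}\bigl(J_s(\pi,T)-J_{s'}(\pi,T)\bigr)$ (legitimate because the recurrent class under $\pi$ is aperiodic, as is the case here; otherwise one passes to Ces\`aro averages), so that $\mathrm{span}(h)\le\sup_T\mathrm{span}(J(\pi,T))\le r_{\max}D_\pi$ by Lemma~\ref{Lem:span}. Combining with the displayed inequality gives $|g-g'|\le r_{\max}D_\pi\|P-P'\|_\infty$, as claimed. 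The only non-routine point is this last step: one must justify that the $h$ produced by the Poisson equation is indeed, up to an additive constant, the limit of the $J(\pi,T)$'s before Lemma~\ref{Lem:span} can be invoked; everything else is a two-line manipulation.
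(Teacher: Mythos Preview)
Your argument is correct and is the standard ``Poisson--Schweitzer'' perturbation identity: from $g\mathbf 1+h=r+Ph$ and $\nu'P'=\nu'$ one gets $g'-g=\nu'(P'-P)h$ in one line, and then the span bound on $h$ does the rest. The paper, however, does \emph{not} go through the Poisson equation. It defines a state-wise correction $b(s):=r_{\max}D_\pi\|p(\cdot\mid s)-p'(\cdot\mid s)\|_1$ and proves by induction on the horizon that the finite-horizon cumulative rewards satisfy
\[
\sum_{t=0}^{T-1}P'^t(r-b)\;\le\;\sum_{t=0}^{T-1}P^t r\;\le\;\sum_{t=0}^{T-1}P'^t(r+b),
\]
using Lemma~\ref{Lem:span} at each induction step to control $(P-P')\sum_{t}P^t r$ by $b$. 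Dividing by $T$ and taking the Ces\`aro limit then gives $|g-g'|\le\|b\|_\infty$.

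What each approach buys: your route is shorter and yields the exact identity $g'-g=\nu'(P'-P)h$, which makes the asymmetry in $D_\pi$ versus $D_{\pi'}$ transparent (you expand around $M$ and average with $\nu'$). Its only cost is the step you flagged yourself---linking the Poisson bias to the $J(\pi,T)$ of Lemma~\ref{Lem:span}; this can in fact be avoided by the first-passage representation $h(s)-h(s')=\mathbb E_s\!\bigl[\sum_{t<\tau_{s\to s'}}(r(s_t)-g)\bigr]$, which gives $\mathrm{span}(h)\le r_{\max}D_\pi$ directly. The paper's route is more elementary (no stationary distribution of $P'$, no Poisson equation) and stays entirely at the level of finite-horizon values, invoking Lemma~\ref{Lem:span} verbatim; the price is an induction and a limit where you have a one-line identity.
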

\begin{proof}
Define for any state $s$ the following correction term  $b(s):=r_{\max}D_{\pi}\|p(\cdot|s)-p'(\cdot|s)\|_1.$ Let us show by induction that for $T \geq 0$,
$$ \sum_{t=0}^{T-1} P^t r \leq \sum_{t=0}^{T-1} P'^t (r+b) .$$
This is true for $T=0$. Assume that the inequality is true for some $T\geq0$, then
\begin{align*}
 \sum_{t=0}^{T} P^t r - \sum_{t=0}^{T} P'^t (r+b) &=-b + P\sum_{t=0}^{T-1} P^t r - P'\sum_{t=0}^{T-1} P'^t (r+b)\\
 &=-b + P'\prt{ \sum_{t=0}^{T-1} P^t r -\sum_{t=0}^{T-1} P'^t (r+b)} + (P-P')\sum_{t=0}^{T} P^t r\\
 &\leq -b + (P-P')\sum_{t=0}^{T} P^t r \text{ by induction hypothesis.}
\end{align*}
Notice that, for any recurrent state $s$ for policy $\pi$:
\begin{align*}
 \prt{(P-P')\sum_{t=0}^{T} P^t r }(s) & \leq \|p(\cdot|s)-p'(\cdot|s)\|_1 \cdot \mbox{span }(J(T))\\
 &\leq r_{\max}D_{\pi}\|p(\cdot|s)-p'(\cdot|s)\|_1 \text{ by Lemma~\ref{Lem:span}}\\
 &= b(s).
\end{align*}
In the same manner we show that:
$$ \sum_{t=0}^{T} P^t r \geq \sum_{t=0}^{T} P'^t (r-b).$$
Hence, as $P'$ has non-negative coefficients, denoting by $e$ the unit vector:
$$\left\|\sum_{t=0}^{T} P^t r - \sum_{t=0}^{T} P'^t r\right\|_\infty \leq \|b\|_\infty \left\|\sum_{t=0}^{T}P'^t \cdot e \right\|_\infty=\|b\|_\infty (T+1).$$

As $r=r'$, with a multiplication by $\frac{1}{T+1}$ and by taking the Ces\'aro limit :
$$|g-g'| \leq  \|b\|_\infty ,$$
where $\|b\|_\infty=r_{\max}D_{\pi} \| P-P'\|_\infty$.
\end{proof}

\begin{Lem}
 \label{Lem:hitting time}
  Let $P$ be the stochastic matrix of an ergodic Markov chain with state space $1,\dots, S$. The matrix $A:=I-P$ has a block decomposition
  $$A =\begin{pmatrix}
 A_S &  b \\
 c & d\end{pmatrix};
 $$
 then  $A_S$,  of  size $S \times S$  is invertible and $\| A_S^{-1}\|_{\infty}= \sup_{i\in\mathcal{S}} \E \, \tau_{i\to S}$, where $\E\,\tau_{i\to S}$ is the expected time to reach state $S$ from  state $i$.
\end{Lem}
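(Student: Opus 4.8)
The plan is to identify $A_S$ with $I-P_S$, where $P_S$ denotes the principal submatrix of $P$ indexed by every state $i\neq S$; this is the sub-stochastic kernel of the chain killed upon hitting $S$. Writing $P$ in the same block form as $A$, one has $A_S=I-P_S$ directly from $A=I-P$. First I would establish invertibility: since the chain is finite and irreducible, the target state $S$ is reachable from every state, so the killed chain reaches $S$ almost surely and in fact $\|P_S^m\|_\infty<1$ for some integer $m$ (each row sum of $P_S^m$ is the probability of not having hit $S$ within $m$ steps from that starting state, which is $<1$ once $m$ exceeds the worst first-passage horizon). Hence the spectral radius of $P_S$ is strictly less than $1$, $A_S=I-P_S$ is invertible, and the Neumann series $A_S^{-1}=\sum_{k\ge 0}P_S^k$ converges entrywise to a matrix with nonnegative entries.

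Next I would read off the probabilistic meaning of this series. For states $i,j\neq S$ one has $(P_S^k)_{ij}=\Pb_i(x_k=j,\ \tau_{i\to S}>k)$, so by Tonelli (all summands nonnegative) $(A_S^{-1})_{ij}=\sum_{k\ge 0}(P_S^k)_{ij}=\E_i\big[\,\#\{k<\tau_{i\to S}:x_k=j\}\,\big]$ is the expected number of visits to $j$ strictly before absorption at $S$, started from $i$. Summing over $j\neq S$, the total sojourn time before absorption decomposes as $\tau_{i\to S}=\sum_{j\neq S}\#\{k<\tau_{i\to S}:x_k=j\}$, so the $i$-th row sum of $A_S^{-1}$ equals $\E\,\tau_{i\to S}$, which is finite because a finite irreducible chain is positive recurrent.

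It then remains to recall that $\|\cdot\|_\infty$ is the maximum absolute row sum; since $A_S^{-1}$ has nonnegative entries, $\|A_S^{-1}\|_\infty=\max_{i\neq S}\sum_{j\neq S}(A_S^{-1})_{ij}=\max_{i\neq S}\E\,\tau_{i\to S}=\sup_{i\in\mathcal S}\E\,\tau_{i\to S}$, the last step because $\E\,\tau_{S\to S}=0\le \E\,\tau_{i\to S}$. This gives the claim. The only genuinely delicate points are the strict bound on the spectral radius of $P_S$, which follows from irreducibility together with finiteness of the state space, and the interchange of the infinite sum with the expectation in the row-sum identity, which is immediate from monotone convergence thanks to nonnegativity; everything else is bookkeeping, and the whole argument is just the classical ``fundamental matrix'' computation for absorbing Markov chains specialized to a single absorbing state.
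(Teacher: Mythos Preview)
Your proof is correct. You take a genuinely different route from the paper's own argument, so a brief comparison is in order.

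The paper proceeds by writing down the first-step system for the expected hitting times, $v(S)=0$ and $v(i)=1+\sum_j P(i,j)v(j)$ for $i\neq S$, recasting it as $\tilde A v=e-e_S$ with
\[
\tilde A=\begin{pmatrix} A_S & b\\ 0 & 1\end{pmatrix},
\]
inverting $\tilde A$ by blocks, and reading off $(\E\,\tau_{i\to S})_{i\neq S}=A_S^{-1}e$. Nonnegativity of $A_S^{-1}$ is then invoked from M-matrix theory to conclude that the infinity norm equals the maximal row sum.

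Your argument instead identifies $A_S=I-P_S$ with the generator of the chain killed at $S$, proves invertibility via the spectral-radius bound $\rho(P_S)<1$, expands $A_S^{-1}=\sum_{k\ge 0}P_S^k$, and interprets the entries as expected pre-absorption visit counts (the fundamental matrix). This gives strictly more information: you obtain the entrywise meaning of $A_S^{-1}$, not just its row sums, and the nonnegativity of $A_S^{-1}$ falls out of the Neumann series rather than being imported from M-matrix theory. The paper's route is shorter and purely linear-algebraic; yours is more probabilistic and self-contained. Both are classical and either would be acceptable here.
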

Remark that this lemma is true for any state in $\mathcal{S}$.
\begin{proof}
$\prt{\E\, \tau_{i\to S}}_i$ is the unique vector solution to the system:
$$\begin{cases}
 v(S)=0 \\
 \forall i \neq S, \, v(i)=1+\sum_{j \in \mathcal{S}} P(i,j)v(j)
\end{cases}$$
We can rewrite this system of equations as: $\tilde{A}v=e-e_S$, where $\tilde{A}$ is the matrix $$\tilde{A}:=\begin{pmatrix}
 A_S &  b \\
 0 & 1\end{pmatrix},$$
 $e$ the unit vector and $e_S$ the vector with  value $1$ for the last state and $0$ otherwise.
 Then $\widetilde{A}$ and $A_S$ are invertible and we write:
 $$\tilde{A}^{-1}=
 \begin{pmatrix}
 A_S^{-1} &  -A_S^{-1}b \\
 0 & 1
 \end{pmatrix}.$$
 Thus, by computing $\tilde{A}^{-1}(e-e_S)$, for $i\neq S$, $\prt{\E\,\tau_{i\to S}}_i= A_S^{-1} e$. By definition of the infinite norm and using that  $A_S$ is an M-matrix and that its inverse has non-negative components, $\| A_S^{-1}\|_{\infty}= \sup_{i\in\mathcal{S}} \E \,\tau_{i\to S}$.
\end{proof}

In the following lemma, we use the same notations as in  Lemma~\ref{Lem:gain difference} with a common state space $\{ 0,1,\ldots S\}$.
\begin{Lem}
 \label{Lem:bias difference}
Let the biases $h$, $h'$ be the biases of the two MDPs that verify their respective Bellman equations with the renormalization choice $h(S)=h'(S)=0$, and respective policies $\pi,$ and $\pi'$. Let $\sup_{s \in \mathcal{S}} \E\, \tau^\pi_{s \to s'}$  be the  worst expected hitting time to reach the state $s'$ with policy $\pi$, and call $T_{hit}:=\inf_{s' \in \mathcal{S}} \sup_{s \in \mathcal{S}} \E\,\tau_{s \to s'}^\pi$.
 We have the following control of the difference:
$$
 \|h-h'\|_\infty \leq 2 T_{hit}^\pi D^{\pi'}r_{\max} \|P-P'\|_\infty.
 $$
\end{Lem}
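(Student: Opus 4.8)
Write $P$ (resp.\ $P'$) for the transition matrix of $\pi$ in $M$ (resp.\ $\pi'$ in $M'$), $g,g'$ for the corresponding gains, and recall $r=r'$. The starting point is the pair of Poisson (bias) equations $h = r - ge + Ph$ and $h' = r - g'e + P'h'$, with $e$ the all-ones vector. Subtracting and rearranging gives
\begin{equation*}
(I-P)(h-h') = (g'-g)e + (P-P')h' =: b ,
\end{equation*}
and I set $w := h-h'$, which satisfies $w(S)=0$ by the chosen normalization. The plan is then (i) to bound $\|b\|_\infty \le r_{\max}D^{\pi'}\|P-P'\|_\infty$, and (ii) to invert $I-P$ on the hyperplane $\{w(S)=0\}$ via Lemma~\ref{Lem:hitting time} at a carefully chosen reference state.

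For step (i) I bound the two pieces of $b$. By Lemma~\ref{Lem:gain difference}, applied with the roles of $M$ and $M'$ exchanged so as to produce $D^{\pi'}$, one gets $|g-g'| \le r_{\max}D^{\pi'}\|P-P'\|_\infty$; exploiting that every row of $P-P'$ has zero sum, the span of the cumulative rewards in that proof enters with a factor $\tfrac12$, improving this to $|g-g'| \le \tfrac12 r_{\max}D^{\pi'}\|P-P'\|_\infty$. The same zero-row-sum remark gives $\|(P-P')h'\|_\infty \le \tfrac12\|P-P'\|_\infty\,\mathrm{span}(h')$, and $\mathrm{span}(h') \le r_{\max}D^{\pi'}$ follows from Lemma~\ref{Lem:span} by letting the horizon tend to infinity in the Poisson representation of $h'$. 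Adding the two pieces yields $\|b\|_\infty \le r_{\max}D^{\pi'}\|P-P'\|_\infty$. (Alternatively, since the chain $P$ is unichain with stationary law $\nu$, applying $\nu^{\top}$ to $(I-P)w=b$ gives $\nu^{\top}b=0$, i.e.\ $g-g'=\nu^{\top}(P-P')h'$, so $b$ equals $(P-P')h'$ minus one of its own convex combinations and $\|b\|_\infty\le\mathrm{span}\bigl((P-P')h'\bigr)\le\|P-P'\|_\infty\,\mathrm{span}(h')$ directly.)

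For step (ii) I pick $s^{\ast}\in\argmin_{s'}\sup_{s}\E\,\tau^{\pi}_{s\to s'}$, so that $\sup_{s}\E\,\tau^{\pi}_{s\to s^{\ast}} = T_{hit}^{\pi}$. Since $e\in\ker(I-P)$, the shifted vector $\tilde w := w - w(s^{\ast})e$ also solves $(I-P)\tilde w = b$ and has $\tilde w(s^{\ast})=0$; restricting this system to the coordinates $\neq s^{\ast}$ and using the block decomposition of Lemma~\ref{Lem:hitting time} at reference state $s^{\ast}$ (valid for any state, by the remark following that lemma) yields $A_{s^{\ast}}\,\tilde w_{-s^{\ast}} = b_{-s^{\ast}}$, hence $\|\tilde w\|_\infty \le \|A_{s^{\ast}}^{-1}\|_{\infty}\,\|b\|_\infty = T_{hit}^{\pi}\,\|b\|_\infty$. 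Finally I move the normalization back from $s^{\ast}$ to $S$: since $\tilde w(S) = w(S)-w(s^{\ast}) = -w(s^{\ast})$, one has $|w(s^{\ast})|\le\|\tilde w\|_\infty$, so $\|w\|_\infty \le \|\tilde w\|_\infty + |w(s^{\ast})| \le 2\|\tilde w\|_\infty \le 2T_{hit}^{\pi}\|b\|_\infty$. Combining with step (i) gives $\|h-h'\|_\infty \le 2T_{hit}^{\pi}D^{\pi'}r_{\max}\|P-P'\|_\infty$.

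I expect the main obstacle to lie in step (ii): Lemma~\ref{Lem:hitting time} is tied to whatever state the block decomposition singles out, so one must deliberately choose that state to be the minimizer $s^{\ast}$ of $\sup_{s}\E\,\tau^{\pi}_{s\to s'}$ (to get $T_{hit}^{\pi}$, not $\sup_{s}\E\,\tau^{\pi}_{s\to S}$) and then absorb the mismatch with the normalization state $S$ into the factor $2$. The remaining care is simply not to pick up an extra additive $r_{\max}D^{\pi'}\|P-P'\|_\infty$ from the $(g'-g)e$ term, which is handled either by the factor $\tfrac12$ in the gain bound or by the consistency identity above.
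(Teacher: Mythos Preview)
Your proof is correct and follows the same skeleton as the paper's: subtract the two Poisson equations, bound the right-hand side using Lemma~\ref{Lem:gain difference} and the span bound, and invert via the submatrix $A_{s^\ast}$ of $I-P$ using Lemma~\ref{Lem:hitting time}.

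The one genuine difference is where the factor $2$ comes from. The paper bounds $\|dA_S\,h'\|_\infty+|dg|$ directly by $2r_{\max}D^{\pi'}\|P-P'\|_\infty$ (without your $\tfrac12$-refinements) and then simply ``takes the infimum over the choice of the state of renormalization'' to replace $\|A_S^{-1}\|_\infty$ by $T_{hit}^\pi$, relying on the remark preceding the proof that the normalization is irrelevant in the downstream application. You instead sharpen both the gain bound and the $(P-P')h'$ bound by the zero-row-sum factor $\tfrac12$, obtaining $\|b\|_\infty\le r_{\max}D^{\pi'}\|P-P'\|_\infty$, and then pay the factor $2$ explicitly when converting from the normalization at the hitting-time-optimal state $s^\ast$ back to the stated normalization at $S$. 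Your bookkeeping is tighter in the sense that it literally proves the inequality for the normalization $h(S)=h'(S)=0$ announced in the statement, whereas the paper's ``take the infimum'' step really yields the bound for the normalization at $s^\ast$ and appeals to the application to justify the discrepancy.
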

Notice that although the biases are unique up to a constant additive term, the renormalization choice does not matter as the unit vector is in the kernel of $(P-P')$.
\begin{proof}
The computations in this proof follow the same idea as in the proof of  \cite[Theorem 4.2]{Ipsen-1994}.
 The biases verify the following Bellman equations $r -g e = (I-P)h$, and also the arbitrary renormalization equations, thanks to the previous remark: $h(S)=0$. Using the same notations as in the proof of Lemma~\ref{Lem:hitting time}, we can write the system of equations $\tilde{A}h=\tilde{r}-\tilde{g}$, with
 $\tilde{r}$ and $\tilde{g}$ respectively equal to $r$ and $g$ everywhere but on the last state, where their value is replaced by $0$.

 We therefore have that $h=\tilde{A}^{-1}(\tilde{r}-\tilde{g})$, and with identical computations, $h'=\tilde{A'}^{-1}(\tilde{r'}-\tilde{g'}).$ By denoting $dX:=X-X'$ for any vector or matrix $X$, we get, as $r=r'$:
 $$dh=-\tilde{A}^{-1}(-d\tilde{g}+d\tilde{A}h').$$
 The previously  defined block  decompositions are:
 $$\tilde{A}^{-1}=
 \begin{pmatrix}
 A_S^{-1} &  -A_S^{-1} b \\
 0 & 1
 \end{pmatrix}%
 \quad\text{ and }\quad
 d\tilde{A}=
 \begin{pmatrix}
 A_S-A_S' &  b-b' \\
 0 & 0
 \end{pmatrix}.$$
 For $s<S$, $dh(s)=-e_s^T A_S^{-1}(dA_S h'-d\tilde{g})$ and $dh(S)=0$.
 Now by taking the norm and using \ref{Lem:span}:
 $$\|dh\|_\infty\leq \|A_S^{-1}\|_\infty(r_{\max}D^{\pi'}\|dA_S\|_\infty+|d\tilde{g}|).$$
 Notice that $\|dA_S\|_\infty\leq\|dP\|_\infty$ and $|d\tilde{g}|=|dg| $. Using Lemma~\ref{Lem:gain difference} and Lemma~\ref{Lem:hitting time}, and taking the infimum for the choice of the state of renormalization implies the claimed inequality for the biases.
\end{proof}

 \subsection{Visits of the furthest state}
 We also need the next lemmas to bound $R_{\rm diff}$ by controlling the number of visits of the state with the fewest visits. If we can guarantee that each state receives enough visits, then we will have a good approximation of the biases and transition probabilities. The proof can be found in \cite{anselmi-2022}.
 \begin{Lem}
\label{Lem:McDiarmid-like}
Let $\sm^{\pi^{\max}}$ be the stationary measure of the Markov chain under policy $\pi^{\max}$, such that for every state $s$: $\pi^{\max}(s)=1$, so that every job is admitted in the network until maximal capacity $S$ is reached.

 Let $k$ be an episode and assume that the length of this episode $I_k$ is at least $I(T)=1+\max\croc{Q_{\max},T^{1/4}}$, with $Q_{\max} := \left(\frac{10C_2 S'^2}{\sm^{\pi^{\max}}(S)}\right)^2\log \left(\left(\frac{10C_2 S'^2}{\sm^{\pi^{\max}}(S)}\right)^4\right)$, $C_2:=\frac{(\lambda \rcost + \hcost)C_1}{\mu(1)\prt{1-\lambda/\mu(i_0)}}$ and $C_1$ as in Lemma~\ref{lem:measure}. Then, with probability at least $1-\frac{1}{4T}$:
 \begin{equation*}
  V_k(x_k,a_k)\geq \sm^{\pi^{\max}}(S)I_k-5C_2 S'^2\sqrt{I_k\log I_k}.
 \end{equation*}
\end{Lem}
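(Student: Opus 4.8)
The plan is to prove a concentration estimate: first lower-bound $\E[V_k(x_k,a_k)]$ by (essentially) $\sm^{\pi^{\max}}(S)\,I_k$, and then show that $V_k(x_k,a_k)$ does not fall below its mean by more than $5C_2(S')^2\sqrt{I_k\log I_k}$ except on an event of probability $\le\frac1{4T}$. During episode $k$ the control is frozen at $\tilde\pi_k$, so the original‑network process $(\mathbf{x}_t)_{t_k\le t<t_{k+1}}$ is a finite ergodic chain with stationary law $\sm^{o,\tilde\pi_k}$ and $V_k(x_k,a_k)=\sum_t\mathds{1}_{\{|\mathbf{x}_t|=x_k\}}$ over the $I_k$ steps of the episode; here $x_k$ is understood as the least‑visited state in the recurrent class of $\tilde\pi_k$, which is the reading used afterwards and the only one making the bound non‑vacuous. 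Since the episode opens with a ramping phase of $\tm$ discarded steps, \eqref{eq:mixing} gives $\|\mathbb P(\mathbf{x}_{t_k+\tm}=\cdot)-\sm^{o,\tilde\pi_k}\|_{TV}\le C\mrate^{\tm}\le CT^{-5}$, hence
$$\E\big[V_k(x_k,a_k)\big]\ \ge\ (I_k-\tm)\,\sm^{\tilde\pi_k}(x_k)-CT^{-4},\qquad \sm^{\tilde\pi_k}(x_k):=\textstyle\sum_{|\mathbf{x}|=x_k}\sm^{o,\tilde\pi_k}(\mathbf{x}).$$

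The mean is then bounded below using the structure of the aggregated chain. By Lemma~\ref{lem:EqRates} it is a birth‑and‑death process with death rates $\mu(\cdot)$ and birth rate $\lambda$ on the admitted edges; writing $\sm^{\tilde\pi_k}$ in product form and comparing it termwise with $\sm^{\pi^{\max}}$ — this is where Lemma~\ref{lem:measure} and the constant $C_1$ come in — one checks that $\sm^{\tilde\pi_k}(s)\ge\sm^{\pi^{\max}}(S)$ for every $s$ in the recurrent class of $\tilde\pi_k$, the point being that $\sm^{\pi^{\max}}(S)$ is the smallest stationary mass any state can carry under any admissible policy. Thus $\E[V_k(x_k,a_k)]\ge\sm^{\pi^{\max}}(S)\,I_k-(\tm+1)$ for $T$ large.

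For the lower deviation I would treat $V_k(x_k,a_k)$ as an additive functional $f(\mathbf{x}_{t_k},\dots,\mathbf{x}_{t_{k+1}-1})$ and apply a bounded‑differences/martingale concentration inequality of McDiarmid–Azuma type (cf.\ Lemma~\ref{Lem:Azuma}): changing one coordinate moves the corresponding indicator by at most $1$ and, by the geometric mixing \eqref{eq:mixing}, perturbs the later visit probabilities by a damped amount, so — accounting for the propagation through the birth‑and‑death chain, whose return/hitting times are polynomial in $S'$ — the increments are bounded by a constant of order $C_2(S')^2$. This yields, with $\varepsilon=5C_2(S')^2\sqrt{I_k\log I_k}$,
$$\Pb\big(V_k(x_k,a_k)<\E[V_k(x_k,a_k)]-\varepsilon\big)\ \le\ \exp\!\Big(-\tfrac{c\,\varepsilon^2}{I_k\,\big(C_2(S')^2\big)^2}\Big)\ \le\ \tfrac1{4T},$$
the last inequality using $I_k\ge I(T)\ge T^{1/4}$ and the explicit value of $C_2$ — this is precisely the calibration that fixes the constants $Q_{\max}$, $C_2$ and $I(T)$ in the statement. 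On the complementary event, combining with the previous paragraph, $V_k(x_k,a_k)\ge\sm^{\pi^{\max}}(S)\,I_k-5C_2(S')^2\sqrt{I_k\log I_k}-(\tm+1)$, and since $\tm=O(\log T)$ while $\sqrt{I_k\log I_k}\gtrsim T^{1/8}$, the residual $\tm+1$ is absorbed into the $\sqrt{I_k\log I_k}$ term (slightly enlarging the constant if necessary), which is the claim.

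I expect the main obstacle to be the concentration step, and within it the bookkeeping that keeps the deviation constant at order $C_2(S')^2$ and, crucially, free of the diameter in the leading term: one must quantify how a single perturbation of the trajectory propagates to later visit counts and argue this is controlled by the polynomial‑in‑$S'$ return/hitting times of the birth‑and‑death chain (the roles of $\mu(1)$ and $1-\lambda/\mu(i_0)$ in $C_2$) rather than by the worst‑case mixing scale $1/(1-\mrate)$; any diameter‑dependent slack has to be relegated to a lower‑order term, consistent with its appearance only in $R_{\rm LO}$ in Theorem~\ref{th:main}. A secondary subtlety is making the stationary‑mass comparison $\sm^{\tilde\pi_k}(x_k)\ge\sm^{\pi^{\max}}(S)$ uniform over the optimistic policies $\tilde\pi_k$ returned by EVI, which forces the restriction of $x_k$ to the recurrent class of $\tilde\pi_k$ (a transient state contributes nothing to $R_{\rm diff}$ and is excluded from the minimisation defining $x_k$).
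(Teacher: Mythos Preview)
Your overall two-step structure (lower-bound the mean, then concentrate around it) is reasonable, but the paper takes a different and cleaner route, and your concentration step as written has a gap.

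The paper does not separate mean and deviation. It introduces an auxiliary reward $\mathring r(\mathbf x)=\mathds 1_{\{|\mathbf x|=x_k\}}$ on the original chain under $\tilde\pi_k$, so that $V_k(x_k,a_k)=\sum_{u}\mathring r(\mathbf x_u)$, and applies the Poisson/Bellman equation $\mathring r=\mathring g_{\tilde\pi_k}+(\mathbf I-P')\mathring h_{\tilde\pi_k}$ for this reward. This decomposes the visit count \emph{exactly} into $I_k\,\mathring g_{\tilde\pi_k}=I_k\,\sm_k(x_k)$ (the gain of the indicator reward is the stationary mass of the target state, so your separate mean computation and the attendant $\tm$ bookkeeping become unnecessary), a telescoping bias term bounded by $\mathrm{span}\,\mathring h_{\tilde\pi_k}$, and a genuine martingale difference sequence $X_u=\mathring h_{\tilde\pi_k}(\mathbf x_{u+1})-\langle p'(\cdot\mid \mathbf x_u),\mathring h_{\tilde\pi_k}\rangle$ with $|X_u|\le\mathrm{span}\,\mathring h_{\tilde\pi_k}$. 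Azuma--Hoeffding (Lemma~\ref{Lem:Azuma}) then applies directly, and the only remaining task is to bound $\mathrm{span}\,\mathring h_{\tilde\pi_k}$; this is done via the hitting-time/coupling argument of Proposition~\ref{pro:bias}, giving $\mathrm{span}\,\mathring h_{\tilde\pi_k}\le C_2(S')^2$. A final union bound over the possible episode lengths $I_k\in\{I(T)+1,\dots,T\}$ converts the per-length failure probability $I^{-5}$ into $1/(4T)$.

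Your ``bounded differences via mixing'' paragraph is reaching for exactly this but does not name the device. McDiarmid as such needs independence; for an additive functional of a Markov chain the canonical martingale increment bound \emph{is} the span of the Poisson-equation solution, and that span is precisely what encodes the polynomial-in-$S'$ hitting-time scale you allude to (hence the appearance of $\mu(1)$ and $1-\lambda/\mu(i_0)$ in $C_2$). Without making this identification, your claim that the effective Lipschitz constant is of order $C_2(S')^2$ rather than something depending on $1/(1-\mrate)$ or on the diameter is unsupported --- you correctly flag this as the main obstacle, and the Poisson decomposition is how it is resolved. The stationary-mass comparison $\sm_k(x_k)\ge\sm^{\pi^{\max}}(S)$ for $x_k$ recurrent under $\tilde\pi_k$ is used the same way in both arguments.
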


We will now prove Lemma~\ref{Lem:McDiarmid-like}:

\begin{proof}
Let $k$ be an episode such that $I_k\geq I(T)$, and first consider it is of fixed length $I$. Let $x_k\in \mathcal S$ be a recurrent state, $a_k=\tilde \pi_k(s_k)$. Denote by $\sm_k$ the stationary distribution under policy $\tilde \pi_k$. Notice that $\sm^{\pi^{\max}}(S)\leq \sm_k(x_k)$ for $S$ large enough.

Define a new Markov reward process: consider again the original state space $\mathcal S'$ and the transitions $p'$ with policy $\tilde \pi_k$, but the rewards $\mathring{r}$, where $\mathring{r}(s')=1$ for states $s'$ such that $|s'|=x_k$ and $0$ otherwise. Denote by $\mathring{g}_{\tilde \pi_k}$ the gain associated to the policy $\tilde \pi_k$ and similarly define $\mathbf{\mathring{h}}_{\tilde \pi_k}$ the bias, translated so that $\mathring{h}_{\tilde \pi_k}(S)=0$.
Then:
 \begin{align*}
V_k(x_k,a_k)&=\sum_{u=t_k}^{t_{k+1}-1} \mathring{r}(s'_u)\\
&=\sum_{u=t_k}^{t_{k+1}-1} \mathring{g}_{\tilde \pi_k}+\mathring{h}_{\tilde \pi_k}(s'_u)-\left\langle p'\prt{\cdot| s'_u,\tilde \pi_k(s'_u)},\mathring{h}_{\tilde \pi_k}\right\rangle \text{ using a Bellman equation}\\
&=\sum_{u=t_k}^{t_{k+1}-1} \mathring{g}_{\tilde \pi_k}+\mathring{h}_{\tilde \pi_k}(s'_u)-\mathring{h}_{\tilde \pi_k}(s'_{u+1})+\mathring{h}_{\tilde\pi_k}(s'_{u+1})-\left\langle p'\prt{\cdot| s'_u,\tilde \pi_k(s'_u)},\mathring{h}_{\tilde \pi_k}\right\rangle.
 \end{align*}

By Azuma-Hoeffding inequality \ref{Lem:Azuma}, following the same proof as in section 4.3.2 of \cite{jaksch-2010}, notice that $X_u=\mathring{h}_{\tilde \pi_k}(s'_{u+1})-\left\langle p'\prt{\cdot \mid s'_u,\tilde \pi_k(s'_u)},\mathring{h}_{\tilde \pi_k}\right\rangle$ form a martingale difference sequence with the bound $|X_u|\leq \mbox{span}\,\mathring{h}_{\tilde \pi_k} $:
$$
\Pb\croc{\sum_{u=t_k}^{t_{k+1}-1} X_u \geq C_2 S'^2\sqrt{10I\log I}} \leq \frac{1}{I^5}.
$$
With Proposition~\ref{pro:bias} proved in Appendix~\ref{sec:aggregated}, we have $\mbox{span}\,\mathring{h}_{\tilde \pi_k}\leq C_2 S'^2$ with $C_2=\frac{(\lambda \rcost + \hcost)C_1}{\mu(1)\prt{1-\lambda/\mu(i_0)}}$, so that
with probability at least $1-\frac{1}{I^2}$:
\begin{equation*}
 V_k(x_k,a_k)\geq \sum_{u=t_k}^{t_{k+1}-1} \mathring{g}_{\tilde \pi_k} - 5C_2 S'^2\sqrt{I\log I}.
\end{equation*}
On the other hand:
\begin{equation*}
 \sum_{u=t_k}^{t_{k+1}-1} \mathring{g}_{\tilde \pi_k} = V_k(s_k,a_k) \sm_k(x_k),
\end{equation*}
so that, using that $\sm_k(x_k) \geq \sm^{\pi^{\max}}(S)$, with probability at least $1-\frac{1}{I^5}$:
\begin{equation*}
 V_k(x_k,a_k)\geq \sm^{\pi^{\max}}(S)I - 5C_2 S'^2\sqrt{I\log I}.
\end{equation*}
We now use a union bound over the possible values of the episode lengths $I_k$, between $I(T)+1$ and $T$:
\begin{align*}
 \Pb\croc{V_k(x_k,a_k)< \sm^{\pi^{\max}}(S)I_k - 5C_2 S'^2\sqrt{I_k\log I_k}}&\leq \sum_{I=I(T)+1}^T \frac{1}{I^5}\leq \sum_{I=T^{1/4}+1}^T \frac{1}{I^5} \\
 &\leq \frac{1}{4T},
\end{align*}
so that we now have that with probability at least $1-\frac{1}{4T}$:
\begin{equation*}
 V_k(x_k,a_k)\geq \sm^{\pi^{\max}}(S)I_k - 5C_2 S'^2\sqrt{I_k\log I_k}.
\end{equation*}
\end{proof}

We can show a corollary of Lemma~\ref{Lem:McDiarmid-like} that we will use for the regret computations:
\begin{Cor}
 \label{Cor:worst count}
For an episode $k$ such that its length $I_k$ is greater than $I(T)$,with probability at least $1-\frac{1}{4T}$:
 \begin{equation*}
    V_k(x_k,a_k)\geq \frac{\sm^{\pi^{\max}}(S)}{2}I_k.
 \end{equation*}
\end{Cor}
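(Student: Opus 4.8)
The plan is to read the corollary off Lemma~\ref{Lem:McDiarmid-like} by showing that, under the hypothesis $I_k>I(T)$, the fluctuation term $5C_2(S')^2\sqrt{I_k\log I_k}$ is at most half of the leading term $\sm^{\pi^{\max}}(S)I_k$. First I would apply Lemma~\ref{Lem:McDiarmid-like}: since $I_k>I(T)=1+\max\{Q_{\max},T^{1/4}\}$, on an event of probability at least $1-\frac{1}{4T}$ we have
\[
V_k(x_k,a_k)\;\ge\;\sm^{\pi^{\max}}(S)\,I_k-5C_2(S')^2\sqrt{I_k\log I_k}.
\]
It then suffices to establish the deterministic inequality $5C_2(S')^2\sqrt{I_k\log I_k}\le\tfrac12\sm^{\pi^{\max}}(S)\,I_k$ whenever $I_k\ge Q_{\max}$, which applies here because $I_k>I(T)$ forces $I_k\ge Q_{\max}$; plugging it into the display yields $V_k(x_k,a_k)\ge\tfrac12\sm^{\pi^{\max}}(S)\,I_k$ on the same event.

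Setting $a:=\frac{10C_2(S')^2}{\sm^{\pi^{\max}}(S)}$, the deterministic inequality is, after dividing by $\tfrac12\sm^{\pi^{\max}}(S)\sqrt{I_k}$ and squaring, equivalent to $a^2\log I_k\le I_k$. By construction $Q_{\max}=a^2\log(a^4)=4a^2\log a$, so this amounts to showing that $x\mapsto x-a^2\log x$ is nonnegative for $x\ge 4a^2\log a$. I would verify it via the monotonicity of $x\mapsto x/\log x$, whose minimum over $(1,\infty)$ equals $e$ (at $x=e$) and which is increasing afterwards: at $x=Q_{\max}$ the required bound $\frac{4a^2\log a}{\log(4a^2\log a)}\ge a^2$ reduces to $2\log a\ge\log 4+\log\log a$, i.e.\ to $a^2\ge 4\log a$, which holds for every $a>0$ since $a\mapsto a^2-4\log a$ attains its minimum $2-2\log 2>0$ at $a=\sqrt 2$; and the harmless regime $Q_{\max}<e$ (which forces $a^2\le e$) is dispatched at once from $x/\log x\ge e\ge a^2$.

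The only step that is not pure bookkeeping is this elementary estimate $a^2\log I_k\le I_k$ for $I_k\ge Q_{\max}$: the definition of $Q_{\max}$ is chosen precisely so that the logarithmic corrections $\log 4$ and $\log\log a$ get absorbed, and, as throughout the paper, one keeps in mind that $C_2$, $S'$ and $1/\sm^{\pi^{\max}}(S)$ place $a$ in the regime where this closes.
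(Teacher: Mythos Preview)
Your proposal is correct and follows essentially the same route as the paper: apply Lemma~\ref{Lem:McDiarmid-like}, reduce to the deterministic inequality $a^2\log I_k\le I_k$ with $a=\frac{10C_2(S')^2}{\sm^{\pi^{\max}}(S)}$, use the monotonicity of $x/\log x$ to check it at $I_k=Q_{\max}=4a^2\log a$, and finish with $a^2\ge 4\log a$ (the paper writes this last step as $\log x\ge\log(2\log x)$ with $x=a^2$). You are slightly more careful than the paper in explicitly handling the regime where $Q_{\max}$ might fall below $e$ and the monotonicity of $x/\log x$ is not yet available.
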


\begin{proof}
 With Lemma~\ref{Lem:McDiarmid-like}, it is enough to show that $5C_2 S'^2\sqrt{I_k\log I_k} \leq \frac{\sm^{\pi^{\max}}(S)}{2}I_k$, \emph{i.e.} that $ \sqrt{\frac{I_k}{\log I_k}} \geq\frac{10C_2 S'^2}{\sm^{\pi^{\max}}(S)}=:B$. By monotonicity, as $I_k\geq Q_{\max}={B}^2\log {B}^4$ we can show instead that ${B}^2 \log {B}^4 \geq {B}^2 \log \prt{ {B}^2\log {B}^4}$.

 This last inequality is true, using that $\log x \geq \log(2 \log x)$ for $x > 1$. This proves the corollary.
\end{proof}

\section{Properties of the aggregated MDP} \label{sec:aggregated}

In this section, we prove properties on the aggregated MDP that are needed to control the average number of visits of the states of the MDP under any policy. We also prove a bound on the bias of the true MDP under any policy, which is eventually needed to control the main term in subsections \ref{ssec:Rtrans} and \ref{ssec:main}.

\subsection{Properties of the policies in the aggregated MDP}
 We may only consider policies that are threshold policies, as we are mainly interested in the average reward scored by these policies, so that we consider that the policies chosen by EVI are threshold policies.
 We remind that the aggregated MDP is stable (as seen in Section \ref{sec:network}), so that there exists a $i_0$ large enough for which $i\geq i_0$, $\mu(i)\geq\mu(i_0)>\lambda$.

 With the following lemma, we compute the stationary measures $\sm^{\pi}$ and give a comparison between any $\sm^\pi$ with the stationary measure $\sm^{\pi^{\max}}$ of the maximal policy $\pi^{\max}$, that admits every job into the queue, by relating these Markov chains to the $M/M/1/S$ queue with rates $\lambda$ and $\mu(i_0)$.
\begin{Lem}
\label{lem:measure}
Denote by $\bar s$ the last recurrent state of the MDP for policy $\pi$, so that $\pi(s)=0$ for $s\geq \bar s$. Define the constant
$C_1:=\prod_{i=1}^{i_0-1}\frac{\mu(i_0)}{\mu(i)}\geq1$, independent of $S$.

 We have the following inequalities
 \begin{itemize}
  \item On the stationary measure of the maximal policy:
  \begin{equation*}
         \sm^{\pi^{\max}}(0)^{-1}:=\sum_{s'= 0}^{S}\prod_{i=1}^{s'}\frac{\lambda}{\mu(i)}\leq \frac{C_1  }{1-\frac{\lambda}{\mu(i_0)}},
        \end{equation*}
\item On the stationary measure of any policy:
\begin{equation*}
  \sm^{\pi}(0)^{-1}:=\sum_{s'=0}^{\bar s}\prod_{i=1}^{s'}\frac{\lambda}{\mu(i)}\leq  \sm^{\pi^{\max}}(0)^{-1} \leq \frac{C_1  }{1-\frac{\lambda}{\mu(i_0)}},
 \end{equation*}
\item Also we can compute for $s\leq S$:
 \begin{equation*}
  \sm^{\pi^{\max}}(s):= \sm^{\pi^{\max}}(0)\prod_{i=1}^{s}\frac{\lambda}{\mu(i)}= \sm^{\pi^{\max}}(0)C_1\prt{\frac{\lambda}{\mu(i_0)}}^{s}.
 \end{equation*}
 \end{itemize}
\end{Lem}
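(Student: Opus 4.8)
The plan is to exploit the birth-and-death structure of the aggregated MDP under any threshold policy and to reduce everything to a single term-wise estimate on the products $\prod_{i=1}^{s}\lambda/\mu(i)$.

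First I would recall that, under a policy $\pi$ with last recurrent state $\bar s$ (so $\bar s=S$ for $\pi^{\max}$), the chain $P^{\pi}$ restricted to its recurrent class $\{0,\dots,\bar s\}$ is a birth-and-death chain with up-rate $\lambda$ from state $i$ (for $i<\bar s$) and down-rate $\mu(i)$ from state $i$. Reversibility of birth-and-death chains then gives the detailed-balance identity $\sm^{\pi}(s)=\sm^{\pi}(0)\prod_{i=1}^{s}\frac{\lambda}{\mu(i)}$ for $s\le\bar s$, and imposing $\sum_{s=0}^{\bar s}\sm^{\pi}(s)=1$ yields the stated expression $\sm^{\pi}(0)^{-1}=\sum_{s=0}^{\bar s}\prod_{i=1}^{s}\frac{\lambda}{\mu(i)}$. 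This establishes the equalities in all three bullets, so it remains to prove the two upper bounds.

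The core step is the term-wise inequality
\[
\prod_{i=1}^{s}\frac{\lambda}{\mu(i)}\ \le\ C_1\prt{\frac{\lambda}{\mu(i_0)}}^{s}\qquad\text{for every }s\ge 0,
\]
which is equivalent to $\prod_{i=1}^{s}\frac{\mu(i_0)}{\mu(i)}\le C_1=\prod_{i=1}^{i_0-1}\frac{\mu(i_0)}{\mu(i)}$. Since $s\mapsto\mu(s)$ is nondecreasing (Lemma~\ref{lem:EqRates}), each factor $\mu(i_0)/\mu(i)$ is $\ge 1$ for $i\le i_0$ and $\le 1$ for $i\ge i_0$ (and in particular $C_1\ge 1$). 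Hence, whether $s\ge i_0-1$ (split the product at $i_0$: the first $i_0-1$ factors give exactly $C_1$ and the remaining factors, indices $i_0,\dots,s$, are each $\le 1$) or $s<i_0-1$ (the product is a sub-product of the $i_0-1$ factors defining $C_1$, each $\ge 1$), the product is at most $C_1$. Summing the term-wise bound over $s=0,\dots,S$ and using $\lambda<\mu(i_0)$ — which holds by the choice of $i_0$, possible because the flow-equivalent queue is stable irrespective of $S$ — gives $\sm^{\pi^{\max}}(0)^{-1}\le C_1\sum_{s\ge 0}(\lambda/\mu(i_0))^{s}=C_1/(1-\lambda/\mu(i_0))$, which is the first bullet; the third bullet is the detailed-balance formula of the previous paragraph combined with this same term-wise bound. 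For the second bullet one only needs $\bar s\le S$ together with the nonnegativity of the summands, so that $\sum_{s=0}^{\bar s}\prod_{i=1}^{s}\frac{\lambda}{\mu(i)}\le\sum_{s=0}^{S}\prod_{i=1}^{s}\frac{\lambda}{\mu(i)}=\sm^{\pi^{\max}}(0)^{-1}$, after which one invokes the first bullet.

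This lemma is essentially computational; the only points requiring a little care are the case distinction between $s\ge i_0-1$ and $s<i_0-1$ in the term-wise bound and, on the probabilistic side, applying reversibility on the recurrent class $\{0,\dots,\bar s\}$ only, so that the normalisation runs up to $\bar s$ rather than $S$. I do not expect any genuine obstacle.
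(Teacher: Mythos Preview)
The paper states Lemma~\ref{lem:measure} without proof, so there is nothing to compare against; your argument is correct and is exactly the standard birth-and-death computation one would expect. One small remark: the second ``$=$'' in the third bullet of the statement is a typo in the paper and should read ``$\le$'' (equality would force $\mu(i)=\mu(i_0)$ for all $i\ge i_0$); you implicitly and correctly prove the inequality, which is also how the bound is actually used later in the main-term computation.
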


 We now remind a definition of the bias for any policy $\pi$ and control its variations, as they play a major role in the computations of the main term of the regret (see \ref{ssec:Rtrans}).
 \begin{Def}[Bias]
  Let $\pi$ be a policy, $P$ the transition matrix and $\sm^{\pi}$ the stationary measure of the Markov chain under policy $\pi$. The bias $\mathbf h^{\pi}$ of this policy is defined as:
 \begin{equation}
   h^{\pi}(s)=\sum_{t=1}^{\infty} (P^t(s,\cdot)-\sm^\pi)\mathbf{r}.
 \end{equation}
 \end{Def}
In order to control the variation of the bias of any policy, we will relate the bias to the expected hitting time to hit the state $0$ from state $s$, so that we first need to compute the hitting times:

\begin{Lem}
\label{Lem:hit0}
 Let $\pi$ be any policy, $(X_t)_t$ be the Markov chain with policy $\pi$ and transitions $P$ starting from any state $s$. Denote by $\tau_{s}$ the random time needed for $X_t$ to hit $0$. Then:
 \begin{equation*}
   \mathbb{E}\tau_{s}\leq \sm^\pi(0)^{-1} \sum_{i=1}^s \frac{U}{\mu(i)}
 \end{equation*}
\end{Lem}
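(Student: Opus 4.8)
The plan is to use that, under any (threshold) policy $\pi$, the aggregated chain is a uniformized birth-and-death chain on $\{0,\dots,S\}$: from a state $s$ it moves down to $s-1$ with probability $\mu(s)/U$, up to $s+1$ with probability $\frac{\lambda}{U}\mathds{1}_{\{\pi(s)=1\}}$, and stays put otherwise. Because such a chain is skip-free in both directions, any path from $s$ to $0$ must pass through $s-1,s-2,\dots,1$ in order; hence, writing $g(s)$ for the expected number of steps for the chain started in state $s$ to reach $s-1$, the strong Markov property gives $\mathbb{E}\tau_s=\sum_{i=1}^{s}g(i)$. So it suffices to prove the per-level bound $g(i)\le \frac{U}{\mu(i)}\,\sm^{\pi}(0)^{-1}$ for $1\le i\le s$.

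First I would obtain a recursion for $g$ by conditioning on the first transition out of state $s$: this yields $\frac{\mu(s)}{U}g(s)=1+\frac{\lambda}{U}\mathds{1}_{\{\pi(s)=1\}}g(s+1)$, i.e. $g(s)=\frac{U}{\mu(s)}+\frac{\lambda}{\mu(s)}\mathds{1}_{\{\pi(s)=1\}}g(s+1)$. Let $\bar s$ be the last recurrent state, so that $\pi(s)=1$ for $s<\bar s$ and $\pi(s)=0$ for $s\ge\bar s$ (as in Lemma~\ref{lem:measure}); then $g(\bar s)=U/\mu(\bar s)$ and $g(s)=U/\mu(s)$ for $s>\bar s$, and unrolling the recursion downwards from $\bar s$ gives the closed form
$$g(s)=\frac{U}{\mu(s)}\sum_{k=0}^{\bar s-s}\;\prod_{j=1}^{k}\frac{\lambda}{\mu(s+j)}\qquad (1\le s\le \bar s),$$
with the empty product ($k=0$) equal to $1$.

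The crucial step is then to bound the inner sum by $\sm^{\pi}(0)^{-1}=\sum_{k=0}^{\bar s}\prod_{i=1}^{k}\frac{\lambda}{\mu(i)}$ (Lemma~\ref{lem:measure}). Here I would invoke the monotonicity of the throughput function from Lemma~\ref{lem:EqRates}: since $\mu$ is nondecreasing and $s\ge 1$, we have $\mu(s+j)\ge\mu(j)$ for every $j\ge 1$, hence $\prod_{j=1}^{k}\frac{\lambda}{\mu(s+j)}\le\prod_{j=1}^{k}\frac{\lambda}{\mu(j)}$ term by term. Summing these inequalities over $0\le k\le\bar s-s$ (the right-hand sides being exactly the leading terms of $\sm^{\pi}(0)^{-1}$, all remaining terms being nonnegative) gives $\sum_{k=0}^{\bar s-s}\prod_{j=1}^k\frac{\lambda}{\mu(s+j)}\le\sm^{\pi}(0)^{-1}$, and therefore $g(s)\le\frac{U}{\mu(s)}\,\sm^{\pi}(0)^{-1}$; for $s>\bar s$ this is immediate since $g(s)=U/\mu(s)$ and $\sm^{\pi}(0)^{-1}\ge 1$. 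Summing over $i=1,\dots,s$ yields $\mathbb{E}\tau_s=\sum_{i=1}^s g(i)\le\sm^{\pi}(0)^{-1}\sum_{i=1}^s\frac{U}{\mu(i)}$, which is the claim.

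The only genuinely non-routine point is this last comparison. A naive backwards induction with hypothesis $g(s)\le\frac{U}{\mu(s)}\sm^{\pi}(0)^{-1}$ does not close — it would require $\mu(s+1)>\lambda$, which can fail for small indices — so one must keep the full expansion $\sum_k\prod_{j}\frac{\lambda}{\mu(s+j)}$, shift the product index before discarding terms, and use that $\mu$ is increasing to dominate it term by term by the tail of $\sm^{\pi}(0)^{-1}$. Everything else (the skip-free decomposition, the one-step recursion and its solution) is routine bookkeeping.
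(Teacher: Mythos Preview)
Your proof is correct and essentially the same as the paper's: your one-level descent time $g(s)$ is exactly the paper's increment $\mathbb{E}\tau_s-\mathbb{E}\tau_{s-1}$, and both arrive at the identical closed form $\frac{U}{\mu(s)}\sum_{s'=s}^{\bar s}\prod_{i=s+1}^{s'}\frac{\lambda}{\mu(i)}$ before bounding it by $\frac{U}{\mu(s)}\,\sm^\pi(0)^{-1}$. The paper obtains the increment by direct backward induction on the first-step hitting-time equations and states the final inequality without comment, whereas you organize the argument via the skip-free decomposition and spell out explicitly that the comparison relies on the monotonicity of $\mu$ from Lemma~\ref{lem:EqRates}.
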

\begin{proof}
 We write the expected hitting time equations, and use induction. Let $\tau_i$ be the hitting time to $0$ starting from state $i$, and $\mathbf e$ be the unit vector. We have the system:

\begin{equation}
 \E \boldsymbol{\tau}= \mathbf{e} + P\; \E\boldsymbol{\tau},
\end{equation}
with the extra equation $\tau_0=0$. The system gives for $s \geq \bar s$:
\begin{equation*}
 \mathbb{E}\tau_{s} = 1 + \mathbb{E}\tau_{s}\frac{1-\mu(s)}{U}+\mathbb{E}\tau_{s-1}\frac{\mu(s)}{U},
\end{equation*}
so that:
\begin{equation*}
 \mathbb{E}\tau_s = \frac{U}{\mu(s)}+\mathbb{E}\tau_{s-1}
\end{equation*}
Then, by induction, we want to prove the equation for $s<\bar s$:
\vspace{-0.25cm}
\begin{equation}
 \mathbb{E}\tau_{s}=\mathbb{E}\tau_{s-1}+\frac{U}{\mu(s)}\sum_{s'= s}^{\bar s}\prod_{i=s+1}^{s'}\frac{\lambda}{\mu(i)}
 \label{eq:hit}
 \vspace{-0.2cm}
\end{equation}
For $s<\bar s$, assume \eqref{eq:hit} is true for $\mathbb{E}\tau_{s+1}$:
\begin{align*}
 \mathbb{E}\tau_{s}&= 1+ \mathbb{E}\tau_{s+1}\frac{\lambda}{U}+\mathbb{E}\tau_{s}\frac{1-\mu(s)-\lambda}{U}+\mathbb{E}\tau_{s-1}\frac{\mu(s)}{U}\\
 &= 1+\mathbb{E}\tau_{s}\frac{1-\mu(s)-\lambda}{U}+\mathbb{E}\tau_{s-1}\frac{\mu(s)}{U}+\mathbb{E}\tau_s\frac{\lambda}{U}+\sum_{s'= s+1}^{\bar s}\prod_{i=s+1}^{s'}\frac{\lambda}{\mu(i)}\\
 &=\frac{U}{\mu(s)}+\mathbb{E}\tau_{s-1}+\frac{U}{\mu(s)}\sum_{s'= s+1}^{\bar s}\prod_{i=s+1}^{s'}\frac{\lambda}{\mu(i)} \text{ by gathering the $\tau_s$ terms }\\
 &=\mathbb{E}\tau_{s-1}+\frac{U}{\mu(s)}\sum_{s'= s}^{\bar s}\prod_{i=s+1}^{s'}\frac{\lambda}{\mu(i)},
\end{align*}
the induction is therefore true, and we have: $\mathbb{E}\tau_s\leq \mathbb{E}\tau_{s-1}+\frac{U}{\mu(s)}\sm^{\pi}(0)^{-1}$.
\end{proof}

\begin{Pro}
 \label{pro:bias}
 For any policy $\pi$, define for $s \in \croc{1,\ldots, S}$ the variation of the bias
 \begin{equation*}
 \Delta^\pi(s):=h^{\pi}(s)-h^{\pi}(s-1)=\sum_{t=1}^{\infty} \prt{P^t(s,\cdot)-P^t(s-1,\cdot)}\mathbf{r}.
 \end{equation*}
 Remind that $\delta_{\max}:=\max_{s,a,a'} |r(s,a)-r(s-1,a')|=\frac{\lambda \rcost+\hcost}{U}$:
 $$\Delta^\pi(s)\leq\Delta(s):= 2\delta_{\max} \sm^{\pi^{\max}}(0)^{-1} \sum_{i=1}^s \frac{U}{\mu(i)}
 .$$
 Using the monotonicity of the rates $\mu$ from Lemma~\ref{lem:EqRates}, we therefore have :
 $$\Delta(s)\leq 2(\lambda \rcost + \hcost) \sm^{\pi^{\max}}(0)^{-1} s \frac{1}{\mu(1)}$$
\end{Pro}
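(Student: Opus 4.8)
The plan is to represent the bias increment probabilistically and then reduce it to the hitting-time estimate of Lemma~\ref{Lem:hit0} through a monotone coupling. By the definition of the bias, $\Delta^\pi(s)=\sum_{t\ge 1}\bigl(P^t(s,\cdot)-P^t(s-1,\cdot)\bigr)\mathbf r = \sum_{t\ge 1}\bigl(\E[r(X_t,\pi(X_t))]-\E[r(Y_t,\pi(Y_t))]\bigr)$, where $(X_t)$ and $(Y_t)$ are two copies of the birth-and-death chain under $\pi$ started at $s$ and $s-1$. Geometric ergodicity of this finite chain makes the series absolutely convergent, which legitimizes the rearrangements below.

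First I would build the monotone coupling of $(X_t)$ and $(Y_t)$: at each uniformized step a single uniform variable decides ``up / down / stay'' for both chains, which, using that $\mu(\cdot)$ is nondecreasing (Lemma~\ref{lem:EqRates}) to align the ``down'' events, and with separate bookkeeping at the admission threshold $\bar s$ and at the capacity $S$ where the ``up'' rate changes, keeps $0\le Y_t\le X_t$ and $X_t-Y_t\in\{0,1\}$ for all $t$. Let $\tau:=\min\{t:X_t=Y_t\}$ be the coupling time. On the event $\{X_t\ne Y_t\}$ one has $X_t=Y_t+1$, and the reward formula~\eqref{eq:reward} gives $|r(Y_t+1,a')-r(Y_t,a)|\le\delta_{\max}$ for all actions $a,a'$ (the holding-cost term contributes only $\hcost/U$ between adjacent states). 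Hence $|\Delta^\pi(s)|\le\delta_{\max}\sum_{t\ge 1}\Pb[\tau>t]\le 2\delta_{\max}\,\E[\tau]$, where the factor $2$ absorbs slack in the coupling bound. Since $Y_t\le X_t$, the two chains are already coupled as soon as $X$ reaches $0$, so $\tau\le\tau_s$ (the hitting time of $0$ from $s$) and $\E[\tau]\le\E\tau_s$. Plugging Lemma~\ref{Lem:hit0}, $\E\tau_s\le\sm^\pi(0)^{-1}\sum_{i=1}^s\frac{U}{\mu(i)}$, and then $\sm^\pi(0)^{-1}\le\sm^{\pi^{\max}}(0)^{-1}$ from Lemma~\ref{lem:measure}, yields $\Delta^\pi(s)\le 2\delta_{\max}\,\sm^{\pi^{\max}}(0)^{-1}\sum_{i=1}^s\frac{U}{\mu(i)}=\Delta(s)$. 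The second displayed inequality is then immediate: $\delta_{\max}=\frac{\lambda\rcost+\hcost}{U}$ together with $\mu$ nondecreasing gives $\sum_{i=1}^s\frac{U}{\mu(i)}\le\frac{sU}{\mu(1)}$, hence $\Delta(s)\le 2(\lambda\rcost+\hcost)\sm^{\pi^{\max}}(0)^{-1}\frac{s}{\mu(1)}$.

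The main obstacle I expect is the coupling construction and its verification rather than any of the estimates: one has to check that the distance-one monotone coupling is preserved across the interior accept states, the reject states above $\bar s$, and the capacity boundary $S$, despite the state-dependent death rates $\mu(s)$; one also has to observe that transient starting states $s>\bar s$ are covered, since Lemma~\ref{Lem:hit0} still applies there (its bound uses $\sm^\pi(0)^{-1}\ge 1$). The remaining points --- absolute convergence of the bias series, the adjacent-state reward gap $\delta_{\max}$, and the reduction $\E[\tau]\le\E\tau_s$ --- are routine.
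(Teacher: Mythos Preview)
Your proposal follows essentially the same route as the paper: construct a monotone (distance-one) coupling of the birth-and-death chain started at $s$ and $s-1$, bound $\Delta^\pi(s)$ by $2\delta_{\max}$ times the expected coupling time, dominate the coupling time by the hitting time $\tau_s$ of $0$, and then invoke Lemma~\ref{Lem:hit0} and Lemma~\ref{lem:measure}. Your treatment is, if anything, slightly more explicit than the paper's about where the distance-one property and the boundary cases $\bar s$ and $S$ enter, but the argument is the same.
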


\begin{proof}
 We will use an optimal coupling, that is, a coupling such that the following infimum is reached, as defined in \cite{Peres-2008}.
\begin{equation}
 \left\|P^t(s,\cdot)\hspace{-0.05cm}-\hspace{-0.05cm}P^t(s\hspace{-0.05cm}-\hspace{-0.05cm}1,\cdot) \right\|_{\rm TV}=\hspace{-0.05cm}\inf\{\mathbb{P}(X_t\neq Y_t)\hspace{-0.05cm}:\hspace{-0.05cm}(X_t,Y_t) \text{ couples $P^t(s,\cdot)$ and $P^t(s\hspace{-0.05cm}-\hspace{-0.05cm}1,\cdot)$}\}.
 \label{eq:coupling1}
\end{equation}
More precisely, let $X$ and $Y$ be Markov chains with transition matrix $P$ and starting states $X_1=s$, $Y_1=s-1$, coupled in the following way:
For each time-step $t\geq2$, let $U_t \sim \mathcal{U}([0,1])$ be a sequence of independent random variables sampled uniformly on $[0,1]$. We have:
\begin{equation}
X_{t+1}=
\begin{cases}
 X_t-1 & \text{if } U_t \leq \mu(X_t)\\
 X_t & \text{if } \mu(X_t)\leq U_t \leq 1-\lambda\\
 X_t+1 &\text{if } 1-\lambda \leq U_t
\end{cases}
\end{equation}
and define $Y_{t+1}$ the same way from $Y_t$. This coupling is optimal, but in particular we have:
$$(P^t(s,\cdot)-P^t(s-1,\cdot))\mathbf{r} \leq 2\mathbb{P}(X_t \neq Y_t)\delta_{\max}. $$
 We remind that $\tau_{s}$ is the random time needed for the Markov chain $X_t$ to hit $0$. The coupling time is lower than $\tau_{s}$:
 \begin{equation*}
  \mathbb{P}(X_t \neq Y_t) \leq \Pb\prt{\tau_{s\to 0} >t},
 \end{equation*}
so that summing over $t$ gives:
\begin{equation*}
 \Delta^\pi(s) \leq 2\delta_{\max} \mathbb{E}\tau_{s},
\end{equation*}
and using Lemma~\ref{Lem:hit0} and Lemma~\ref{lem:measure}:
\begin{equation*}
 \Delta^\pi(s)\leq 2\delta_{\max} \sm^{\pi^{\max}}(0)^{-1} \sum_{i=1}^s \frac{U}{\mu(i)}.
\end{equation*}
\end{proof}

\end{appendices}



\end{document}